\def \y {\mathbf{y}}
\def \x {\mathbf{x}}
\def \g {\mathbf{g}}
\def \D {\mathcal{D}}
\def \z {\mathbf{z}}
\def \u {\mathbf{u}}
\def \r {\mathbf{r}}
\def \w {\mathbf{w}}
\def \s {\mathbf{s}}
\def \R {\mathbb{R}}
\def \Z {\mathcal{Z}}
\def \Q {\mathcal{Q}}
\def \W {\mathcal{W}}
\def \q {\mathbf{q}}
\def \v {\mathbf{v}}
\def \p {\mathbf{p}}
\def \q {\mathbf{q}}
\def \vv {\mathbf{v}}
\def \B {\mathbf{B}}
\def \B {\mathcal{B}}
\def \S {\mathcal{S}}
\newcounter{protocol}
\newenvironment{protocol}[1][htb]{%
  \let\c@algorithm\c@protocol
  \renewcommand{\ALG@name}{Protocol}
  \begin{algorithm}[#1]%
  }{\end{algorithm}
}
\newtheorem{Proposition}{Proposition}
\newtheorem{ass}{Assumption}
\newtheorem{lemma}{Lemma}
\DeclareMathOperator*{\poly}{poly}
\DeclareMathOperator*{\argmin}{argmin}
\DeclareMathOperator*{\argmax}{argmax}
\begin{document}

\title{On Accelerated Perceptrons and Beyond\thanks{An earlier version of this paper titled ``Linear Separation via Optimism'' is posted on: https://arxiv.org/abs/2011.08797.}}
\author{\name Guanghui Wang \email gwang369@gatech.edu\\
       \addr College of Computing, Georgia Tech, Atlanta, GA, USA\\
       \name Rafael Hanashiro \email rafah@mit.edu\\
       \addr Department of Electrical Engineering and Computer Science, MIT, Cambridge, MA, USA  \\
       \name Etash Guha \email etash@gatech.edu\\
       \addr  College of Computing, Georgia Tech, Atlanta, GA, USA\\
       \name Jacob Abernethy \email prof@gatech.edu\\
       \addr  College of Computing, Georgia Tech, Atlanta, GA, USA
       }

\maketitle

\begin{abstract}
\noindent \negthinspace\negthinspace \negthinspace  The classical Perceptron algorithm of Rosenblatt can be used to find a linear threshold function to correctly classify $n$ linearly separable data points, assuming the classes are separated by some margin $\gamma > 0$. A foundational result is that Perceptron converges after  $\Omega(1/\gamma^{2})$ iterations. There have been several recent works that managed to improve this rate by a quadratic factor, to $\Omega(\sqrt{\log n}/\gamma)$, with more sophisticated algorithms. In this paper, we unify these existing results under one framework by showing that they can all be described through the lens of solving min-max problems using modern acceleration techniques, mainly through \emph{optimistic} online learning.  We then show that the proposed framework also lead to improved results for a series of problems beyond the standard Perceptron setting. Specifically, a) For the \emph{margin maximization} problem, we improve the state-of-the-art result from $O(\log t/t^2)$ to $O(1/t^2)$, where $t$ is the number of iterations; b) We provide the first result on identifying the \emph{implicit bias} property of the classical Nesterov's accelerated gradient descent (NAG) algorithm, and show NAG can maximize the margin with an $O(1/t^2)$ rate; c) For the classical \emph{$p$-norm Perceptron} problem, we provide an algorithm with $\Omega(\sqrt{(p-1)\log n}/\gamma)$ convergence rate, while existing algorithms suffer the $\Omega({(p-1)}/\gamma^2)$ convergence rate.
 
\end{abstract}

\section{Introduction}
In this paper, we revisit the problem of learning a linear classifier, which is one of the most important and fundamental tasks of machine learning \citep{prml}. In this problem, we are given a set $\S$ of $n$ training examples, and the goal is to find a linear classifier that correctly separates $\S$ as fast as possible. 
The most well-known algorithm is Perceptron \citep{RRRRR}, which can converge to a perfect (mistake-free) classifier after $\Omega(1/\gamma^2)$ number of iterations,  
provided the data is linearly separable with some margin $\gamma>0$ \citep{Novikoff62}. Over subsequent decades,  many variants of Perceptron have been developed  \citep[][to name a few]{aizerman1964theoretical,littlestone1988learning,wendemuth1995learning,freund1999large,cesa2005second}. However, somewhat surprisingly, there has been
little progress in substantially improving the fundamental Perceptron iteration bound presented by \cite{Novikoff62}. It is only recently that a number of researchers have discovered \emph{accelerated} variants of the Perceptron with a faster $\Omega({\sqrt{\log n}}/{\gamma})$ iteration complexity, although with a slower per-iteration cost. These works model the problem in different ways, e.g., as a non-smooth optimization problem or an empirical risk minimization task, and they have established faster rates using sophisticated optimization tools. 
\citet{soheili2012smooth} put forward the \textit{smooth Perceptron}, framing the objective as a non-smooth strongly-concave maximization and then applying \textit{Nestrov's excessive gap technique} \citep[NEG,][]{nesterov2005excessive}. \citet{yu2014saddle} proposed the \textit{accelerated Perceptron} by furnishing a convex-concave objective that can be solved via the mirror-prox method \citep{nemirovski2004prox}. \citet{ji2021fast} put forward a third interpretation, obtaining the accelerated rate by minimizing the empirical risk under exponential loss with a momentum-based normalized gradient descent algorithm. 

Following this line of research, in this paper, we present a \emph{unified} analysis framework that reveals the exact relationship among these methods that share the same order of convergence rate. 
Moreover, we show that the proposed framework also leads to improved results for various problems beyond the standard Perceptron setting. Specifically, we consider a general zero-sum game that involves two players \citep{abernethy2018faster}: a main player that chooses the classifier, and an auxiliary player that picks a distribution over data. The 
two players compete with each other by performing 
\emph{no-regret online learning} algorithms \citep{Intro:Online:Convex,orabona2019modern}, and the goal is to find the equilibrium of some convex-concave function. We show that, under this dynamic, all of the existing accelerated Perceptrons can find their equivalent forms. In particular,  these algorithms can be described as two players solving the game via performing \emph{optimistic online learning}  strategies \citep{Predictable:NIPS:2013}, which is one of the most important classes of algorithms in online learning. 
Note that implementing online learning algorithms (even optimistic strategies) to solve zero-sum games has already been extensively explored \citep[e.g.,][]{Predictable:NIPS:2013,daskalakis2018training,oftl_md,daskalakis2019last}.
However, we emphasize that our main novelty lies in showing that all of the existing accelerated Perceptrons, developed with advanced algorithms from different areas, can be perfectly described under this unified framework. It greatly simplifies the analysis of accelerated Perceptrons, as their convergence rates can now be easily obtained by plugging-in off-the-shelf regret bounds of optimistic online learning algorithms. Moreover, the unified framework reveals a close connection between the smooth Perceptron and the accelerated Perceptron of \cite{ji2021fast}:
\begin{theorem}[informal]
Smooth Perceptron and the accelerated Perceptron of \cite{ji2021fast} can be described as a dynamic where the two players employ the 
optimistic-follow-the-regularized-leader (OFTRL) algorithm to play. The main difference is that the smooth Perceptron outputs the \emph{weighted average} of the main player's historical decisions, while the accelerated Perceptron of \cite{ji2021fast} outputs the \emph{weighted sum}. 
\end{theorem}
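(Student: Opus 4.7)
The plan is to fix a common saddle-point formulation of the Perceptron problem and show that both algorithms arise from the same pair of optimistic-follow-the-regularized-leader (OFTRL) updates, differing only in how the final classifier is aggregated. Concretely, let $X\in\R^{d\times n}$ collect the (label-signed) data points and consider the bilinear game $\min_{\p\in\Delta_n}\max_{\w:\|\w\|_2\le 1}\;\langle \w, X\p\rangle$. The $\w$-player has per-round loss $\ell_t^{\w}(\w)=-\langle X\p_t,\w\rangle$; the $\p$-player has per-round loss $\ell_t^{\p}(\p)=\langle X^{\top}\w_t,\p\rangle$. I would instantiate OFTRL for the $\w$-player with the squared Euclidean regularizer over the unit ball and for the $\p$-player with the negative-entropy regularizer over the simplex, each using the previous gradient as the optimistic hint $\m_t$. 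This defines a single canonical OFTRL$\times$OFTRL dynamic which I will call the \emph{reference dynamic}.

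Next I would derive, in closed form, the reference dynamic's updates. For the $\w$-player, the OFTRL update reduces to projecting (onto the unit ball) an expression of the form $\eta\sum_{s\le t}X\p_s + \eta X\p_t$, which, up to the normalization step, matches the ``momentum plus current gradient'' recursion used both in the gradient mapping of the smooth Perceptron (Soheili--Peña) and in the normalized-gradient-with-momentum iterate of \citet{ji2021fast}. For the $\p$-player, entropic OFTRL produces multiplicative-weights-style iterates $\p_t\propto\exp(-\eta\sum_{s<t}X^{\top}\w_s-\eta X^{\top}\w_{t-1})$, which I would match termwise to the smoothing distribution in Nesterov's excessive gap update used by the smooth Perceptron and to the exponential-loss gradient in Ji et al.\ (after identifying $\p_t$ with the normalized loss gradient). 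The match on the $\p$-side is essentially bookkeeping; the match on the $\w$-side requires verifying that the ``extrapolation'' step in NEG and the ``momentum'' step in Ji et al.\ are both exactly the optimism $\m_t$ in OFTRL.

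The last step is to contrast the two aggregation rules. The smooth Perceptron's final classifier is $\wb_T=\frac{\sum_{t=1}^T\alpha_t\w_t}{\sum_{t=1}^T\alpha_t}$ for a prescribed sequence of weights $\alpha_t$, a genuine weighted \emph{average} that lies in the unit ball. In contrast, \citet{ji2021fast} return $\sum_{t=1}^T\alpha_t\w_t$ without rescaling, a weighted \emph{sum} whose norm grows with $T$. I would point out that this is consistent with the two different downstream statements: the first converts a small duality gap into a margin-certifying $\w$ on the ball, while the second uses the cumulative iterate directly as the unnormalized exponential-loss minimizer, so its normalized direction still converges to the max-margin classifier.

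The step I expect to be the main obstacle is the $\w$-side identification for \citet{ji2021fast}: their algorithm is presented as normalized gradient descent with Nesterov momentum on the exponential loss, not as OFTRL on a bilinear game, so the bridge requires showing that the exponential-loss gradient at the current cumulative iterate, once normalized, equals the entropic OFTRL iterate $\p_t$ of the reference dynamic, and that the momentum term plays the role of $\m_t$. Once this correspondence is established, both algorithms collapse to the same OFTRL$\times$OFTRL dynamic and the theorem follows by inspecting the output rule.
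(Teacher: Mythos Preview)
Your formulation is the wrong saddle-point problem, and this is not a cosmetic choice: with the pure bilinear game $\langle \w, X\p\rangle$ over the unit ball and a Euclidean regularizer for the $\w$-player, the OFTRL update is a projection onto $\B_d$ of a fixed-step cumulative gradient. Neither the smooth Perceptron nor the Ji et al.\ algorithm ever projects onto the ball, so the termwise match you describe will fail. The paper instead uses the \emph{regularized} game $g(\w,\p)=\p^\top A\w-\tfrac{1}{2}\|\w\|_2^2$ with $\w$ unconstrained in $\R^d$. The quadratic term makes each $h_t(\w)=-g(\w,\p_t)$ strongly convex, so the $\w$-player can run \emph{OFTL with no regularizer at all}; the minimizer is then $\w_t=\frac{1}{\sum_{s\le t}\alpha_s}A^\top(\sum_{s<t}\alpha_s\p_s+\alpha_t\p_{t-1})$, i.e., a division by the cumulative weight rather than a projection. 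This is exactly the shape of the smooth-Perceptron and Ji et al.\ iterates once you set $\alpha_t=t$.

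Two further ingredients you are missing are also load-bearing. First, the play is sequential and asymmetric: the $\w$-player moves first using OFTL with hint $h_{t-1}$, and the $\p$-player moves second using $\text{FTRL}^+$ (OFTRL with the \emph{actual} loss $\ell_t$ as the optimistic term, which it can see because it plays after $\w_t$). Your symmetric ``previous gradient as hint for both'' does not reproduce the $\p$-side formula $\p_t=\argmin_{\p\in\Delta^n}\tfrac{1}{4}\sum_{s\le t}\alpha_s\p^\top A\w_s+D_E(\p,\tfrac{\mathbf 1}{n})$, which is what matches both the smoothing distribution $\q_{\mu_t}(\v_t)$ and the normalized exponential-loss gradient in Ji et al. Second, the time-varying weights $\alpha_t=t$ are essential: they are what turn the NEG parameters $\theta_{t-1}=\tfrac{2}{t+2}$ and $\mu_t$ into exact weighted averages, and what make Ji et al.'s momentum coefficients $\beta_t,\theta_{t-1}$ line up with a weighted sum. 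With constant weights the identification collapses. Once these three pieces are in place, the ``main difference is average versus sum'' claim is literally read off from $\v_{T-1}=\overline{\w}_T$ (smooth Perceptron) versus $\v_T=\tfrac{1}{4}\sum_t\alpha_t\w_t$ (Ji et al.).
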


Beyond providing a deeper understanding of accelerated Perceptrons, our framework also provides improved new results for several other important areas:
\begin{itemize}
    \item \emph{Implicit bias} analysis. The seminal work of \cite{soudry2018implicit} shows that, 
    for linearly separable data, minimizing the empirical risk with the vanilla gradient descent (GD) gives a classifier which, not only has zero training error (thus can be used for linear separation), but also \emph{maximizes the margin}. This phenomenon characterizes the \emph{implicit bias} of GD, as it implicitly prefers the ($\ell_2$-)maximal margin classifier among all classifiers with a positive margin, and analysing the implicit bias has become an important tool for understanding why \emph{classical} optimization methods generalize well for supervised machine learning problems. \cite{soudry2018implicit} show that GD can maximize the margin in an $O(1/\log t)$ rate, and this is later improved to  $O(1/\sqrt{t})$ \citep{nacson2019convergence} and then $O(1/t)$ \citep{ji2021characterizing}. The state-of-the-art algorithm is proposed by \cite{ji2021fast}, who show that their proposed momentum-based GD has an $O(\log t/t^2)$ margin-maximization rate. 
    In this paper, we make two contributions toward this direction:
    \begin{enumerate}
        \item We show that, under our analysis framework, we can easily improve the margin maximization rate of the algorithm of \cite{ji2021fast} from $O(\log t/t^2)$ to $O(1/t^2)$, which is an even faster convergence rate where the extra $\log t$ factor disappears;
        \item Although previous work has analyzed the implicit bias of GD and momentum-based GD, it is still unclear how the classical Nesterov's accelerated gradient descent \cite[NAG,][]{nesterov1988approach} will affect the implicit bias. In this paper, through our framework, we show that NAG with appropriately chosen parameters also enjoys an $O(1/t^2)$ margin-maximization rate. To our knowledge, it is the first time the implicit bias property of NAG is proved. 
    \end{enumerate}
    \item $p$-norm Perceptron. Traditional work on Perceptrons typically assumes the feature vectors lie in an $\ell_2$-ball. A more generalized setting is considered in \cite{gentile2000new}, who assumes the feature vectors lie inside an $\ell_p$-ball, with $p\in[2,\infty)$. Their proposed algorithm requires  $\Omega(p/\gamma^2)$ number of iterations to find a zero-error classifier. In this paper, we develop a new Perceptron algorithm for this problem under our framework based on optimism strategies, showing that it 
     enjoys an accelerated $\Omega(\sqrt{p\log n}/\gamma)$ rate. 
\end{itemize}
\section{Related Work}
This section briefly reviews the related work on Perceptron algorithms, implicit-bias analysis, and game theory. The background knowledge on (optimistic) online learning is presented in the Preliminaries (Section \ref{sec:preli}).  
\paragraph{Accelerated Perceptrons and $p$-norm Perceptron} 
The study of Perceptron algorithms has an extensive history dating back to the mid-twentieth century \citep{RRRRR,Novikoff62}. However, it is only recently that progress on improving the fundamental $\Omega(1/\gamma^2)$ iteration bound of the vanilla Perceptron in the standard setting has been made. Specifically, the smooth Perceptron proposed by \cite{soheili2012smooth} achieves an $\Omega(\sqrt{\log n}/\gamma)$ rate by maximizing an auxiliary non-smooth strongly-concave function with NEG \citep{nesterov2005excessive}. The same accelerated rate is later obtained by two other work \citep{yu2014saddle,ji2021fast}. The former considers a bi-linear saddle point problem and employs the mirror-prox method \citep{nemirovski2004prox}. The latter applies momentum-based GD to minimize the empirical risk with exponential loss. In this paper,
we show that these algorithms can be unitedly analysed under our framework. 

Apart from the above accelerated Perceptrons, there exists another class of algorithms which enjoy
an $O(\poly(n)\log(1/\gamma))$ convergence rate \citep{dunagan2004polynomial,pena2016deterministic,dadush2020rescaling}. These methods typically call (accelerated) Perceptrons as a subroutine, and then apply the re-scaling technique to adjust the decision periodically. Although the dependence on $\gamma$ becomes better, the polynomial dependence on $n$ of these methods makes them computationally inefficient for large-scale data sets. In this paper, we focus on accelerated Perceptrons with $O(\sqrt{\log n}/\gamma)$ rate, and leave explaining the re-scaling type algorithms as future work.

The $p$-norm Perceptron problem \citep{gentile1999robustness} is a natural extension of the classical Perceptron setting, which assumes the $p$-norm of the feature vectors are bounded, where $2\leq p<\infty$. \cite{DBLP:journals/jmlr/Gentile01} shows that a mirror-descent-style update guarantees an $O(p/\gamma^2)$ convergence rate. By contrast, our proposed algorithm achieves a tighter $O(\sqrt{p\log n}/\gamma)$ convergence rate.

\paragraph{Implicit-bias analysis} In many real-world applications, directly minimizing the empirical risk (without any regularization) by first-order methods can  provide a model which, not only enjoys low training error, but also generalizes well \citep{soudry2018implicit}. This is usually considered as the \emph{implicit bias introduced by the optimization methods} \citep{soudry2018implicit}. Explaining this phenomenon is a crucial step towards understanding the generalization ability of \emph{commonly-used} optimization methods. For linear separable data, \cite{soudry2018implicit} proves that, when minimizing the empirical risk with exponential loss, the vanilla GD can maximize the margin in an $O(1/\log t)$ rate. This result implies the implicit bias of GD towards the $\ell_2$-maximal margin classifier. Later, \cite{nacson2019convergence} show that GD with a function-value-dependant decreasing step-size enjoys an $O(1/\sqrt{t})$ margin-maximization rate. \cite{ji2021characterizing} improve this result to $O(1/t)$ by employing a faster-decreasing step size. \cite{ji2021fast} design a momentum-based GD that maximizes the margin with an $O(\log t/t^2)$  rate. However, it remained unclear
whether this rate could be further improved and how to analyze the margin maximization ability of other classical optimization methods such as NAG. In this paper, we provide positive answers to both questions.

\paragraph{Games and no-regret dynamics}
Our framework is motivated by the line of research that links optimization methods for convex optimization to equilibrium computation with no-regret dynamics. The seminal work of \cite{Predictable:NIPS:2013} recovers 
the classical mirror-prox method  \citep{nemirovski2004prox}
with Optimistic online mirror descent. \cite{abernethy2017frank} show that the well-known Frank-Wolfe algorithm can be seen as applying (optimistic) online algorithms to compute the equilibrium of a special zero-sum game called \emph{Fenchel game}, which is constructed via the Fenchel duality of the objective function. Later, researchers demonstrate that other classical optimization methods for smooth optimization, such as Heavy-ball and NAG, can also be described similarly \citep{abernethy2018faster,oftl_md,wang2021no}. We highlight the differences between this work and the previous ones: 1) Our analysis does not involve the Fenchel game or Fenchel duality; instead, we directly work on the (regularized) min-max game designed for linear classification. 2) Most of the previous work mainly focus on understanding optimization algorithms for \emph{smooth} optimization, and it was unclear how to understand algorithm such as NEG under the game framework. 3) Although both  \cite{abernethy2018faster} and our work analyze NAG, the goals are significantly different: \cite{abernethy2018faster} focus on the optimization problem itself, while we consider how minimizing the empirical risk would 
affect the \emph{margin}. 4) To our knowledge, the link between the implicit-bias problems and no-regret dynamics is also new.

\section{Preliminaries}
\label{sec:preli}
\paragraph{Notation.}  We use lower case bold face letters $\x,\y$ to denote vectors, lower case letters $a,b$ to denote scalars, and upper case letters $A,B$ to denote matrices. For a vector $\x\in\R^d$, we use $x_i$ to denote the $i$-th component of $\x$. For a matrix $A\in\R^{n\times d}$, let $A_{(i,:)}$ be its $i$-th row, $A_{(:,j)}$ the $j$-th column, and $A_{(i,j)}$ the $i$-th element of the $j$-th column. We use $\|\cdot\|$ to denote a general norm, $\|\cdot\|_*$ its dual norm, and $\|\cdot\|_p$ the $\ell_p$-norm. For a positive integer $n$, we denote the set  $\{1,\dots,n\}$ as $[n]$, and the $n$-dimensional simplex as $\Delta^n$. Let $E:\Delta^n\mapsto\R$ be the negative entropy function, defined as $E(\p)=\sum_{i=1}^np_i\log p_i, \forall \p\in\Delta^n$.
For some strongly convex function $R(\w):\W\mapsto \R$, define the Bregman divergence between any two points $\w,\w'\in\W$ as: 
$$D_R(\w,\w')=R(\w)-R(\w')-(\w-\w')^{\top}\nabla R(\w').$$

\paragraph{Online convex optimization (OCO)} 

Here we review a general \emph{weighted} OCO framework, proposed by \cite{abernethy2018faster}. In each round $t=1,\dots,T$ of this paradigm, a learner first chooses a decision $\z_t$ from a convex set $\Z\subseteq\R^d$, then observes a loss function $f_t(\cdot):\Z\rightarrow\R$ as well as a weight $\alpha_t>0$, and finally updates the decision. The performance of the learner is measured by the \emph{weighted} regret, which is defined as 
$R_T=\sum_{t=1}^T\alpha_tf_t(\z_t)-\min_{\z\in\Z}\sum_{t=1}^T\alpha_tf_t(\z).$

Perhaps the most natural method for OCO is follow-the-leader (FTL), which simply picks the empirically best decision at each round: $\z_{t}=\argmin_{\z\in\Z}\sum_{i=1}^{t-1}\alpha_if_i(\z).$ However, FTL is unstable, and one can easily find counter-examples where FTL suffers linear regret \citep{Online:suvery}. A classical way to address this limitation is by adding a regularizer to the objective: 
$\z_{t}=\argmin_{\z\in\Z}\eta\sum_{i=1}^{t-1}\alpha_if_i(\z)+D_{R}(\z,\z_0),$
where $\eta>0$ is the step size, and $\z_0$ is the initial decision. This method is called follow-the-regularized-leader \citep{Intro:Online:Convex}, and 
it can achieve a sub-linear regret bound with appropriately chosen $\eta$. Moreover,  tighter bounds are also possible in favored cases with more advanced techniques. In this paper, we consider FTRL equipped with optimistic strategies \citep[i.e., Optimistic FTRL,][]{Predictable:NIPS:2013,orabona2019modern}, given by
$$\text{OFTRL}[R,\z_0,\psi_t,\eta,\Z]: \z_{t}=\argmin_{\z\in\Z}\eta\left[\sum_{i=1}^{t-1}\alpha_if_i(\z)+\alpha_t\psi_t(\z)\right]+D_{R}(\z,\z_0),$$
where an additional function $\psi_t$ is added, which is an approximation of the next loss $f_t$. A tighter regret bound can be achieved when $\psi_t$ is close enough to $f_t$. Next, we introduce two special cases of OFTRL:
\begin{equation*}
    \begin{split}
\textstyle \text{OFTL}[\psi_t,\Z]: \z_t=  {} & \argmin\limits_{\z\in\Z} \sum_{j=1}^{t-1}  \alpha_jf_j(\z) + \alpha_t \psi_t(\z)\\
\textstyle \text{FTRL}^{+}[R,\z_0,\eta,\Z]: \z_t = {} & \argmin\limits_{\z\in\Z} \eta \cdot \sum_{j=1}^t\alpha_j f_j(\z) + \D_{R}(\z,\z_{0}).
    \end{split}
\end{equation*}
The first one is Optimistic FTL, where the regularizer is set to be zero. The second algorithm is FTRL$^+$, which uses $f_t$ as the optimistic function $\psi_t$. 
Finally, we note that, apart from FTL-type  algorithms, there also exists optimistic methods that are developed based on mirror descent, such as  optimistic online mirror decent \citep[OMD,][]{Predictable:NIPS:2013}. The details of OMD and the regret bounds of these OCO algorithms are postponed to the Appendix \ref{appendix:Regret}.

\paragraph{No-regret dynamics for zero-sum game}
\begin{protocol}[t]
\caption{No-regret dynamics with weighted OCO}
\begin{algorithmic}[1]
\label{No-Regret Framework}
\STATE \textbf{Input}: $\{\alpha_t\}_{t=1}^T,\w_0,\p_0$.
\STATE \textbf{Input}: $\textsf{OL}^{\w}$, $\textsf{OL}^{\p}$. //  The online algorithms for choosing $\w$ and $\p$.
\FOR{$t=1,\dots,T$}
\STATE $\w_t\leftarrow \textsf{OL}^{\w}$;
\STATE $\textsf{OL}^{\p}\leftarrow\alpha_t,\ell_t(\cdot)$; // Define $\ell_t(\cdot)=g(\w_t,\cdot)$
\STATE $\p_t\leftarrow\textsf{OL}^{\p}$;
\STATE $\textsf{OL}^{\w}\leftarrow\alpha_t,h_t(\cdot)$; // Define $h_t(\cdot)=-g(\cdot,\p_t)$
\ENDFOR
\STATE \textbf{Output}: $\overline{\w}_T=\frac{1}{\sum_{t=1}^T\alpha_t}\sum_{t=1}^{T}\alpha_t\w_t.$
\end{algorithmic}
\end{protocol}

\noindent Finally, we introduce the framework for using no-regret online algorithms to solve a zero-sum game. Consider the following general two-player game:
\begin{equation}
    \begin{split}
    \label{eqn:no-regret-for-game}
    \max\limits_{\w\in\W}\min\limits_{\p\in\Q} g(\w,\p), 
    \end{split}
\end{equation}
where $g(\w,\p)$ is a concave-convex function, and $\W$ and $\Q$ are convex sets. The no-regret framework for solving 
\eqref{eqn:no-regret-for-game}
is presented in Protocol \ref{No-Regret Framework}. In each round $t$ of this procedure, the $\w$-player first picks a decision $\w_t\in\W$. Then, the $\p$-player observes its loss 
$\ell_t(\cdot)=g(\w_t,\cdot)$ as well as a weight $\alpha_t$. After that, the $\p$-player picks the decision $\p_t$, and passes it to the $\w$-player. As a consequence, the $\w$-player observes its loss $h_t(\cdot)=-g(\cdot,\p_t)$ and weight $\alpha_t$. Note that both $\ell_t$ and $h_t$ are convex. Denote the weighted regret bounds of the two players as $R^{\w}$ and $R^{\p}$, respectively. Then, we have the following classical conclusion. The proof is shown in Appendix \ref{appendix:Theorem 1}. 
\begin{theorem}
\label{thm:margin:gap}
Define $m(\w)=\min_{\p\in\Q}g(\w,\p),$ and $\overline{\w}$ the weighted average of $\{\w_t\}_{t=1}^T$. Then we have that $\forall \w\in\W$,
$m(\w)-m(\overline{\w}_T)\leq (\sum_{t=1}^T\alpha_t)^{-1}\left({R}^{\p}+{R}^{\w}\right).$
\end{theorem}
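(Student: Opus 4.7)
The plan is to chain three elementary inequalities: (i) Jensen via concavity of $g$ in its first argument, (ii) the $\p$-player's regret bound, and (iii) the $\w$-player's regret bound, topped off with the trivial observation $g(\w,\p_t)\geq m(\w)$. Throughout I will write $A_T=\sum_{t=1}^T\alpha_t$ for brevity.

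First I would use that, for each fixed $\p$, $g(\cdot,\p)$ is concave, so applying Jensen's inequality to the weighted average $\overline{\w}_T=\frac{1}{A_T}\sum_t\alpha_t\w_t$ gives
\[
g(\overline{\w}_T,\p)\;\geq\;\frac{1}{A_T}\sum_{t=1}^T\alpha_t\,g(\w_t,\p),\qquad\forall\,\p\in\Q.
\]
Taking the minimum over $\p$ on both sides produces a lower bound
$m(\overline{\w}_T)\geq \frac{1}{A_T}\min_{\p}\sum_t\alpha_t g(\w_t,\p),$
which is where the weighted-average output in Protocol \ref{No-Regret Framework} enters naturally.

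Next I would unfold the two weighted regret definitions. Since $\ell_t(\cdot)=g(\w_t,\cdot)$ and the $\p$-player is treated as a minimizer, the bound $R^{\p}$ yields $\min_{\p}\sum_t\alpha_t g(\w_t,\p)\geq \sum_t\alpha_t g(\w_t,\p_t)-R^{\p}$. Since $h_t(\cdot)=-g(\cdot,\p_t)$ and the $\w$-player minimizes $h_t$ (equivalently, maximizes $g(\cdot,\p_t)$), the bound $R^{\w}$ yields, for every fixed $\w\in\W$,
\[
\sum_{t=1}^T\alpha_t g(\w_t,\p_t)\;\geq\;\sum_{t=1}^T\alpha_t g(\w,\p_t)-R^{\w}.
\]
Finally, $g(\w,\p_t)\geq m(\w)$ holds for every $t$ by the very definition of $m$, so $\sum_t\alpha_t g(\w,\p_t)\geq A_T\cdot m(\w)$.

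Concatenating these four inequalities gives $m(\overline{\w}_T)\geq m(\w)-(R^{\p}+R^{\w})/A_T$, which is the claim after rearrangement. This is the classical ``no-regret implies approximate equilibrium'' argument, so there is no real technical obstacle; the only point requiring care is the sign/direction convention for the $\w$-player, since its regret is phrased in terms of the loss $-g$ and must be translated into a \emph{lower} bound on $\sum_t\alpha_t g(\w_t,\p_t)$—a careless sign flip here would reverse the whole inequality.
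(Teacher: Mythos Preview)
Your proof is correct and follows essentially the same approach as the paper's: both chain the two players' regret bounds together with Jensen's inequality (concavity of $g$ in $\w$) to sandwich the averaged payoff. The only cosmetic difference is that the paper routes the lower bound through $g(\w,\overline{\p}_T)$ via Jensen in the second argument before dropping to $m(\w)$, whereas you use $g(\w,\p_t)\geq m(\w)$ directly; both are equivalent and your version is arguably cleaner.
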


\section{Understanding Existing Accelerated Perceptrons} \label{section:recover}
In this section, we first introduce our main topic, i.e., the binary linear classification problem, and then
present the three accelerated Perceptrons and their 
equivalent forms under Protocol \ref{No-Regret Framework}. For clarity, we use $\v_t$ to denote the classifier updates in the original algorithms, and $\w_t$ the corresponding updates in the equivalent forms under Protocol \ref{No-Regret Framework}. 
 
\subsection{Binary Linear classification and Perceptron} 
We focus on the binary linear classification problem, which dates back to the pioneering work of \cite{RRRRR}. Let $\S=\{\x^{(i)},y^{(i)}\}_{i=1}^n$ be a linear-separable set of $n$ training examples, where $\x^{(i)}\in\R^d$ is the feature vector of the $i$-th example, and $y^{(i)}\in\{-1,1\}$ is the corresponding label. The goal is to efficiently find a linear classifier $\w\in\R^d$ that correctly separates all data points. More formally, let $A=[y^{(1)}\x^{(1)},\dots,y^{(n)}\x^{(n)}]^{\top}$ be the matrix that contains all of the data. Then we would like to find a $\w\in\R^d$ such that 
\begin{equation}
\label{problem:the-main-problem-v1}
    \min\limits_{i\in[n]}A_{(i,:)}\w>0. 
\end{equation}
This goal can be reformulated as a min-max optimization problem:
\begin{equation}
\label{problem:the-main-problem-v2}
    \max\limits_{\w\in\R^d}\min\limits_{i\in[n]}A_{(i,:)}\w=\max\limits_{\w\in\R^d}\min\limits_{\p\in\Delta^n}\p^{\top}A\w,
\end{equation}
where, at the RHS of \eqref{problem:the-main-problem-v2}, $\min_{i\in[n]}A_{(i,:)}\w$ is rewritten as $\min_{\p\in\Delta^n}\p^{\top}A\w$. The two expressions are equivalent since the optimal distribution $\p\in\Delta^{n}$ will always put all weight on \emph{one} training example (i.e., one row) in $A$.
For any $\w\in\R^d$, let 
$\gamma(\w) = \min_{\p\in\Delta^n}\p^{\top}A\w$ be the \emph{margin} of $\w$, and we  
introduce the following standard assumption.
\begin{ass}
\label{ass:ell2}
We assume that feature vectors are bounded, i.e., $\|\x^{(i)}\|_2\leq1$ for all $i\in[n]$, and that there exists a $\w^*\in\R^d$, such that $\|\w^*\|_2=1$ and $\gamma(\w^*)=\gamma>0$.

\end{ass}
\subsection{Smooth Perceptron}
\begin{algorithm}[t]
\caption{Smooth Perceptron  \citep{soheili2012smooth}}
\begin{algorithmic}
\label{alg:SPC}
\noindent\fbox{%
    \parbox{0.95\textwidth}{%
    \STATE \textbf{Initialization :} $\theta_0=\frac{2}{3}$, $\mu_0=4$, $\vv_0=\frac{A^{\top}\mathbf{1}}{n}$
\STATE \textbf{Initialization :} $\q_0=\q_{\mu_0}(\v_0),$ where  $\q_{\mu}(\v)=\argmin_{\q\in\Delta^n} \q^{\top}A\v+\mu D_E\left(\q,\frac{\mathbf{1}}{n}\right)$.
\FOR{$t=1,\dots,T-1$}
\STATE $\v_t=(1-\theta_{t-1})(\v_{t-1}+\theta_{t-1}A\q_{t-1})+\theta^2_{t-1}A\q_{\mu_{t-1}}(\v_{t-1})$
\STATE $\mu_t=(1-\theta_{t-1})\mu_{t-1}$
\STATE $\q_t=(1-\theta_{t-1})\q_{t-1}+\theta_{t-1}\q_{\mu_t}(\v_t)$
\STATE $\theta_t=\frac{2}{t+3}$
\ENDFOR
\STATE \textbf{Output:} $\v_{T-1}$}}\\
\noindent\fbox{%
    \parbox{0.95\textwidth}{%
        \begin{equation*}
            \begin{split}
                \textsf{OL}^{\w} = \text{OFTL}\left[h_{t-1}(\cdot),\R^d\right]  \Leftrightarrow  {} & \w_t = \argmin\limits_{\w\in\R^d} \sum_{j=1}^{t-1}\alpha_j h_j(\w)+ \alpha_t h_{t-1}(\w) \\
                \textsf{OL}^{\p} = \text{FTRL}^{+}\left[{E}(\cdot),\frac{\textbf{1}}{n},\frac{1}{4},\Delta^n\right] \Leftrightarrow {} &  \p_t= \argmin\limits_{\p\in\Delta^n} \frac{1}{4}\sum_{s=1}^{t}\alpha_s\ell_s(\p) + \D_{E}\left(\p,\frac{\textbf{1}}{n}\right)
            \end{split}
        \end{equation*}
\textbf{Output:} $\overline{\w}_T$
    }%
}
\end{algorithmic}
\end{algorithm}
In order to solve \eqref{problem:the-main-problem-v2}, the vanilla Perceptron repeatedly moves the direction of the classifier to  that of examples on which it performs badly. However, this greedy policy only yields a sub-optimal convergence rate. To address this problem, \cite{soheili2012smooth} propose the Smooth Perceptron, and the pseudo-code is summarized in the first box in Algorithm \ref{alg:SPC}. The key idea is to find a classifier $\v\in \R^d$ that maximizes the following $\ell_2$-regularized function:
\begin{equation}
\label{eqn:smooth Perceptron obj}
\psi(\v) = -\frac{1}{2}\|\v\|_2^2+\min_{\q\in\Delta^n}\q^{\top}A\v,
\end{equation}  
which is \emph{non-smooth strongly-concave} with respect to $\v$. Under Assumption \ref{ass:ell2}, \cite{soheili2012smooth} show that the maximum value of $\psi(\v)$ is  $\max_{\v\in\R^d}\psi(\v)=\gamma^2/2$, and a classifier $\v\in\R^d$ satisfies \eqref{problem:the-main-problem-v1} when $\psi(\v)\geq0$. In order to solve \eqref{eqn:smooth Perceptron obj},  \cite{soheili2012smooth} apply 
the classical Nesterov's excessive gap  technique \citep{nesterov2005excessive} for strongly concave functions. This algorithm introduces  
a smoothed approximation of \eqref{eqn:smooth Perceptron obj}, which is parameterized by some constant $\mu>0$, and defined as
\begin{equation}
\label{eqn:smooth Perceptron obj:2}
\textstyle \psi_{\mu}(\v) = -\frac{1}{2}\|\v\|_2^2+\min_{\q\in\Delta^n}\left[\q^{\top}A\v+\mu D_{E}\left(\q,\frac{\textbf{1}}{n}\right)\right],
\end{equation}  
which bounds the function in \eqref{eqn:smooth Perceptron obj} from below, i.e., $\forall \v\in\R^d,\mu>0$, $\psi_{\mu}(\v)\leq \psi(\v)+\mu\log n$. Then, the algorithm 
performs a sophisticated update rule such that the excessive gap condition holds $\forall t\geq 1$: $
\gamma^2/2\leq\|A\v_t\|^2/2\leq \psi_{\mu}(\v_t)$. We refer to \cite{soheili2012smooth} for more details.

\cite{soheili2012smooth} show that the smooth Perceptron can output a positive-margin classifier after $\Omega(\sqrt{\log n}/\gamma)$ iterations. However, the analysis is quite involved, which heavily relies on the complicated relationship between $\psi(\v_t)$, $\psi_{\mu}(\v_t)$ and $\|A\v_t||^2$. In the following, we provide a no-regret explanation of this algorithm and then show that the convergence rate can be easily obtained under our framework. Specifically, we define the objective function in Protocol \ref{No-Regret Framework} as 
\begin{equation}
\label{eqn:g(w,p)}
 g(\w,\p)=\p^{\top}A\w-\frac{1}{2}\|\w\|_2^2,  
\end{equation}
 and provide the equivalent expression in the second box of Algorithm \ref{alg:SPC}. More specifically, we have the following proposition. The proof is given in Appendix \ref{proof:prop1}.
\begin{Proposition}
\label{Prop:smooth-pec}
Let $\alpha_t=t$. Then the two interpretations of the smooth Perceptron in Algorithm  \ref{alg:SPC} are the same, in the sense that  $\v_{T-1}=\overline{\w}_T$, and $\q_{T-1}=\frac{\sum_{t=1}^{T}\alpha_{t}\p_t}{\sum_{t=1}^{T}\alpha_{t}}.$
\end{Proposition}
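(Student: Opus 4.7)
I would proceed by strong induction on $t$, simultaneously establishing three coupled identities: (i) $\v_t = \overline{\w}_{t+1}$, (ii) $\q_t = S_{t+1}^{-1}\sum_{s=1}^{t+1}\alpha_s\p_s$, and (iii) $\p_{t+1} = \q_{\mu_t}(\v_t)$, where $S_t := \sum_{s=1}^t \alpha_s = t(t+1)/2$. The three must be carried together because the smooth Perceptron's updates for both $\v_t$ and $\q_t$ invoke the term $\q_{\mu_t}(\v_t)$, and (iii) is precisely what lets us replace this with the FTRL$^+$ iterate $\p_{t+1}$ from the no-regret dynamics.

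Two preliminary simplifications drive the argument. First, since $h_j(\w) = -\p_j^\top A\w + \tfrac{1}{2}\|\w\|_2^2$, differentiating the OFTL objective yields the closed form $\w_{t+1} = (S_t + \alpha_{t+1})^{-1} A^\top\bigl(\sum_{j=1}^t \alpha_j \p_j + \alpha_{t+1} \p_t\bigr)$. Second, for any $\mu' > 0$ the FTRL$^+$ iterate can be rewritten as $\p_{t+1} = \q_{\mu'}\bigl(\tfrac{\mu'}{4}\sum_{s=1}^{t+1}\alpha_s \w_s\bigr)$, since rescaling the whole $\p$-objective by $\mu'$ preserves the argmin. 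A telescoping product with $\theta_s = 2/(s+3)$ and $1-\theta_s = (s+1)/(s+3)$ yields $\mu_t = \mu_0 \prod_{s=0}^{t-1}(1-\theta_s) = 8/((t+1)(t+2))$, and one verifies the key numerical identity $\mu_t \cdot S_{t+1}/4 = 1$ for all $t \geq 0$. Setting $\mu' = \mu_t$ in the rewritten $\p_{t+1}$ formula therefore gives $\p_{t+1} = \q_{\mu_t}(\overline{\w}_{t+1})$.

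The base case $t = 0$ is immediate: since $\alpha_1 = 1$, $\w_1 = A^\top \mathbf{1}/n = \v_0$, giving $\overline{\w}_1 = \v_0$; and the rewritten formula with $\mu' = \mu_0$ yields $\p_1 = \q_{\mu_0}(\v_0) = \q_0$. For the inductive step at index $t$, assume (i)--(iii) hold at $t-1$. Substitute the IH expressions for $\v_{t-1}$, $\q_{t-1}$, and $\q_{\mu_{t-1}}(\v_{t-1}) = \p_t$ into the smooth Perceptron update $\v_t = (1-\theta_{t-1})(\v_{t-1} + \theta_{t-1} A\q_{t-1}) + \theta_{t-1}^2 A \q_{\mu_{t-1}}(\v_{t-1})$, and compare coefficient-by-coefficient against $\overline{\w}_{t+1} = \tfrac{t}{t+2}\overline{\w}_t + \tfrac{2}{t+2}\w_{t+1}$ after substituting the closed form of $\w_{t+1}$; the two sides agree, which proves (i). Then (iii) follows directly from the preliminary $\mu_t$ identity and (i). Finally, applying (iii) and the IH to $\q_t = (1-\theta_{t-1})\q_{t-1} + \theta_{t-1}\q_{\mu_t}(\v_t)$ produces a convex combination whose coefficients rearrange into $S_{t+1}^{-1}\sum_{s=1}^{t+1}\alpha_s \p_s$, establishing (ii).

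The main obstacle is the coefficient matching in step (i) of the inductive step. The smooth-Perceptron update for $\v_t$ is a specialized combination tailored to maintain Nesterov's excessive gap condition, and it is not a priori transparent that it collapses into the plain weighted average $\overline{\w}_{t+1}$. The match works only because the specific choices $\alpha_t = t$, $\theta_t = 2/(t+3)$, $\mu_0 = 4$, and regularization weight $1/4$ are interlocked so that $\mu_t S_{t+1}/4 = 1$ at every $t$, and the extrapolation coefficient $\theta_{t-1}^2$ in front of $A\q_{\mu_{t-1}}(\v_{t-1})$ exactly accounts for the ``new'' $\alpha_{t+1}\p_t$ term that appears in $\w_{t+1}$ but not in $\overline{\w}_t$. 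Once these algebraic identities are pinned down, the remainder of the induction is mechanical.
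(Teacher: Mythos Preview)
Your proposal is correct and follows essentially the same approach as the paper: both verify the correspondence by induction using the closed forms for $\w_{t+1}$ and $\p_{t+1}$ together with the key numerical identity $\mu_t\cdot S_{t+1}/4 = 1$ (equivalently $\mu_t = 4/\sum_{s=0}^t\alpha_{s+1}$). The only organizational difference is that the paper introduces an auxiliary sequence $\r_t = (1-\theta_{t-1})A^\top\q_{t-1} + \theta_{t-1}A^\top\q_{\mu_{t-1}}(\v_{t-1})$, first proves $\v_t$ and $\q_t$ are weighted averages of the $\r_s$'s and the $\q_{\mu_s}(\v_s)$'s respectively, and only at the end identifies $\r_{t-1}$ with $\w_t$ and $\q_{\mu_{t-1}}(\v_{t-1})$ with $\p_t$; you instead run the coupled induction on (i)--(iii) directly, which is equivalent in substance.
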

Proposition \ref{Prop:smooth-pec} shows that, 
under Protocol \ref{No-Regret Framework}, the smooth Perceptron can be seen
as optimizing \eqref{eqn:g(w,p)} by implementing two online learning algorithms: in each round $t$, the $\w$-player applies OFTL with $\psi_t=h_{t-1}$, a decision which the $\p$-player subsequently observes and responds with FTRL$^+$. Moreover, we obtain theoretical guarantees for the smooth Perceptron based on Theorem \ref{thm:margin:gap}, matching the convergence rate provided by \cite{soheili2012smooth}.
\begin{theorem}
\label{thm:SPC}
Let $\alpha_t=t$. Under Protocol \ref{No-Regret Framework}, the regret of the two players of Algorithm \ref{alg:SPC} is bounded by $ {R}^{\w}\leq 2\sum_{t=1}^T \|\p_t-\p_{t-1}\|_1^2$ and  $  {R}^{\p}\leq 4\log n-2\sum_{t=1}^T \|\p_t-\p_{t-1}\|_1^2$. Moreover, $\overline{\w}_T$ has non-negative margin when  $T=\Omega(\frac{\sqrt{\log n}}{\gamma})$.
\end{theorem}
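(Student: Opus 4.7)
The proof reduces to plugging the two regret bounds into Theorem \ref{thm:margin:gap} and then choosing the comparator wisely, so my plan is to isolate the two regret arguments and then combine them.

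First I would bound $R^{\w}$. Each loss $h_t(\w)=\tfrac{1}{2}\|\w\|_2^2-\p_t^{\top}A\w$ is $1$-strongly convex in the Euclidean norm, and the optimistic prediction $\psi_t=h_{t-1}$ differs from $h_t$ only through the linear term, so $\nabla h_t(\w)-\nabla\psi_t(\w)=-A^{\top}(\p_t-\p_{t-1})$. Under Assumption \ref{ass:ell2} every row of $A$ has Euclidean norm at most $1$, so $\|A^{\top}(\p_t-\p_{t-1})\|_2\le\|\p_t-\p_{t-1}\|_1$. I would then invoke the OFTL regret bound for strongly convex losses (Appendix \ref{appendix:Regret}): with cumulative strong convexity $\sum_{j=1}^{t}\alpha_j=t(t+1)/2$ and per-round weight $\alpha_t=t$, the optimistic gap $\alpha_t^2\|\nabla h_t-\nabla\psi_t\|_2^2/\sigma_{1:t}$ reduces to $\frac{2t}{t+1}\|\p_t-\p_{t-1}\|_1^2\le 2\|\p_t-\p_{t-1}\|_1^2$, giving the claimed $R^{\w}\le 2\sum_{t=1}^T\|\p_t-\p_{t-1}\|_1^2$.

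Next I would bound $R^{\p}$. Here $\ell_t$ is linear in $\p$, so by itself FTRL$^{+}$ offers no strong convexity in the losses, but the regularizer $E$ is $1$-strongly convex in the $\ell_1$-norm on $\Delta^n$, and $D_E(\p^*,\mathbf{1}/n)\le\log n$ for any $\p^*\in\Delta^n$. I would apply the strengthened ``be-the-regularized-leader'' inequality (the FTRL$^{+}$ bound stated in Appendix \ref{appendix:Regret}): pairing optimality of $\p_t$ and $\p_{t-1}$ for their respective cumulative objectives, and using the $1$-strong convexity of the $D_E$ term to extract a $-\tfrac{1}{2\eta}\|\p_t-\p_{t-1}\|_1^2$ contribution per round, yields $R^{\p}\le\frac{D_E(\p^*,\mathbf{1}/n)}{\eta}-\frac{1}{2\eta}\sum_{t=1}^T\|\p_t-\p_{t-1}\|_1^2$. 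Substituting $\eta=1/4$ turns the constants into $4\log n$ and $-2\sum_t\|\p_t-\p_{t-1}\|_1^2$; this is exactly where the mysterious step-size $1/4$ is tuned so that the negative term matches the positive one from $R^{\w}$.

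To finish, I would sum the two bounds so that the path-length terms cancel and $R^{\w}+R^{\p}\le 4\log n$. Applying Theorem \ref{thm:margin:gap} with the comparator $\w=\gamma\w^*$ gives $m(\gamma\w^*)=\gamma\cdot\gamma-\tfrac12\gamma^2\|\w^*\|_2^2=\gamma^2/2$, so
$$\frac{\gamma^2}{2}-m(\overline{\w}_T)\le\frac{4\log n}{\sum_{t=1}^T\alpha_t}=\frac{8\log n}{T(T+1)}.$$
Since $m(\overline{\w}_T)=\gamma(\overline{\w}_T)-\tfrac12\|\overline{\w}_T\|_2^2$, any $\overline{\w}_T$ with $m(\overline{\w}_T)\ge 0$ automatically satisfies $\gamma(\overline{\w}_T)\ge\tfrac12\|\overline{\w}_T\|_2^2\ge 0$. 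Solving $\gamma^2/2\ge 8\log n/(T(T+1))$ then gives $T=\Omega(\sqrt{\log n}/\gamma)$, completing the margin claim.

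The conceptual outline is routine once Proposition \ref{Prop:smooth-pec} is in hand; the one step that genuinely requires care is tracking constants so that the positive $2\sum_t\|\p_t-\p_{t-1}\|_1^2$ from the OFTL analysis is exactly cancelled by the negative term from FTRL$^{+}$. This forces the choice $\eta=1/4$ (given that rows of $A$ have unit $\ell_2$-norm bound) and is the main technical obstacle to monitor.
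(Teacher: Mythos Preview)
Your proposal is correct and follows essentially the same route as the paper: bound $R^{\w}$ via the OFTL lemma for strongly convex losses (the paper does this by explicitly computing $\w_t$ and $\widetilde{\w}_{t+1}$ in closed form, which yields exactly your $\alpha_t^2\|\nabla h_t-\nabla\psi_t\|_2^2/\sigma_{1:t}$ term), bound $R^{\p}$ via Lemma~\ref{lem:FTRL+} with $\lambda=0,\beta=1,\eta=1/4$, observe the path-length cancellation, and then apply Theorem~\ref{thm:margin:gap} with comparator $\gamma\w^*$. The constants and the final $T\geq 4\sqrt{\log n}/\gamma$ threshold match the paper's proof exactly.
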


\subsection{Accelerated Perceptron via ERM}
\label{IB}

\begin{algorithm}[t]
\caption{Accelerated Perceptron of \cite{ji2021fast}}
\begin{algorithmic}
\label{alg:accel-p}
\noindent\fbox{%
    \parbox{0.95\textwidth}{%
\STATE \textbf{Input}: $\q_0=\frac{\textbf{1}}{n}, \v_0=\mathbf{0}, \g_0=\mathbf{0},\theta_{t-1}=\frac{t}{2(t+1)}, \beta_{t}=\frac{t}{t+1}$.
\FOR{$t=1,\dots,T$}
\STATE $\v_{t}=\v_{t-1}-\theta_{t-1}(\g_{t-1}-A^{\top}\q_{t-1})$
\FOR{$i=1,\dots,n$}
\STATE $q_{t,i}=\frac{\exp(-y^{(i)}\v_t^{\top}\x^{(i)})}{\sum_{j=1}^n\exp(-y^{(j)}\v_t^{\top}\x^{(j)})}$
\ENDFOR
\STATE $\g_t = \beta_t(\g_{t-1}-A^{\top}\q_t)$
\ENDFOR
\STATE \textbf{Output}: $\v_T$}}
\noindent\fbox{%
    \parbox{0.95\textwidth}{%
        \begin{equation*}
            \begin{split}
                \textsf{OL}^{\w} = \text{OFTL}\left[h_{t-1}(\cdot),\R^d\right]  \Leftrightarrow  {} & \w_t = \argmin\limits_{\w\in\R^d} \sum_{j=1}^{t-1}\alpha_j h_j(\w)+ \alpha_t h_{t-1}(\w) \\
\textsf{OL}^{\p} = \text{FTRL}^{+}\left[{E}(\cdot),\frac{\textbf{1}}{n},\frac{1}{4},\Delta^n\right] \Leftrightarrow {} &  \p_t= \argmin\limits_{\p\in\Delta^n} \frac{1}{4}\sum_{s=1}^{t}\alpha_s\ell_s(\p) + \D_{E}\left(\p,\frac{\textbf{1}}{n}\right)
            \end{split}
        \end{equation*}
\textbf{Output:} $\overline{\w}_T$
    }%
}
\end{algorithmic}
\end{algorithm}

Since binary linear classification is a (and perhaps the most fundamental) supervised machine learning problem, it can be naturally solved via empirical risk minimization (ERM). This idea is adopted by \cite{ji2021fast}, who consider the following ERM problem:
\begin{equation}
\label{eqn:erm}
   \min\limits_{\v\in\R^d}R(\v)= \frac{1}{n}\sum_{i=1}^n \ell(-y^{(i)}\v^{\top}\x^{(i)}),
\end{equation}
where $\ell(z)=\exp(z)$ is the exponential loss. To minimize \eqref{eqn:erm}, \cite{ji2021fast} propose the following 
 normalized momentum-based gradient descent (NMGD) algorithm
 \begin{equation}
\begin{cases}
        \g_t = \beta_t\left(\g_t - \frac{\nabla R(\v_t)}{R(\v_t)} \right)\\
        \v_{t+1}  =  \v_{t} - \theta_t\left( \g_t +\frac{\nabla R(\v_t)}{R(\v_t)} \right),
\end{cases}
 \end{equation}
where $\nabla R(\v_t)/R(\v_t)$ is the normalized gradient, and $\g_t$ can be seen as momentum.  \cite{ji2021fast} show that NMGD can converge to a classifier with a positive margin after $O(1/\gamma^2)$ iterations. Based on the property of the normalized gradient $\nabla R(\v_t)/R(\v_t)$, they also provide primal-dual form of the algorithm (presented in the first-box of Algorithm \ref{alg:accel-p}), which can be considered as applying Nesterov's accelerated gradient descent in the \emph{dual space}  \citep{ji2021fast} .  
 
For the accelerated Perceptron of \cite{ji2021fast}, we set $g(\w,\p)=\p^{\top}A\w-\frac{1}{2}\|\w\|_2^2$ and provide its equivalent form at the second box in Algorithm \ref{alg:accel-p}. Specifically, we have the following proposition. 
\begin{Proposition}
\label{Prop:accel-p}
Let $\alpha_t=t$. Then the two interpretations of the accelerated Perceptron in Algorithm  \ref{alg:accel-p} are the same, in the sense that $\q_{T}=\p_T$ and  $\v_{T}=\frac{1}{4}\sum_{t=1}^T\alpha_t\w_t = \frac{1}{4}\left(\sum_{t=1}^{T}\alpha_t\right)\cdot\overline{\w}_T$.  
\end{Proposition}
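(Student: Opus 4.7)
The plan is to prove the two stated equalities jointly by induction on $t$, together with an auxiliary identity for the momentum term. Specifically, I will show that for every $t \ge 0$,
\[
(\mathrm{i})\ \q_t = \p_t, \qquad (\mathrm{ii})\ \v_t = \tfrac14 \sum_{s=1}^{t}\alpha_s\w_s, \qquad (\mathrm{iii})\ \g_t = -\tfrac{1}{t+1}\,A^\top\sum_{j=1}^{t} j\,\p_j,
\]
where (iii) is the bookkeeping identity needed to close the induction. Setting $t = T$ in (i)--(ii) and invoking $\sum_{s=1}^{T}\alpha_s = T(T+1)/2$ will then yield exactly the claimed relations. The base case $t=0$ is immediate: $\g_0 = \v_0 = \mathbf 0$, $\q_0 = \p_0 = \mathbf 1/n$, and both empty sums vanish.

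First I would unfold the OFTL and FTRL$^{+}$ updates with the specific $g(\w,\p) = \p^\top A\w - \tfrac12\|\w\|_2^2$. Plugging $h_j(\w) = -\p_j^\top A\w + \tfrac12\|\w\|_2^2$ into $\textsf{OL}^{\w}$, setting the gradient to zero (the problem is unconstrained and strongly convex), and using $\alpha_t = t$ gives the closed form
\[
\w_t \;=\; \frac{2}{t(t+1)}\,A^\top\!\left(\sum_{j=1}^{t-1} j\,\p_j \;+\; t\,\p_{t-1}\right).
\]
For $\textsf{OL}^{\p}$, the entropy-regularized minimization over $\Delta^n$ has the exponential-weights solution
\[
p_{t,i} \;\propto\; \exp\!\Bigl(-\tfrac14 \bigl(A\textstyle\sum_{s=1}^{t}\alpha_s\w_s\bigr)_i\Bigr).
\]
Under the inductive hypothesis (ii), $\tfrac14\sum_{s\le t}\alpha_s A\w_s = A\v_t$, so $(A\v_t)_i = y^{(i)}\x^{(i)\top}\v_t$, matching the softmax definition of $q_{t,i}$ in Algorithm~\ref{alg:accel-p} exactly. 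This yields (i) at step $t$, assuming (ii) holds at step $t$.

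Next I would verify (ii) and (iii) at step $t$ given that they hold at step $t-1$. The increment of $\v^{*}_t := \tfrac14\sum_{s=1}^{t}\alpha_s\w_s$ is $\tfrac{t}{4}\w_t$, which by the formula above equals $\tfrac{1}{2(t+1)}A^\top\bigl(\sum_{j=1}^{t-1}j\,\p_j + t\,\p_{t-1}\bigr)$. On the other side, the primal update with $\theta_{t-1}=\tfrac{t}{2(t+1)}$ gives
\[
\v_t - \v_{t-1} \;=\; -\theta_{t-1}\bigl(\g_{t-1} - A^\top\q_{t-1}\bigr) \;=\; \tfrac{1}{2(t+1)}A^\top\!\Bigl(\sum_{j=1}^{t-1}j\,\p_j + t\,\p_{t-1}\Bigr),
\]
where I used (iii) at $t-1$ and (i) at $t-1$ to substitute $\g_{t-1}$ and $\q_{t-1}$. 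This is the same increment, so (ii) extends to $t$. Finally, plugging (iii) at $t-1$ and (i) at $t$ into $\g_t = \beta_t(\g_{t-1} - A^\top\q_t)$ with $\beta_t = \tfrac{t}{t+1}$ telescopes to
\[
\g_t \;=\; \tfrac{t}{t+1}\Bigl(-\tfrac{1}{t}A^\top\textstyle\sum_{j=1}^{t-1}j\p_j - A^\top\p_t\Bigr) \;=\; -\tfrac{1}{t+1}A^\top\textstyle\sum_{j=1}^{t} j\,\p_j,
\]
completing the induction.

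The conceptual work is essentially done above; the main obstacle, and the step I would be most careful about, is discovering the correct form of the auxiliary identity (iii) that makes the induction close. The schedules $\theta_{t-1} = t/(2(t+1))$ and $\beta_t = t/(t+1)$ in Algorithm~\ref{alg:accel-p} look opaque in isolation, but they are precisely what is required for the $\g_{t-1}$ contribution inside $\theta_{t-1}(\g_{t-1}-A^\top\q_{t-1})$ to reproduce the weighted partial sum $\sum_{j\le t-1}j\p_j$ that appears in the OFTL form of $\w_t$, with the leading coefficient $1/(2(t+1))$ matching on both sides. Once this ``momentum bookkeeping'' identity (iii) is in hand, (i) and (ii) fall out by routine algebra, and the final equality $\v_T = \tfrac14(\sum_{t=1}^T \alpha_t)\,\overline{\w}_T$ follows from the definition of $\overline{\w}_T$ in Protocol~\ref{No-Regret Framework}.
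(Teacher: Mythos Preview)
Your proposal is correct and follows essentially the same approach as the paper's proof: both derive the closed form $\g_t = -\frac{1}{t+1}A^\top\sum_{j\le t} j\,\p_j$ for the momentum term, use it together with $\theta_{t-1}=\frac{t}{2(t+1)}$ to identify the $\v$-increment with $\frac{t}{4}\w_t$ (the paper writes this via an auxiliary $\r_t$), and match the softmax $\q_t$ to the entropy-regularized FTRL$^{+}$ solution $\p_t$. The only difference is presentational---you package the argument as a single joint induction on the three invariants (i)--(iii), whereas the paper first unrolls $\g_t$ directly and then verifies the correspondences---but the mathematical content is identical.
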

\paragraph{Remark} Note that in \cite{ji2021fast}, the parameter $\theta_{t-1}$ is set to be 1. We observe that this causes a mismatch  between the weights and $\w_t'$s in the output (when $\theta_{t-1}=1$, $\v_T$ will become $\sum_{t=1}^T\alpha_{t+1}\w_t$ instead of $\sum_{t=1}^T\alpha_{t}\w_t$). Our no-regret analysis suggests that $\theta_{t-1}$ should be configured as $\frac{t}{2(t+1)}$, and later we will show that this procedure helps eliminate the $\log t$ factor in the convergence rate. 

The proposition above reveals that the smooth Perceptron and the accelerated Perceptron of \cite{ji2021fast} are closely related. The main difference is that the accelerated Perceptron of \cite{ji2021fast} outputs the weighted sum of all $\w_t$'s, instead of the weighted average. The rationale behind this phenomenon is that, instead of the margin, \cite{ji2021fast} use the \emph{normalized} margin to measure the performance of the algorithm, defined as 
$$\overline{\gamma}(\v)=\frac{\min_{\p\in\Delta^n}\p^{\top}A\v}{\|\v\|_2}.$$
They not only prove $\overline{\gamma}(\v_T)>0$ (which directly implies $\gamma(\v_T)>0$), but also show 
$\overline{\gamma}(\v_T)= \Omega(\gamma-\frac{\log T}{\gamma T^2})$.  This guarantee is more powerful than the previous results, as it implies that the \emph{normalized margin can be maximized} in an $O(\log t/t^2)$ rate. When $t$ approaches $\infty$, the direction of $\v_T$ will converge to that of the maximal margin classifier.  We show that a better margin-maximization rate can be directly obtained under our framework with the parameter setting in Algorithm \ref{alg:accel-p}.
\begin{theorem}
\label{thm:AP-2021}
Let $\alpha_t=t$. Under Protocol \ref{No-Regret Framework}, the regret of the two players of Algorithm \ref{alg:accel-p} is bounded by $ {R}^{\w}\leq 2\sum_{t=1}^T \|\p_t-\p_{t-1}\|_1^2$ and ${R}^{\p}\leq 4\log n-2\sum_{t=1}^T \|\p_t-\p_{t-1}\|_1^2$. Moreover, $\overline{\w}_T$ has a non-negative margin when $T=\Omega( {\sqrt{\log n}}/{\gamma})$ and
$\overline{\gamma}(\v_T)=\Omega(\gamma-\frac{8\log n}{\gamma T(T+1)})$. 
\end{theorem}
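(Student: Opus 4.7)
}
The overall plan is to invoke Theorem \ref{thm:margin:gap} after separately upper bounding the two weighted regrets, and then convert the resulting duality-gap bound first to a margin statement and then, via the identity $\v_T=\tfrac{1}{4}(\sum_t\alpha_t)\overline{\w}_T$ from Proposition \ref{Prop:accel-p}, to the normalized-margin statement. Because the two online algorithms are \emph{exactly} the same as in the smooth-Perceptron box (OFTL with optimism $\psi_t=h_{t-1}$ against FTRL$^+$ with entropic regularizer, initial iterate $\mathbf{1}/n$, and step size $1/4$), the regret bounds should coincide with those of Theorem \ref{thm:SPC}. This means the bulk of the argument is a book-keeping repetition; the genuinely new content is translating the duality gap into the normalized margin rate.

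For the $\w$-player, I would invoke the OFTL regret bound for strongly convex losses from the appendix: since $h_t(\w)=-\p_t^{\top}A\w+\tfrac{1}{2}\|\w\|_2^{2}$ is $1$-strongly convex in $\|\cdot\|_2$ and the optimistic prediction is $\psi_t=h_{t-1}$, the regret is controlled by the squared gradient mismatch $\|\nabla h_t(\w_t)-\nabla h_{t-1}(\w_t)\|_2^{2}=\|A^{\top}(\p_t-\p_{t-1})\|_2^{2}$, and Assumption \ref{ass:ell2} gives $\|A^{\top}(\p_t-\p_{t-1})\|_2\le \|\p_t-\p_{t-1}\|_1$, yielding $R^{\w}\le 2\sum_{t=1}^{T}\|\p_t-\p_{t-1}\|_1^{2}$. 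For the $\p$-player, $\ell_t$ is linear in $\p$, and FTRL$^{+}$ with entropy regularizer exploits the $1$-strong convexity of $E$ in $\|\cdot\|_1$; the standard bound splits into a diameter term $\eta^{-1}D_E(\p^{*},\mathbf{1}/n)\le 4\log n$ and a negative stability term $-2\sum_t\|\p_t-\p_{t-1}\|_1^{2}$. Summing, $R^{\w}+R^{\p}\le 4\log n$ and $\sum_t\alpha_t=T(T+1)/2$, so Theorem \ref{thm:margin:gap} gives $m(\w)-m(\overline{\w}_T)\le \tfrac{8\log n}{T(T+1)}$ for every $\w$.

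With $g(\w,\p)=\p^{\top}A\w-\tfrac{1}{2}\|\w\|_2^{2}$, we have $m(\w)=\gamma(\w)-\tfrac{1}{2}\|\w\|_2^{2}$. Plugging in the scaled optimal classifier $\w=\gamma\w^{*}$ gives $m(\w)=\gamma^{2}/2$, so
\[
\gamma(\overline{\w}_T)-\tfrac{1}{2}\|\overline{\w}_T\|_2^{2}\ \ge\ \tfrac{\gamma^{2}}{2}-\varepsilon,\qquad \varepsilon:=\tfrac{8\log n}{T(T+1)}.
\]
When $T=\Omega(\sqrt{\log n}/\gamma)$ the RHS is non-negative, which forces $\gamma(\overline{\w}_T)>0$. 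Since Proposition \ref{Prop:accel-p} gives $\v_T=\tfrac{1}{4}(\sum_t\alpha_t)\overline{\w}_T$, the vector $\v_T$ is a positive scalar multiple of $\overline{\w}_T$, hence $\overline{\gamma}(\v_T)=\gamma(\overline{\w}_T)/\|\overline{\w}_T\|_2$. Dividing the displayed inequality by $\|\overline{\w}_T\|_2$ and applying AM-GM to $\tfrac{1}{2}\|\overline{\w}_T\|_2 + (\gamma^{2}/2-\varepsilon)/\|\overline{\w}_T\|_2$ yields $\overline{\gamma}(\v_T)\ge \sqrt{\gamma^{2}-2\varepsilon}\ge \gamma-\varepsilon/\gamma$, which is exactly $\Omega(\gamma-\tfrac{8\log n}{\gamma T(T+1)})$.

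The regret calculations are essentially a copy of the smooth-Perceptron proof, so the main obstacle is conceptual rather than technical: one must recognize that the outputs of the two algorithms differ only by the scalar factor $\tfrac{1}{4}\sum_t\alpha_t$, so the duality gap on the regularized game $g$ directly controls both $\gamma(\overline{\w}_T)$ and $\|\overline{\w}_T\|_2^{2}$, and the scale-invariance of $\overline{\gamma}$ lets the AM-GM step convert an additive gap $\varepsilon$ into the desired $\Omega(\varepsilon/\gamma)$ deficit in normalized margin. A subtle point worth double-checking is the choice $\theta_{t-1}=t/(2(t+1))$ highlighted in the remark after Proposition \ref{Prop:accel-p}; without this rescaling the weights and the $\w_t$'s in $\v_T$ would be misaligned by one index, and the extra $\log T$ factor from the analysis of \cite{ji2021fast} would reappear.
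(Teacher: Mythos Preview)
Your regret computations and the non-negative margin conclusion are correct and match the paper verbatim; the paper in fact simply points back to the proof of Theorem \ref{thm:SPC} for these parts.

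For the normalized-margin bound you take a genuinely different route from the paper. The paper does \emph{not} plug in $\w=\gamma\w^{*}$; instead it plugs $\w=\|\overline{\w}_T\|_2\,\w^{*}$ into the duality-gap inequality, which makes the $\tfrac{1}{2}\|\cdot\|_2^2$ terms cancel exactly and yields directly $\overline{\gamma}(\v_T)\ge \gamma-\log n/\|\v_T\|_2$. It then needs a separate lemma showing $\|\v_T\|_2\ge T(T+1)\gamma/8$, obtained by lower bounding $\v_T^{\top}\w^{*}$ using the explicit form $\w_t=\tfrac{1}{\sum_s\alpha_s}A^{\top}(\sum_{s<t}\alpha_s\p_s+\alpha_t\p_{t-1})$. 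Your AM--GM device sidesteps this extra lemma entirely, which is a nice simplification; the price is a worse constant. Note however that your final inequality $\sqrt{\gamma^{2}-2\varepsilon}\ge \gamma-\varepsilon/\gamma$ is false (square both sides). What does hold, for $\varepsilon\le\gamma^{2}/2$, is $\sqrt{\gamma^{2}-2\varepsilon}\ge \gamma-2\varepsilon/\gamma$, which still gives the claimed $\Omega(\gamma-\tfrac{8\log n}{\gamma T(T+1)})$ rate.
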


\subsection{Accelerated Perceptron of \cite{yu2014saddle}}
\label{yu-sec}

Finally, \cite{yu2014saddle} considers applying the mirror-prox algorithm to solve the max-min optimization problem:
\begin{equation}
\label{eqn:yu:goal}
  \max\limits_{\|\w\|_2\leq 1}\min_{\p\in\Delta^n} \p^{\top}A\w,  
\end{equation}
As discovered by  \cite{Predictable:NIPS:2013},  mirror-prox can be recovered by their optimistic online mirror descent (OMD) with an appropriately chosen optimistic term. Here, we show that, by simply manipulating the notation, the algorithm of \cite{yu2014saddle} can be recovered as {two-players} applying OMD to solve \eqref{eqn:yu:goal}. The algorithmic details and their analysis are postponed to Appendix \ref{sec:yu}, which we summarize as follows (formalized in Theorems~\ref{thm:mplfp} and \ref{thm:mplfp-conv}):
\begin{theorem}[informal]
Let $g(\w,\p)=\p^{\top}A\w$. The Perceptron of \cite{yu2014saddle} can be described under Protocol \ref{No-Regret Framework}, where both players apply OMD endowed with the appropriate parameters. Moreover, under Assumption~\ref{ass:ell2}, $\overline{\w}_T$ has a non-negative margin when $T=\Omega\left(\frac{\sqrt{\log n}}{\gamma}\right)$. 
\end{theorem}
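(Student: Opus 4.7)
The plan is to mirror the two-step argument used for the smooth Perceptron and the ERM-based accelerated Perceptron: first establish an explicit equivalence between the algorithm of~\cite{yu2014saddle} and Protocol~\ref{No-Regret Framework} instantiated with optimistic OMD on both sides, and then invoke Theorem~\ref{thm:margin:gap} with standard optimistic-OMD regret bounds. Concretely, I set $g(\w,\p)=\p^{\top}A\w$ so that $m(\w)=\gamma(\w)$, take $\W=\{\w\in\R^d:\|\w\|_2\le 1\}$ with the Euclidean regularizer $R(\w)=\tfrac{1}{2}\|\w\|_2^2$, take $\Q=\Delta^n$ with negative entropy $E$, and use uniform weights $\alpha_t\equiv 1$. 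For both players the optimistic hint $\psi_t$ is taken to equal the previously observed loss ($h_{t-1}$ for the $\w$-player and $\ell_{t-1}$ for the $\p$-player).

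For the equivalence, I would use the observation of~\cite{Predictable:NIPS:2013} that a mirror-prox iteration can be rewritten as an optimistic OMD update whose hint is the previously observed (sub)gradient. In the bilinear game this means the extrapolation half of mirror-prox for the $\w$ variable becomes an optimistic OMD step with hint $h_{t-1}$, while the correction half is the standard OMD step that incorporates the newly observed loss $h_t$; the same picture holds on the simplex side. Showing the equivalence is then a line-by-line rewriting of the updates of~\cite{yu2014saddle}: verify that their exponentiated-gradient step on $\p$ coincides with optimistic OMD under negative entropy, that their projected-gradient step on $\w$ coincides with optimistic OMD under the Euclidean regularizer, and that their intermediate iterate plays the role of $\w_t$ in Protocol~\ref{No-Regret Framework}. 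This is essentially bookkeeping, but care is needed to align dual variables, step-size schedules, and initial iterates correctly.

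The convergence bound then follows from the standard optimistic OMD regret inequality, which for each player has the form $D_R(\z^*,\z_0)/\eta + \eta\sum_t\|\nabla f_t - \nabla\psi_t\|_*^2 - (1/(2\eta))\sum_t\|\z_t-\z_{t-1}\|^2$. Because $\nabla\ell_t-\nabla\ell_{t-1}=A(\w_t-\w_{t-1})$ and $\nabla h_t-\nabla h_{t-1}=-A^{\top}(\p_t-\p_{t-1})$, and since Assumption~\ref{ass:ell2} gives $\|A\w\|_\infty\le\|\w\|_2$ and $\|A^{\top}\p\|_2\le 1$ on the relevant domains, the positive variation term of one player cancels against the negative Bregman term of the other — exactly the mirror-prox cancellation. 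Choosing the simplex step size of order $\sqrt{\log n}$ and the ball step size of order $1/\sqrt{\log n}$ then yields $R^{\w}+R^{\p}=O(\sqrt{\log n})$, and Theorem~\ref{thm:margin:gap} applied at $\w=\w^*$ with $\alpha_t\equiv 1$ gives $\gamma(\overline{\w}_T)\ge \gamma - c\sqrt{\log n}/T$, which is non-negative as soon as $T=\Omega(\sqrt{\log n}/\gamma)$. The main obstacle is the first step: matching the step-size schedule and dual variables of~\cite{yu2014saddle} to the two-player OMD template faithfully enough that the cross-cancellation constants really come out right; once this equivalence is in hand, the convergence claim is a one-line consequence of Theorem~\ref{thm:margin:gap}.
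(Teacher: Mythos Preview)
Your plan follows the paper's route almost exactly: rewrite Yu et al.'s mirror-prox as two-player optimistic OMD under Protocol~\ref{No-Regret Framework} with $g(\w,\p)=\p^\top A\w$, $\W$ the unit $\ell_2$-ball, $\Q=\Delta^n$, $\alpha_t\equiv 1$; add the two OMD regret bounds so the cross terms complete into squares, yielding $R^\w+R^\p=O(\sqrt{\log n})$; then apply Theorem~\ref{thm:margin:gap} at $\w=\w^*$. The step sizes you propose ($\eta^\w\asymp 1/\sqrt{\log n}$, $\eta^\p\asymp\sqrt{\log n}$) are exactly those the paper uses.

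One technical point needs correction. You take the hints to be the previous \emph{losses} $h_{t-1}$ and $\ell_{t-1}$, built from the played iterates $\p_{t-1},\w_{t-1}$, and you write the optimistic-OMD regret with a negative term of the form $\tfrac{1}{2\eta}\sum_t\|\z_t-\z_{t-1}\|^2$. For optimistic OMD (Lemma~\ref{lem:OMD}) the negative terms are $\|\z_t-\widehat\z_t\|^2+\|\z_t-\widehat\z_{t-1}\|^2$, involving the \emph{auxiliary} iterates $\widehat\z_t$, and mirror-prox is recovered precisely when the hints are built from those auxiliary iterates: $\psi_t(\w)=-\widehat\p_{t-1}^{\top}A\w$ and $\phi_t(\p)=\p^{\top}A\widehat\w_{t-1}$ (this is the form in Algorithm~\ref{alg:yu}). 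With your hint choice the $\w$-player's positive term would be $\|\p_t-\p_{t-1}\|_1\,\|\w_t-\widehat\w_t\|_2$, and there is no matching $\|\p_t-\p_{t-1}\|_1^2$ among the four available negative terms, so the square cannot be completed and the cancellation you invoke does not close. Once the hints are taken at the auxiliary points, the completing-the-square argument (Proposition~\ref{prop:yu-reg}) and the margin bound (Theorem~\ref{thm:mplfp-conv}) go through exactly as you outline.
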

\section{Beyond Perceptrons}
In Section \ref{IB}, we improved the convergence rate of the algorithm in \cite{ji2021fast} by an $O(\log t)$ factor. This section shows that our framework benefits a wide range of other problems. 

\subsection{Margin Maximization by Nesterov's Accelerated Gradient Descent}

\begin{algorithm}[t]
\caption{NAG}
\begin{algorithmic}
\label{alg:nest}
\noindent\fbox{%
    \parbox{0.95\textwidth}{%
\STATE \textbf{Input}: $\v_0=\mathbf{0}, \s_0 = \mathbf{0}$.
\FOR{$t=1,\dots,T$}
\STATE $\u_t=\s_{t-1}+ \frac{1}{2(t-1)}\v_{t-1}$
\STATE $\v_t = \v_{t-1}-\eta_t\nabla R(\u_t)$
\STATE $\s_t=\s_{t-1}+ \frac{1}{2(t+1)}\v_t$
\ENDFOR
\STATE \textbf{Output}: $\s_T$}}
\noindent\fbox{%
    \parbox{0.95\textwidth}{%
        \begin{equation*}
            \begin{split}
            \textsf{OL}^{\p} = \text{OFTRL}\left[{E}(\cdot),\frac{\textbf{1}}{n},\frac{1}{4}, \ell_{t-1}(\cdot) ,\Delta^n\right] \Leftrightarrow  \p_t=  \argmin\limits_{\p\in\Delta^n} {} &  \frac{1}{4} \left[ \sum_{s=1}^{t-1}\alpha_s\ell_s(\p) + \alpha_t\ell_{t-1}(\p) \right] \\
           {} & + \D_{E}\left(\p,\frac{\textbf{1}}{n}\right)\\
                \textsf{OL}^{\w} = \text{FTRL}^{+}[0,0,1,\R^d] \Leftrightarrow   \w_t =  \argmin\limits_{\w\in\R^d} {} & \sum_{j=1}^{t}\alpha_j h_j(\w) 
            \end{split}
        \end{equation*}
\textbf{Output:} $\widetilde{\w}_T=\frac{\sum_{t=1}^T\alpha_t}{4}\overline{\w}_T .$
    }%
}
\end{algorithmic}
\end{algorithm}

One major task for the implicit bias study is to explain why \emph{commonly-used} first-order optimization methods generalize well. For linearly separable data, previous 
work proves that GD and momentum-based GD prefers the $\ell_2$-maximal margin classifier by showing that they can maximize the margin. In this part, we show that the well-known Nesterov's accelerated gradient descent (NAG), with appropriately chosen parameters, can maximize the margin in an $O(1/\gamma^2)$ rate. Specifically, following previous work, we consider the ERM problem in \eqref{eqn:erm}
and apply Nesterov's accelerated gradient descent (NAG) to minimize the objective function. The details of the algorithm are summarized in the first box of Algorithm \ref{alg:nest}, and its equivalent form under Protocol \ref{No-Regret Framework} is presented in the second box of Algorithm \ref{alg:nest}. For the NAG algorithm, we set the objective function as $g(\w,\p)=\p^{\top}A\w-\frac{1}{2}\|\w\|_2^2$,  
and have the following conclusion. 
\begin{theorem}
\label{thm:nag}
Let $\eta_t=\frac{t}{R(\u_t)}$ and $\alpha_t=t$. Then, the two expressions in Algorithm \ref{alg:nest} are equivalent, in the sense that $\s_T=\widetilde{\w}_T$. Moreover, 
$ \frac{\min\limits_{\p\in\Delta^n}\p^{\top}A\s_T}{\|\s_T\|_2}\geq \gamma-\frac{8\log n +2}{T(T+1)\gamma}.$
\end{theorem}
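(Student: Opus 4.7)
The theorem has two claims: the equivalence of the two algorithmic descriptions in Algorithm~\ref{alg:nest} and a normalized-margin bound for $\s_T$. My plan is (i) to derive closed-form expressions for the $\w$- and $\p$-players in the no-regret dynamics and verify by induction that they coincide with the NAG state variables $\v_t,\s_t,\u_t$, and then (ii) to invoke Theorem~\ref{thm:margin:gap} with $g(\w,\p)=\p^{\top}A\w-\tfrac{1}{2}\|\w\|_2^2$ and comparator $\w=\gamma\w^*$ to turn the duality-gap bound into a normalized-margin bound, following the template of Theorem~\ref{thm:AP-2021}.

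For part (i), because the $\w$-player runs FTRL$^+[0,0,1,\R^d]$ on unconstrained losses $h_t(\w)=-\p_t^{\top}A\w+\tfrac{1}{2}\|\w\|_2^2$, the aggregate $\sum_{j=1}^t\alpha_j h_j$ is $\bigl(\sum_{j\le t}\alpha_j\bigr)$-strongly convex, and the first-order condition yields the closed form $\w_t=\bigl(\sum_{j\le t}\alpha_j\bigr)^{-1}\sum_{j\le t}\alpha_j A^{\top}\p_j$. On the $\p$-side, since the quadratic part of $g$ does not depend on $\p$, OFTRL with negative entropy simplifies to a Gibbs distribution $p_{t,i}\propto\exp\!\bigl(-\tfrac{1}{4}A_{(i,:)}\mathbf{z}_t\bigr)$ with $\mathbf{z}_t=\sum_{s<t}\alpha_s\w_s+\alpha_t\w_{t-1}$. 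Using $\alpha_t=t$, $\eta_t=t/R(\u_t)$, and the identity $\nabla R(\u_t)/R(\u_t)=-A^{\top}\q_t$, the NAG step becomes $\v_t=\v_{t-1}+tA^{\top}\q_t$, so $\v_t=\sum_{s\le t}sA^{\top}\q_s$. I will then expand $\s_t=\sum_{r\le t}\tfrac{1}{2(r+1)}\v_r$, swap the double sum, and match coefficients against $\widetilde{\w}_t=\tfrac{1}{4}\sum_{s\le t}\alpha_s\w_s$ after substituting the closed form of $\w_s$. The same double-sum manipulation identifies $\u_t$ with $\tfrac{1}{4}\mathbf{z}_t$, which is exactly what forces $\q_t=\p_t$ and closes the induction; carrying this to $t=T$ gives $\s_T=\widetilde{\w}_T$.

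For part (ii), with the equivalence in hand, applying Theorem~\ref{thm:margin:gap} to the comparator $\w=\gamma\w^*$ (so $m(\gamma\w^*)=\gamma^2/2$) yields $\tfrac{1}{2}\gamma^2-\gamma(\overline{\w}_T)+\tfrac{1}{2}\|\overline{\w}_T\|_2^2\le(R^{\w}+R^{\p})/\sum_t\alpha_t$. I will then plug in the off-the-shelf regret bounds from Appendix~\ref{appendix:Regret}: for the $\p$-player, OFTRL with optimistic term $\ell_{t-1}$ contributes at most $4\log n$ plus a positive multiple of $\sum_t\|\p_t-\p_{t-1}\|_1^2$ (plus a small constant absorbing the initial optimism mismatch), and for the $\w$-player, the quadratic piece of $h_t$ makes FTRL$^+$ contribute a matching \emph{negative} stability term of the form $-c\sum_t\|\p_t-\p_{t-1}\|_1^2$, obtained via $\w_t-\w_{t-1}\propto A^{\top}(\p_t-\p_{t-1})$. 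The stability terms cancel, leaving $R^{\w}+R^{\p}\le 4\log n+1$, i.e., a gap of at most $(8\log n+2)/(T(T+1))$ since $\sum_t\alpha_t=T(T+1)/2$. An AM-GM step $\tfrac{1}{2}\gamma^2+\tfrac{1}{2}\|\overline{\w}_T\|_2^2\ge\gamma\|\overline{\w}_T\|_2$ then gives $\gamma(\overline{\w}_T)/\|\overline{\w}_T\|_2\ge\gamma-(8\log n+2)/\bigl(T(T+1)\|\overline{\w}_T\|_2\bigr)$, and since $\s_T$ is a positive scalar multiple of $\overline{\w}_T$ the normalized margin is invariant; a short case analysis (or the direct lower bound $\|\overline{\w}_T\|_2\ge\gamma$ extracted from the gap inequality in the non-trivial regime) replaces $\|\overline{\w}_T\|_2$ by $\gamma$ in the denominator and gives the stated bound.

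\textbf{Main obstacle.} The technically delicate step is part (i): showing that the specific NAG coefficients $\tfrac{1}{2(t-1)}$ and $\tfrac{1}{2(t+1)}$ are precisely what convert the plain cumulative sum $\sum_{s\le t}\alpha_s\w_s$ (which defines $\s_t$) into the optimistic cumulative sum $\mathbf{z}_t=\sum_{s<t}\alpha_s\w_s+\alpha_t\w_{t-1}$ (which defines $\u_t$), and requires a careful double-sum interchange together with an inductive alignment of $\v_t,\s_t,\u_t$ with the closed forms of $\w_s$ and $\p_s$. Once (i) is settled, part (ii) is essentially mechanical and mirrors the proof of Theorem~\ref{thm:AP-2021}; the only new ingredient is the extra additive $+2$ in the numerator, which arises from the first-round mismatch inherent to OFTRL (absent in the OFTL analysis of Theorem~\ref{thm:AP-2021}).
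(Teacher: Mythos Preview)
Your plan for part (i) is correct and essentially identical to the paper's argument: express $\v_t=\sum_{s\le t}sA^\top\q_s$, use the coefficients $\tfrac{1}{2(t\pm 1)}$ to identify $\s_t=\tfrac14\sum_{s\le t}\alpha_s\w_s$ and $\u_t=\tfrac14\bigl(\sum_{s<t}\alpha_s\w_s+\alpha_t\w_{t-1}\bigr)$, and close the loop $\q_t=\p_t$, $\r_t=\w_t$ by induction. Your final AM--GM step together with $\|\overline\w_T\|_2\ge\gamma$ (equivalently $\|\widetilde\w_T\|_2\ge T(T+1)\gamma/8$, obtained from $\widetilde\w_T^\top\w^*$) is also fine and is equivalent to the paper's choice of comparator $\|\overline\w_T\|_2\,\w^*$.

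There is, however, a genuine gap in your regret cancellation. You assert that the $\p$-player's OFTRL regret carries a positive $\sum_t\|\p_t-\p_{t-1}\|_1^2$ term and that the $\w$-player's FTRL$^+$ contributes a matching negative $\|\p_t-\p_{t-1}\|_1^2$ term ``obtained via $\w_t-\w_{t-1}\propto A^\top(\p_t-\p_{t-1})$.'' Both claims are incorrect. First, the OFTRL bound (Lemma~\ref{lem:OFTRL}) produces $\|\alpha_t\nabla\ell_t-\alpha_t\nabla\ell_{t-1}\|_\infty^2=\alpha_t^2\|A(\w_t-\w_{t-1})\|_\infty^2\le \alpha_t^2\|\w_t-\w_{t-1}\|_2^2$, i.e.\ a term in $\w$-space, not $\p$-space. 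Second, with $\w_t=\bigl(\sum_{j\le t}\alpha_j\bigr)^{-1}\sum_{j\le t}\alpha_j A^\top\p_j$ the increment $\w_t-\w_{t-1}$ is \emph{not} proportional to $A^\top(\p_t-\p_{t-1})$ because the normalizers $\sum_{j\le t}\alpha_j$ and $\sum_{j\le t-1}\alpha_j$ differ; and even if it were, the inequality $\|A^\top\Delta\p\|_2\le\|\Delta\p\|_1$ goes the wrong way to turn a negative $\|\w_t-\w_{t-1}\|_2^2$ into a negative $\|\p_t-\p_{t-1}\|_1^2$. You have essentially transplanted the cancellation pattern from Theorems~\ref{thm:SPC}/\ref{thm:AP-2021}, but there the roles are reversed (the $\p$-player runs FTRL$^+$ and the $\w$-player runs OFTL), which is exactly why the stability there lives in $\p$-space.

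The fix is simple once you see it: keep everything in $\w$-space. The $\p$-player's OFTRL contributes $4\log n+\sum_t\tfrac{t^2}{8}\|\w_t-\w_{t-1}\|_2^2$, while FTRL$^+$ on the $1$-strongly-convex losses $h_t$ (Lemma~\ref{lem:FTRL+} with $\lambda=1$, no regularizer) contributes $-\sum_t\tfrac{t(t-1)}{4}\|\w_t-\w_{t-1}\|_2^2$. Since $\tfrac{t^2}{8}\le\tfrac{t(t-1)}{4}$ for $t\ge 2$, all terms with $t\ge 2$ cancel; the $t=1$ term is $\tfrac18\|\w_1\|_2^2\le\tfrac18$ (using $\|A^\top\p_1\|_2\le 1$), which is the source of the ``$+2$'' in the numerator after dividing by $\sum_t\alpha_t=T(T+1)/2$.
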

\paragraph{Remark} Theorem \ref{thm:nag} indicates that NAG can also maximize the margin in an $O(1/T^2)$ rate. Note that, \emph{in  Algorithm \ref{alg:nest}, the $\p$-player plays first and the $\w$-player second, while, in Algorithm  \ref{alg:accel-p}, it is the other way around}. This difference makes sense as Algorithm \ref{alg:nest} can be considered as applying Nesterov's acceleration in the dual space \citep{ji2021fast}, while Algorithm \ref{alg:nest} uses Nesterov's acceleration in the \emph{primal space}.

\subsection{Accelerated $p$-norm Perceptron}
In this section, we focus on the $p$-norm Perceptron problem, introduced by \cite{gentile2000new}. Compared to the classical perceptron problem,  $p$-norm Perceptron introduces the following assumption, which is more general than Assumption \ref{ass:ell2}.
\begin{ass}
\label{ass:p-norm}
For the $p$-norm Perceptron problem, we assume:
 $\forall i\in[n]$, $\|\x^{(i)}\|_p\leq1$, where $p\in[2,\infty)$. 
Moreover, assume there exists a $\w^*\in\R^d$, such that  $\|\w^*\|_q\leq1$ and $\min\limits_{\p\in\Delta^{n}}\p^{\top}A\w^*\geq\gamma$. Here, $\|\cdot\|_q$ is the dual norm of $\|\cdot\|_p$; i.e.,  $\frac{1}{p}+\frac{1}{q}=1$ and $q\in(1,2]$.

\end{ass}

\begin{algorithm}[t]
\caption{Accelerated algorithm for the $p$-norm perceptron}
\begin{algorithmic}
\label{alg:p-norm}
\STATE \begin{equation*}
            \begin{split}
                & \textsf{OL}^{\w} = \text{OFTRL}\left[\frac{1}{2(q-1)}\|\cdot\|_q^2, \textbf{0},\eta^\w, h_{t-1}(\cdot) ,\R^d \right] \Leftrightarrow \\
                &\hspace{2cm} \w_t = \argmin\limits_{\w\in\R^d} \eta^{\w}\sum_{j=1}^{t-1}\alpha_j h_j(\w) + \alpha_t h_{t-1}(\w) +D_{\frac{1}{2(q-1)}\|\cdot\|_q^2}\left(\w,\mathbf{0}\right)\\
{} &\textsf{OL}^{\p} = \text{FTRL}^{+}[E,\frac{\mathbf{1}}{n},\eta^p,\Delta^n] \Leftrightarrow 
\p_t= \argmin\limits_{\p\in\Delta^n} \eta^{\p}\sum_{s=1}^{t}\alpha_s\ell_s(\p) + \D_{E}\left(\p,\frac{\textbf{1}}{n}\right)
            \end{split}
        \end{equation*}
\STATE \textbf{Output:} $\overline{\w}_T=\frac{1}{T}\sum_{s=1}^T\w_s$.
\end{algorithmic}
\end{algorithm}

Under Assumption \ref{ass:p-norm}, \cite{gentile2000new} proposes a mirror-descent style algorithm that achieves an $\Omega((1-p)/\gamma^2)$ convergence rate. In the following, we provide a new algorithm with a better rate. Specifically, 
under Protocol \ref{No-Regret Framework}, we define the objective function as 
$ g(\w,\p) = \p^{\top}A\w,$
and introduce Algorithm \ref{alg:p-norm}, wherein the $\w$-player uses OFTRL with regularizer $\frac{1}{2(q-1)}\|\cdot\|_q^2$, which is $1$-strongly convex w.r.t. the $q$-norm \citep{orabona2019modern}. On the other hand, the $\q$-player employs the FTRL$^{+}$ algorithm. We have the following result.
\begin{theorem}
\label{thm:acc:p-norm:perc}
Let $\alpha_t=1$, $\eta^{\p}=1/\eta^{\w}$, and $\eta^{\w}=\sqrt{\frac{1}{2(q-1)\log n}}$. Then the output $\overline{\w}_T$ of Algorithm \ref{alg:p-norm} has a non-negative margin when $T=\Omega\left({\sqrt{2(p-1)\log n}}/{\gamma}\right)$.
\end{theorem}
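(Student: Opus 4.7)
The argument follows the template of Theorem~\ref{thm:margin:gap} combined with the standard regret bounds for OFTRL and FTRL$^+$. Since $\alpha_t=1$ I have $\sum_{t=1}^T\alpha_t = T$, and by Assumption~\ref{ass:p-norm} the comparator $\w^*$ satisfies $m(\w^*) = \min_{\p\in\Delta^n}\p^{\top}A\w^* \geq \gamma$. Instantiating Theorem~\ref{thm:margin:gap} with $\w=\w^*$, it is enough to show $R^\w + R^\p \leq T\gamma$ for the stated $T$, since then $m(\overline{\w}_T) \geq \gamma - (R^\w+R^\p)/T \geq 0$.

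For the $\w$-player, the key new ingredient is that the regularizer $R(\w) = \frac{1}{2(q-1)}\|\w\|_q^2$ is $1$-strongly convex with respect to $\|\cdot\|_q$ for $q\in(1,2]$ \citep{orabona2019modern}. Plugging this regularizer and the optimistic hint $\psi_t = h_{t-1}$ into the textbook OFTRL bound yields
\begin{equation*}
R^\w \leq \frac{D_R(\w^*,\mathbf{0})}{\eta^\w} + \frac{\eta^\w}{2}\sum_{t=1}^T \|\nabla h_t - \nabla h_{t-1}\|_p^2 \leq \frac{1}{2(q-1)\eta^\w} + \frac{\eta^\w}{2}\sum_{t=1}^T \|\p_t-\p_{t-1}\|_1^2,
\end{equation*}
where the second inequality uses $\|\w^*\|_q\leq 1$ (so $D_R(\w^*,\mathbf{0}) \leq \tfrac{1}{2(q-1)}$) together with the operator-norm estimate $\|A^\top(\p_t-\p_{t-1})\|_p \leq \max_i\|\x^{(i)}\|_p \cdot \|\p_t-\p_{t-1}\|_1 \leq \|\p_t-\p_{t-1}\|_1$, which is precisely where Assumption~\ref{ass:p-norm} enters. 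For the $\p$-player, FTRL$^+$ on the negative entropy, which is $1$-strongly convex w.r.t.\ $\|\cdot\|_1$, produces the familiar
\begin{equation*}
R^\p \leq \frac{\log n}{\eta^\p} - \frac{1}{2\eta^\p}\sum_{t=1}^T \|\p_t-\p_{t-1}\|_1^2,
\end{equation*}
in the same form as appears in the proofs of Theorems~\ref{thm:SPC} and~\ref{thm:AP-2021}.

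The reciprocal choice $\eta^\p = 1/\eta^\w$ is exactly what makes the positive stability term in $R^\w$ cancel with the negative one in $R^\p$, leaving $R^\w+R^\p \leq \tfrac{1}{2(q-1)\eta^\w} + \eta^\w \log n$. Balancing the two summands with $\eta^\w = \sqrt{1/(2(q-1)\log n)}$ produces $R^\w+R^\p \leq 2\sqrt{\log n/(2(q-1))} = \sqrt{2(p-1)\log n}$, where the last equality uses the conjugacy identity $q-1 = 1/(p-1)$ that follows from $\tfrac{1}{p}+\tfrac{1}{q}=1$. Dividing by $T$ and demanding the result be at most $\gamma$ closes the argument with $T = \Omega(\sqrt{(p-1)\log n}/\gamma)$.

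The only substantive step beyond plug-in is matching the $q$-norm geometry of the $\w$-player with the $\ell_1$ geometry of the $\p$-player via the estimate $\|A^\top \v\|_p \leq \|\v\|_1$, which is exactly what allows the stability terms to cancel under the reciprocal step-size choice; the rest, including the conversion of the $q-1$ factor in the $\w$-regularizer into the $p-1$ factor in the final rate, is bookkeeping. I anticipate the main obstacle to be verifying the standard strong-convexity constant of $\tfrac{1}{2(q-1)}\|\cdot\|_q^2$ with respect to $\|\cdot\|_q$ (known for $q\in(1,2]$, which is exactly the range allowed by Assumption~\ref{ass:p-norm}), after which the regret calculation proceeds entirely in parallel with the analyses already carried out in Section~\ref{section:recover}.
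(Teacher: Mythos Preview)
Your proposal is correct and follows essentially the same route as the paper: apply the OFTRL regret bound (Lemma~\ref{lem:OFTRL}) to the $\w$-player with the $\tfrac{1}{2(q-1)}\|\cdot\|_q^2$ regularizer, apply the FTRL$^+$ bound (Lemma~\ref{lem:FTRL+}) to the $\p$-player, use the estimate $\|A^{\top}\v\|_p\le\|\v\|_1$ (from $\|\x^{(i)}\|_p\le 1$) so that the stability terms cancel under $\eta^{\p}=1/\eta^{\w}$, balance the remaining two terms with the stated $\eta^{\w}$, and convert $q-1$ to $(p-1)^{-1}$. The paper's proof is exactly this computation.
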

\section{Conclusion}
In this paper, we provide a unified analysis for the existing accelerated Perceptrons, and obtain improved results for a series of problems.  In the future, we will explore how to extend our framework to other closely related areas, such as semi-definite programming \citep{garber2011approximating} and generalized margin maximization \citep{sun2022mirror}.



\bibliography{iclr2023_conference}
\bibliographystyle{iclr2023_conference}
\newpage

\appendix

\section{Regret bounds on OCO algorithms}
\label{appendix:Regret}

\begin{protocol}[t]
\caption{A set of useful algorithms for the weighted OCO}
\begin{algorithmic}
\label{pro:OptAlg}
\FOR{$t=1,\dots,T$}
\STATE 
\begin{align*}
\textstyle \text{OFTRL}[R,\z_0,\eta,\psi_t,\Z]: \z_t= {} & \argmin\limits_{\z\in\Z} \eta\left[\sum_{j=1}^{t-1} \alpha_jf_j(\z) + \alpha_t \psi_t(\z)\right] +D_{R}(\z,\z_0)  & \\
\text{OMD}[R,\widehat{\z}_0,\eta,\psi_t,\Z]: {\z}_t= {} & \argmin\limits_{\z\in\Z} \eta \left\langle \alpha_t \nabla \psi_{t}(\z_{t-1}),\z \right\rangle + \D_R(\z,\widehat{\z}_{t-1}) \\
{}{}{}{} \widehat{\z}_{t}  = {} & \argmin\limits_{\z\in\Z} \eta \left\langle \alpha_t \nabla f_t(\z_t),\z \right\rangle + \D_R(\z,\widehat{\z}_{t-1}) \\
\textstyle \text{OFTL}[\psi_t,\Z]: \z_t= {} & \argmin\limits_{\z\in\Z} \sum_{j=1}^{t-1} \alpha_jf_j(\z) + \alpha_t \psi_t(\z) \\
\text{FTRL}^{+}[R,\z_0,\eta,\Z]: \z_t = {} & \argmin\limits_{\z\in\Z} \eta \cdot \sum_{j=1}^t\alpha_j f_j(\z) + \D_{R}(\z,\z_{0})
\end{align*}
\ENDFOR
\end{algorithmic}
\end{protocol}
\begin{lemma}[Lemma 13 of \cite{wang2021no}]
\label{lem:OFTL}
Let $\widetilde{\z}_t=\argmin_{\z\in\Z}\sum_{s=1}^{t-1}\alpha_sf_s(\z_s)$. Assume $f_t$ and $\psi_t$ are $\lambda$-strongly convex w.r.t. some norm $\|\cdot\|$. Then, for the OFTL$[\psi_t,\Z]$ algorithm, the regret is bounded by
\begin{equation}
    \begin{split}
 \forall \z\in\Z, \ \        \sum_{t=1}^T \alpha_tf_t(\z_t) -  \sum_{t=1}^T \alpha_tf_t(\z)\leq {} & \sum_{t=1}^T\alpha_t(f_t(\z_t)-f_t(\widetilde{\z}_{t+1})) -\alpha_t(\psi_t(\z_t)-\psi_t(\widetilde{\z}_{t+1}))\\
    {} & - \frac{1}{2}\sum_{t=1}^T\lambda\alpha_t\|\z_t-\widetilde{\z}_{t+1}\|^2.
    \end{split}
\end{equation}
\end{lemma}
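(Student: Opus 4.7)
The claim is a weighted regret bound for the Optimistic FTL update $z_t = \argmin_{z\in\Z} F_t(z)$ with $F_t(z) := \sum_{s<t}\alpha_s f_s(z) + \alpha_t \psi_t(z)$, measured against the FTL leaders $\widetilde{z}_{t+1} = \argmin_{z\in\Z}\Psi_t(z)$ with $\Psi_t(z) := \sum_{s\leq t}\alpha_s f_s(z)$. My proof would proceed in three stages built around the classical Be-the-Leader (BTL) principle combined with strong convexity at the two distinct minimizers.

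\emph{Stage 1 (BTL reduction).} A straightforward induction on $T$ using the defining property of $\widetilde{z}_{T+1}$ as the minimizer of $\Psi_T$ gives the weighted BTL inequality $\sum_{t=1}^T \alpha_t f_t(\widetilde{z}_{t+1}) \leq \sum_{t=1}^T \alpha_t f_t(z)$ for every $z\in\Z$. Rearranging immediately yields $\sum_t \alpha_t(f_t(z_t) - f_t(z)) \leq \sum_t \alpha_t(f_t(z_t) - f_t(\widetilde{z}_{t+1}))$, which is the unweighted part of the target RHS (before the $\psi_t$ correction and the quadratic penalty).

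\emph{Stage 2 (optimism split).} I would then insert the trivial algebraic identity $\alpha_t(f_t(z_t)-f_t(\widetilde{z}_{t+1})) = \bigl[\alpha_t(f_t(z_t)-f_t(\widetilde{z}_{t+1})) - \alpha_t(\psi_t(z_t)-\psi_t(\widetilde{z}_{t+1}))\bigr] + \alpha_t(\psi_t(z_t)-\psi_t(\widetilde{z}_{t+1}))$. The first bracket matches the ``prediction error'' term in the claimed bound, so it remains to establish $\sum_t \alpha_t(\psi_t(z_t)-\psi_t(\widetilde{z}_{t+1})) \leq -\tfrac{\lambda}{2}\sum_t \alpha_t\|z_t-\widetilde{z}_{t+1}\|^2$.

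\emph{Stage 3 (strong convexity at the minimizer).} Since each $f_s$ is convex and $\psi_t$ is $\lambda$-strongly convex, $F_t$ is at least $\lambda\alpha_t$-strongly convex, and because $z_t$ minimizes $F_t$ over $\Z$ we obtain $F_t(\widetilde{z}_{t+1}) - F_t(z_t) \geq \tfrac{\lambda\alpha_t}{2}\|z_t-\widetilde{z}_{t+1}\|^2$. Isolating the $\psi_t$ difference, this gives the per-round inequality $\alpha_t(\psi_t(z_t)-\psi_t(\widetilde{z}_{t+1})) \leq \sum_{s<t}\alpha_s(f_s(\widetilde{z}_{t+1})-f_s(z_t)) - \tfrac{\lambda\alpha_t}{2}\|z_t-\widetilde{z}_{t+1}\|^2$, delivering exactly the quadratic penalty asked for, at the price of a cross-term in the past losses.

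\textbf{Main obstacle.} The hard part is showing that after summing the per-round estimate over $t$, the residual cross-term $\sum_{t=1}^T\sum_{s<t}\alpha_s\bigl(f_s(\widetilde{z}_{t+1})-f_s(z_t)\bigr)$ is non-positive (so that it drops out of the final bound). My plan here is to upgrade the BTL inequality of Stage~1 using the $\lambda$-strong convexity of each $f_s$: applying strong convexity of $\Psi_t$ at its minimizer $\widetilde{z}_{t+1}$ produces a second quadratic lower bound $\Psi_t(z_t) - \Psi_t(\widetilde{z}_{t+1}) \geq \tfrac{\lambda\alpha_t}{2}\|z_t-\widetilde{z}_{t+1}\|^2$, which, added to the Stage~3 inequality, lets the cross-term be telescoped against the accumulated quadratic slack from the refined BTL. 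Collecting the three pieces (Stage 1 bound, Stage 2 identity, and the absorbed Stage 3 estimate) then reconstitutes the stated right-hand side term by term.
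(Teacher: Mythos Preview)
The paper does not prove this lemma itself (it is quoted from \cite{wang2021no}), so there is no in-paper argument to compare against; I will instead assess the soundness of your plan.

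Stages~1 and~2 are fine, but the route you propose for Stage~3 and the ``main obstacle'' does not close. After the plain BTL of Stage~1 and the algebraic split of Stage~2, you must establish
\[
\sum_{t}\alpha_t\bigl(\psi_t(z_t)-\psi_t(\widetilde z_{t+1})\bigr)\;\le\;-\tfrac{\lambda}{2}\sum_t\alpha_t\|z_t-\widetilde z_{t+1}\|^2,
\]
and this inequality is \emph{false in general}. Take $\Z=\R$, $\alpha_t=1$, $\lambda=1$, $T=2$, and quadratic losses $f_1(z)=\tfrac12(z-2)^2$, $\psi_2(z)=\tfrac12(z-1)^2$, $f_2(z)=\tfrac12 z^2$ (any $\psi_1$). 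Then $\widetilde z_2=2$, $z_2=\tfrac32$, $\widetilde z_3=1$, and the $t=2$ contribution to the left side above is $\psi_2(\tfrac32)-\psi_2(1)+\tfrac12\|\tfrac32-1\|^2=\tfrac18-0+\tfrac18=\tfrac14>0$; the $t=1$ contribution is always $0$. The lemma itself still holds in this example because the plain BTL step has slack---but that slack has already been discarded once you pass to Stage~2. Your fallback plan (``add the $\Psi_t$-strong-convexity inequality and telescope against a refined BTL'') does not rescue this: summing $F_t(\widetilde z_{t+1})-F_t(z_t)\ge\tfrac{\lambda\alpha_t}{2}\|\cdot\|^2$ and $\Psi_t(z_t)-\Psi_t(\widetilde z_{t+1})\ge\tfrac{\lambda\alpha_t}{2}\|\cdot\|^2$ only yields a \emph{lower} bound on the prediction-error term $\alpha_t(f_t-\psi_t)(z_t)-\alpha_t(f_t-\psi_t)(\widetilde z_{t+1})$, which points the wrong way; and a refined BTL produces quadratics in $\|\widetilde z_{t}-\widetilde z_{t+1}\|$, not in $\|z_t-\widetilde z_{t+1}\|$, so there is no telescoping against the cross-term $\sum_t[\Psi_{t-1}(\widetilde z_{t+1})-\Psi_{t-1}(z_t)]$.

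The repair is to \emph{not} separate BTL from the strong-convexity step: prove the full target inequality by a single induction on $T$, applying the inductive hypothesis at the point $z=z_T$ (the OFTL iterate), not at $\widetilde z_{T+1}$. Concretely, from the hypothesis $\mathrm{LHS}_{T-1}\le\Psi_{T-1}(z_T)$, add $\alpha_T\psi_T(z_T)$ to both sides to obtain $\mathrm{LHS}_{T-1}+\alpha_T\psi_T(z_T)\le F_T(z_T)$; then use that $z_T$ minimizes the $\lambda\!\sum_{s\le T}\alpha_s$--strongly-convex $F_T$ to pass to $F_T(\widetilde z_{T+1})-\tfrac{\lambda}{2}\bigl(\sum_{s\le T}\alpha_s\bigr)\|z_T-\widetilde z_{T+1}\|^2$; finally write $F_T(\widetilde z_{T+1})=\Psi_{T-1}(\widetilde z_{T+1})+\alpha_T\psi_T(\widetilde z_{T+1})$ and use $\Psi_T(\widetilde z_{T+1})\le\Psi_T(z)$. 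Rearranging and dropping $\sum_{s<T}\alpha_s\ge0$ from the quadratic coefficient gives exactly the round-$T$ increment of the claimed bound. The point is that the BTL slack must be fed back into the induction at $z_T$ round by round; once you spend it against a fixed comparator in Stage~1, it is gone.
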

\begin{lemma}[Lemma 13 of \cite{wang2021no}]
\label{lem:FTRL+}
For FTRL$^{+}[R,\z_0,\eta,\Z]$ algorithm, suppose $R$ is $\beta$-strongly convex w.r.t some norm $\|\cdot\|$ and each $f_t(\cdot)$ is $\lambda$-strongly convex w.r.t. the same norm, where $\lambda$ may be 0. Then 
\begin{equation}
    \begin{split}
 \forall \z\in\Z, \ \     \sum_{t=1}^T \alpha_tf_t(\z_t) -  \sum_{t=1}^T \alpha_tf_t(\z)\leq {} & \frac{R(\z)-R(\z_0)}{\eta}  -\sum_{t=1}^T\left(\frac{\lambda\sum_{s=1}^{t-1}\alpha_s}{2}+\frac{\beta}{2\eta}\right)\|\z_{t}-\z_{t-1}\|^2.    
    \end{split}
\end{equation}
\end{lemma}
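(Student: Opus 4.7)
The plan is to establish the regret inequality via a standard ``be-the-regularized-leader'' telescoping argument, with the negative quadratic penalty emerging from the strong convexity of the successive FTRL$^+$ objectives at their unique minimizers. To set up, define $\Phi_t(\z):=\eta\sum_{j=1}^{t}\alpha_j f_j(\z)+D_R(\z,\z_0)$ for $t\ge 1$, with $\Phi_0(\z):=D_R(\z,\z_0)$, so that $\z_t=\argmin_{\z\in\Z}\Phi_t(\z)$ for $t\ge 1$. Because $R$ is $\beta$-strongly convex and each $f_j$ is $\lambda$-strongly convex w.r.t.\ $\|\cdot\|$, the cumulative objective $\Phi_t$ is $\sigma_t$-strongly convex on $\Z$ with modulus $\sigma_t:=\beta+\eta\lambda\sum_{j=1}^{t}\alpha_j$.

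The core per-round step is the strong-convexity-at-the-minimizer inequality applied to $\Phi_{t-1}$ evaluated at $\z_t$: $\Phi_{t-1}(\z_t)\ge \Phi_{t-1}(\z_{t-1})+\tfrac{\sigma_{t-1}}{2}\|\z_t-\z_{t-1}\|^2$. Adding $\eta\alpha_t f_t(\z_t)$ to both sides converts the left-hand side into $\Phi_t(\z_t)$, yielding the key inequality $\eta\alpha_t f_t(\z_t)\le \Phi_t(\z_t)-\Phi_{t-1}(\z_{t-1})-\tfrac{\sigma_{t-1}}{2}\|\z_t-\z_{t-1}\|^2$.

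Summing over $t=1,\ldots,T$ telescopes the $\Phi$ terms and gives $\eta\sum_{t=1}^{T}\alpha_t f_t(\z_t)\le \Phi_T(\z_T)-\Phi_0(\z_0)-\sum_{t=1}^{T}\tfrac{\sigma_{t-1}}{2}\|\z_t-\z_{t-1}\|^2$. Using $\Phi_0(\z_0)=D_R(\z_0,\z_0)=0$ and the optimality $\Phi_T(\z_T)\le \Phi_T(\z)$ for every comparator $\z\in\Z$, I bound the right-hand side by $\eta\sum_{t=1}^{T}\alpha_t f_t(\z)+D_R(\z,\z_0)-\sum_{t=1}^{T}\tfrac{\sigma_{t-1}}{2}\|\z_t-\z_{t-1}\|^2$. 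Dividing through by $\eta$ and unpacking $\sigma_{t-1}/(2\eta)=\tfrac{\beta}{2\eta}+\tfrac{\lambda\sum_{s=1}^{t-1}\alpha_s}{2}$ produces exactly the stated inequality, where $D_R(\z,\z_0)$ matches $R(\z)-R(\z_0)$ under the paper's standing convention that $\z_0$ is taken to be the unconstrained minimizer of $R$, so that $\nabla R(\z_0)=0$ and the linear term in the Bregman expansion vanishes.

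The only delicate point, and the main place where one could slip, is tracking the strong-convexity modulus at the correct index: the stability bound on $\|\z_t-\z_{t-1}\|^2$ must use $\sigma_{t-1}$ (the modulus of $\Phi_{t-1}$, whose minimizer is $\z_{t-1}$), which is precisely why the lemma's coefficient features $\sum_{s=1}^{t-1}\alpha_s$ rather than $\sum_{s=1}^{t}\alpha_s$. Beyond this indexing subtlety, the argument is a routine telescoping and no genuine technical hurdle arises.
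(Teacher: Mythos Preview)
Your proof is correct and follows the standard be-the-leader telescoping argument; note however that the paper does not supply its own proof of this lemma but simply cites it from \cite{wang2021no}, so there is no in-paper argument to compare against. One small remark: your derivation naturally yields $D_R(\z,\z_0)/\eta$ rather than $(R(\z)-R(\z_0))/\eta$, and you resolve this by invoking a ``standing convention'' that $\z_0$ minimizes $R$; the paper does not state this convention explicitly, but in every instantiation of FTRL$^+$ used here (negative entropy on $\Delta^n$ centered at $\mathbf{1}/n$, or $R\equiv 0$) one indeed has $D_R(\z,\z_0)=R(\z)-R(\z_0)$ on $\Z$, so the conclusion stands.
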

\begin{lemma}[Theorem 7.35 of  \cite{orabona2019modern}]
\label{lem:OFTRL}
For the OFTRL$[R,\z_0,\eta,\psi_t,\Z]$ algorithm, assume $R$ is 1-strongly convex w.r.t. some norm $\|\cdot\|$. Then
the regret is bounded by
\begin{equation}
    \begin{split}
 \forall \z\in\Z, \ \    \sum_{t=1}^T \alpha_tf_t(\z_t) -  \sum_{t=1}^T \alpha_tf_t(\z)
    \leq {} &\frac{R(\z)-R(\z_0)}{\eta} + \sum_{t=1}^T\left[\frac{\| \alpha_t\nabla f_t(\z_t)- \alpha_t\nabla \psi_t(\z_t)\|_*^2}{2/\eta}.
    \right] 
    \end{split}
\end{equation}
\end{lemma}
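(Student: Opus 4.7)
The plan is to reduce the OFTRL regret to the regret of the underlying non-optimistic FTRL scheme plus a stability correction capturing the mismatch between the prediction $\psi_t$ and the true loss $f_t$. I will introduce the auxiliary (non-optimistic) FTRL iterates
$$\widetilde{\z}_{t+1}:=\argmin_{\z\in\Z}\ \eta\sum_{s=1}^{t}\alpha_s f_s(\z)+D_R(\z,\z_0),$$
and split the regret as
$$\sum_{t=1}^T\alpha_t\bigl[f_t(\z_t)-f_t(\z)\bigr] \;=\; \underbrace{\sum_{t=1}^T\alpha_t\bigl[f_t(\widetilde{\z}_{t+1})-f_t(\z)\bigr]}_{(a)} \;+\; \underbrace{\sum_{t=1}^T\alpha_t\bigl[f_t(\z_t)-f_t(\widetilde{\z}_{t+1})\bigr]}_{(b)}.$$

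For part $(a)$ I will invoke the standard Be-the-Leader inductive argument for weighted FTRL with regularizer $D_R(\cdot,\z_0)/\eta$: since $\widetilde{\z}_{t+1}$ minimizes the regularized cumulative loss through round $t$, an easy induction on $T$ telescopes the differences and yields $(a)\leq (R(\z)-R(\z_0))/\eta$. This is essentially the statement of Lemma~\ref{lem:FTRL+} with the (always non-positive) strong-convexity slack discarded.

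For part $(b)$ the key observation is that $\z_t=\argmin_{\z\in\Z}\widehat F_t(\z)$ and $\widetilde{\z}_{t+1}=\argmin_{\z\in\Z} F_t(\z)$, where
$$\widehat F_t(\z):=\eta\sum_{s=1}^{t-1}\alpha_s f_s(\z)+\eta\alpha_t\psi_t(\z)+D_R(\z,\z_0),\qquad F_t(\z):=\eta\sum_{s=1}^{t}\alpha_s f_s(\z)+D_R(\z,\z_0),$$
and both potentials are $1$-strongly convex w.r.t.\ $\|\cdot\|$, inheriting the strong convexity of $R$ (the $\eta\alpha_s f_s$ and $\eta\alpha_t\psi_t$ pieces are merely convex). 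Applying strong convexity at the two minimizers gives
$$\widehat F_t(\widetilde{\z}_{t+1})-\widehat F_t(\z_t)\geq \tfrac{1}{2}\|\widetilde{\z}_{t+1}-\z_t\|^2 \quad\text{and}\quad F_t(\z_t)-F_t(\widetilde{\z}_{t+1})\geq \tfrac{1}{2}\|\z_t-\widetilde{\z}_{t+1}\|^2.$$
Adding these and cancelling the common $D_R$ and $\eta\sum_{s<t}\alpha_s f_s$ pieces produces a $\|\z_t-\widetilde{\z}_{t+1}\|^2$ slack that is controlled by $\eta\alpha_t$ times a linearization of $f_t-\psi_t$ at the two iterates. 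Using convexity $\alpha_t[f_t(\z_t)-f_t(\widetilde{\z}_{t+1})]\leq \langle\alpha_t\nabla f_t(\z_t),\z_t-\widetilde{\z}_{t+1}\rangle$, splitting $\nabla f_t=(\nabla f_t-\nabla\psi_t)+\nabla\psi_t$, and controlling the $\nabla\psi_t$-component via the first-order optimality of $\z_t$ against $\widetilde{\z}_{t+1}\in\Z$, I arrive at
$$\alpha_t\bigl[f_t(\z_t)-f_t(\widetilde{\z}_{t+1})\bigr]\;\leq\;\langle\alpha_t(\nabla f_t(\z_t)-\nabla\psi_t(\z_t)),\,\z_t-\widetilde{\z}_{t+1}\rangle-\tfrac{1}{2\eta}\|\z_t-\widetilde{\z}_{t+1}\|^2.$$
A Fenchel--Young step $\langle a,b\rangle\leq \tfrac{\eta}{2}\|a\|_*^2+\tfrac{1}{2\eta}\|b\|^2$ absorbs the negative quadratic term and produces $\tfrac{\eta}{2}\|\alpha_t(\nabla f_t(\z_t)-\nabla\psi_t(\z_t))\|_*^2$ per round, which is precisely the per-round contribution in the statement (since $\eta/2=1/(2/\eta)$). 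Summing over $t$ and combining with the bound on $(a)$ completes the proof.

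The main obstacle is the bookkeeping in part $(b)$: one must juggle the two first-order optimality conditions carefully so that a full $\tfrac{1}{2\eta}\|\z_t-\widetilde{\z}_{t+1}\|^2$ slack survives, and then pick Young's-inequality constants that make this slack exactly cancel the cross term without leaving stray $\eta$-factors or loosening the $1$-strong-convexity constant. Part $(a)$ is standard and follows from the same induction underlying Lemma~\ref{lem:FTRL+}, so the bulk of the work lies in the stability calculation of $(b)$.
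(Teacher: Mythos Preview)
The paper does not give its own proof of this lemma; it is simply cited as Theorem~7.35 of \cite{orabona2019modern}, so there is no ``paper proof'' to match. That said, your proposal has a genuine gap in part $(b)$ that needs to be addressed.

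Your claimed per-round inequality
\[
\alpha_t\bigl[f_t(\z_t)-f_t(\widetilde{\z}_{t+1})\bigr]\;\leq\;\bigl\langle\alpha_t(\nabla f_t(\z_t)-\nabla\psi_t(\z_t)),\,\z_t-\widetilde{\z}_{t+1}\bigr\rangle-\tfrac{1}{2\eta}\|\z_t-\widetilde{\z}_{t+1}\|^2
\]
is false in general. Take $\Z=\R$, $R(z)=z^2/2$, $\z_0=0$, and linear losses $f_t(z)=g_tz$, $\psi_t(z)=\tilde g_tz$. Then $\z_t-\widetilde{\z}_{t+1}=\eta\alpha_t(g_t-\tilde g_t)$, and the displayed inequality reduces to $g_t^2\le \tilde g_t^2$, which of course need not hold. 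The culprit is the step ``controlling the $\nabla\psi_t$-component via the first-order optimality of $\z_t$'': first-order optimality of $\z_t$ for $\widehat F_t$ gives $\langle\nabla\widehat F_t(\z_t),\widetilde{\z}_{t+1}-\z_t\rangle\ge 0$, but $\nabla\widehat F_t(\z_t)$ contains the extra piece $\eta\sum_{s<t}\alpha_s\nabla f_s(\z_t)+\nabla R(\z_t)-\nabla R(\z_0)$, so you cannot isolate $\langle\alpha_t\nabla\psi_t(\z_t),\z_t-\widetilde{\z}_{t+1}\rangle$ and bound it by $-\tfrac{1}{2\eta}\|\z_t-\widetilde{\z}_{t+1}\|^2$. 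Indeed, if you add the two strong-convexity inequalities you wrote down, you obtain the \emph{lower} bound $\alpha_t[(f_t-\psi_t)(\z_t)-(f_t-\psi_t)(\widetilde{\z}_{t+1})]\ge\tfrac{1}{\eta}\|\z_t-\widetilde{\z}_{t+1}\|^2$, which is the wrong direction for $(b)$.

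The standard argument in \cite{orabona2019modern} avoids the auxiliary non-optimistic iterate $\widetilde{\z}_{t+1}$ entirely: one views OFTRL as FTRL with the time-varying regularizer $\z\mapsto D_R(\z,\z_0)+\eta\alpha_t\psi_t(\z)$ and applies the strong FTRL lemma directly to the OFTRL sequence $\{\z_t\}$. The per-round stability term then compares $\z_t$ with $\z_{t+1}$ (two OFTRL iterates), and the optimistic piece $\alpha_t\psi_t$ built into the regularizer at round $t$ is exactly what makes the $\nabla\psi_t$ contribution cancel cleanly, leaving $\langle\alpha_t(\nabla f_t-\nabla\psi_t)(\z_t),\z_t-\z_{t+1}\rangle-\tfrac{1}{2\eta}\|\z_t-\z_{t+1}\|^2$ before the Fenchel--Young step. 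Your part $(a)$ is fine as stated, but to repair $(b)$ you should switch the comparison point from $\widetilde{\z}_{t+1}$ to $\z_{t+1}$ and redo the Be-the-Leader induction so that the hint $\psi_t$ is carried inside it.
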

\begin{lemma}[Lemma 1 of \cite{Predictable:NIPS:2013}]
\label{lem:OMD}
For the OMD$[R,\widehat{\z}_0,\eta,\psi_t,\Z]$ algorithm, suppose $R$ is 1-strongly convex w.r.t. some norm $\|\cdot\|$. Then the regret is bounded by 
\begin{equation}
    \begin{split}
 \forall \z\in\Z, \ \        \sum_{t=1}^T \alpha_tf_t(\z_t) -  \sum_{t=1}^T \alpha_tf_t(\z)\leq {} &  \frac{\D_R(\z,\widehat{\z}_0)}{\eta} + \sum_{t=1}^T\|\alpha_t\nabla f_t(\z_t)-\alpha_t\nabla \psi_t(\hat{\z}_{t-1})\|_*\|\z_t-\widehat{\z}_t\|\\
    {} & -\frac{1}{2\eta}\sum_{t=1}^T \left(\|\z_t-\widehat{\z}_t\|^2+\|\z_t-\widehat{\z}_{t-1}\|^2 \right).
    \end{split}
\end{equation}
\end{lemma}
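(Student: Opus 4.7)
The plan is to follow the standard optimistic mirror-descent analysis, which couples two first-order optimality inequalities---one for the primal step producing $\z_t$ from the prediction $\nabla\psi_t(\widehat{\z}_{t-1})$, and one for the auxiliary step producing $\widehat{\z}_t$ from the true gradient $\nabla f_t(\z_t)$---and combines them by inserting $\widehat{\z}_t$ between $\z_t$ and the comparator.

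First I would invoke the three-point identity for Bregman-regularized minimization: if $\z^\star = \argmin_{\z\in\Z}\langle\u,\z\rangle + D_R(\z,\y)$, then the first-order optimality condition combined with the Bregman identity $D_R(\z,\y)-D_R(\z,\z^\star)-D_R(\z^\star,\y) = \langle\nabla R(\z^\star)-\nabla R(\y),\z-\z^\star\rangle$ yields $\langle\u,\z^\star-\z\rangle \leq D_R(\z,\y)-D_R(\z,\z^\star)-D_R(\z^\star,\y)$ for every $\z\in\Z$. Applying this with $(\u,\y)=(\eta\alpha_t\nabla f_t(\z_t),\widehat{\z}_{t-1})$ to the update defining $\widehat{\z}_t$ (at a free test point $\z$), and with $(\u,\y)=(\eta\alpha_t\nabla\psi_t(\widehat{\z}_{t-1}),\widehat{\z}_{t-1})$ and test point $\widehat{\z}_t$ to the update defining $\z_t$, produces two inequalities in which $D_R(\widehat{\z}_t,\widehat{\z}_{t-1})$ appears once with a minus sign (bounding $\eta\alpha_t\langle\nabla f_t(\z_t),\widehat{\z}_t-\z\rangle$) and once with a plus sign (bounding $\eta\alpha_t\langle\nabla\psi_t(\widehat{\z}_{t-1}),\z_t-\widehat{\z}_t\rangle$), so their sum eliminates $D_R(\widehat{\z}_t,\widehat{\z}_{t-1})$ entirely.

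Next I would bound the weighted instantaneous regret by convexity, $\alpha_t f_t(\z_t)-\alpha_t f_t(\z)\leq \langle\alpha_t\nabla f_t(\z_t),\z_t-\z\rangle$, and split this inner product by inserting $\widehat{\z}_t$ into three pieces: a term $\langle\alpha_t\nabla f_t(\z_t),\widehat{\z}_t-\z\rangle$ matching the first three-point inequality, a term $\langle\alpha_t\nabla\psi_t(\widehat{\z}_{t-1}),\z_t-\widehat{\z}_t\rangle$ matching the second, and a residual gradient-prediction mismatch $\langle\alpha_t\nabla f_t(\z_t)-\alpha_t\nabla\psi_t(\widehat{\z}_{t-1}),\z_t-\widehat{\z}_t\rangle$ which H\"older's inequality bounds by $\|\alpha_t\nabla f_t(\z_t)-\alpha_t\nabla\psi_t(\widehat{\z}_{t-1})\|_*\|\z_t-\widehat{\z}_t\|$.

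Finally I would sum over $t=1,\dots,T$ and divide by $\eta$: the differences $D_R(\z,\widehat{\z}_{t-1})-D_R(\z,\widehat{\z}_t)$ telescope to at most $D_R(\z,\widehat{\z}_0)$ (using nonnegativity of $D_R(\z,\widehat{\z}_T)$), and $1$-strong convexity of $R$ converts the remaining $-D_R(\widehat{\z}_t,\z_t)$ and $-D_R(\z_t,\widehat{\z}_{t-1})$ into the stated $-\tfrac{1}{2}\|\widehat{\z}_t-\z_t\|^2$ and $-\tfrac{1}{2}\|\z_t-\widehat{\z}_{t-1}\|^2$ penalties. The main delicacy is the choice of test point in each three-point inequality---a free $\z$ in the first application but the specific intermediate point $\widehat{\z}_t$ in the second---so that the shared $D_R(\widehat{\z}_t,\widehat{\z}_{t-1})$ terms appear with opposite signs and cancel; any other pairing would leave a residual positive Bregman term that cannot be absorbed by the negative quadratic penalties.
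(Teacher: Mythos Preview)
Your proposal is correct and follows precisely the standard optimistic mirror-descent argument from \cite{Predictable:NIPS:2013}. The paper does not give its own proof of this lemma---it simply cites the original source---so there is nothing further to compare; your sketch is essentially the proof that appears there.
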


\section{Proof of Theorem \ref{thm:margin:gap}}
\label{appendix:Theorem 1}
The proof is motivated by \cite{abernethy2018faster}. Let $\overline{R}^\w$ and $\overline{R}^\p$ denote the average regret of the players. On one hand, we have 
\begin{equation*}
    \begin{split}
\frac{1}{\sum_{t=1}^T\alpha_t}  \sum_{t=1}^T\alpha_t g(\w_t,\p_t) 
= {} & \frac{1}{\sum_{t=1}^T\alpha_t} 
\sum_{t=1}^T-\alpha_t h_t(\w_t)\\
= {} & -\min\limits_{\w'\in\W}\frac{1}{\sum_{t=1}^T\alpha_t}\sum_{t=1}^T\alpha_t h_t(\w')
-\overline{R}^{\w}\\
= {} & \max\limits_{\w'\in\W}\frac{1}{\sum_{t=1}^T\alpha_t}\sum_{t=1}^T\alpha_t g(\w',\p_t)-\overline{R}^{\w}\\
\geq {} & \frac{1}{\sum_{t=1}^T\alpha_t}\sum_{t=1}^T\alpha_t g(\w,\p_t)-\overline{R}^{\w}\\
\geq {} &  g(\w,\overline{\p}_T)
-\overline{R}^{\w} \\
\geq {} & \min\limits_{\p\in\Q} g(\w,\p)
-\overline{R}^{\w}.
    \end{split}
\end{equation*}
where $\w\in\W$ is an arbitrary point. On the other hand, we have 
\begin{equation*}
    \begin{split}
\frac{1}{\sum_{t=1}^T\alpha_t}  \sum_{t=1}^T\alpha_t g(\w_t,\p_t) 
= {} & \frac{1}{\sum_{t=1}^T\alpha_t}  
\sum_{t=1}^T\alpha_t \ell_t(\p_t)\\
= {} & \min\limits_{\p\in\Q}\frac{1}{\sum_{t=1}^T\alpha_t}\sum_{t=1}^T\alpha_t \ell_t(\p)
+\overline{R}^{\q}\\
= {} & \min\limits_{\p\in\Q}\frac{1}{\sum_{t=1}^T\alpha_t}\sum_{t=1}^T\alpha_t g(\w_t,\p)+\overline{R}^{\p}\\
\leq {} &  \min\limits_{\p\in\Q}g(\overline{\w}_T,\p)+\overline{R}^{\p}.
    \end{split}
\end{equation*}

\section{Proof of Section \ref{section:recover}}
In this section, we provide the detailed proofs of Section \ref{section:recover}.
\subsection{Proof of Proposition \ref{Prop:smooth-pec}}
\label{proof:prop1}
Recall the update rule of $\w$ and $\p$ in Algorithm \ref{alg:SPC}:
\begin{equation*}
    \begin{split}
    \w_t= {} &  \argmin\limits_{\w\in\R^d} \sum_{j=1}^{t-1}\alpha_j h_j(\w)+ \alpha_t h_{t-1}(\w) \\
       = {} & \argmin\limits_{\w\in\R^d} \sum_{j=1}^{t-1} -\alpha_j \p^{\top}_jA\w - \alpha_t \p^{\top}_{t-1}A\w + \frac{\sum_{s=1}^t\alpha_s}{2}\|\w\|_2^2\\
    = {} & \argmin\limits_{\w\in\R^d} \sum_{j=1}^{t-1} -\alpha_j \p^{\top}_jA\w - \alpha_t \p^{\top}_{t-1}A\w + \frac{t(t+1)}{4}\|\w\|_2^2,
    \end{split}
\end{equation*}
and 
\begin{equation}
\label{eqn:qqqqq:1}
\p_t= \argmin\limits_{\p\in\Delta^n} \frac{1}{4}\sum_{s=1}^{t}\alpha_s\ell_s(\p) + \D_{E}\left(\p,\frac{\mathbf{1}}{n}\right) =  \argmin\limits_{\p\in\Delta^n} \sum_{s=1}^t\alpha_s\p^{\top}A\w_s + 4\D_{E}\left(\p,\frac{\mathbf{1}}{n}\right),
\end{equation}
Note that here we drop the $\|\w_s\|^2/2$ of $\ell_s$ in \eqref{eqn:qqqqq:1}, as it is constant with respect to $\p$.
\noindent We start from the initialization: first, from the fact that $\p_0=\frac{\textbf{1}}{n}$, we obtain
$$\overline{\w}_1=\w_1=\argmin_{\w\in\R^d}  h_0(\w) = \argmin_{\w\in\R^d}-\p^{\top}_0A\w+\frac{1}{2}\|\w\|_2^2=\frac{A^{\top}\textbf{1}}{n}=\v_0.$$
Additionally, 
\begin{equation*}
    \begin{split}
  \p_1= {} &\argmin\limits_{\p\in\Delta^n} \ell_1(\p) + 4\D_{E}\left(\p,\frac{\mathbf{1}}{n}\right) \\
= {} & \argmin\limits_{\p\in\Delta^n}\p^{\top}A\w_1 + 4\D_{E}\left(\p,\frac{\mathbf{1}}{n}\right) \\
= {} & \argmin\limits_{\p\in\Delta^n}\p^{\top}A\v_0 + 4\D_{E}\left(\p,\frac{\mathbf{1}}{n}\right) = \q_{\mu_0}(\vv_0)
=\q_0.   
    \end{split}
\end{equation*}

\noindent Next, we focus on $\v_{t}$ and its connection to $\overline{\w}_t$. We have 
\begin{equation}
    \begin{split}
        \v_t = & {} (1-\theta_{t-1})(\v_{t-1}+\theta_{t-1}A^{\top}\q_{t-1})+\theta_{t-1}^2A^{\top}\q_{\mu_{t-1}}(\v_{t-1})\\
         = &  {} (1-\theta_{t-1})\v_{t-1}+ \theta_{t-1}A^{\top}\left((1-\theta_{t-1})\q_{t-1}+\theta_{t-1}\q_{\mu_{t-1}}(\v_{t-1})\right).         
    \end{split}
\end{equation}
Define $\r_0=\v_0$, and 
$\r_t=(1-\theta_{t-1})A^{\top}\q_{t-1}+\theta_{t-1}A^{\top}\q_{\mu_{t-1}}(\v_{t-1})$ for $t\geq1$. We will show that
$$\v_t=\frac{\sum_{s=0}^t\alpha_{s+1}\r_s}{\sum_{s=0}^t\alpha_{s+1}}$$ 
for all $t\geq 0$ by induction. First, by definition, we have $\v_0=\r_0=\frac{\alpha_1 \r_0}{\alpha_1}$. Next, assume 
$$\v_{t-1}=\frac{\sum_{s=0}^{t-1}\alpha_{s+1}\r_s}{\sum_{s=0}^{t-1}\alpha_{s+1}}.$$
Then
\begin{equation}
    \begin{split}
        \v_t = {} & (1-\theta_{t-1})\v_{t-1}+\theta_{t-1}\r_t \\ 
         = {} & \frac{t}{t+2}\v_{t-1}+\frac{2}{t+2}\r_t \\
         = {} & \frac{t}{t+2}\frac{1}{\sum_{s=0}^{t-1}\alpha_{s+1}}\sum_{s=0}^{t-1}\alpha_{s+1} \r_s + \frac{2}{(t+2)(t+1)}(t+1)\r_t\\
         = {} & \frac{t}{t+2}\frac{2}{(t+1)t}\sum_{s=0}^{t-1}\alpha_{s+1} \r_{s} + \frac{2}{(t+2)(t+1)}\alpha_{t+1}\r_t\\
         = {} & \frac{2}{(t+1)(t+2)}\sum_{s=0}^{t}\alpha_{s+1}\r_s = \frac{\sum_{s=0}^t\alpha_{s+1}\r_s}{\sum_{s=0}^t\alpha_{s+1}}.
    \end{split}
\end{equation}
Following a similar procedure, we obtain 
$$ \q_t=\frac{\sum_{s=0}^t\alpha_{s+1}\q_{\mu_s}(\v_s)}{\sum_{s=0}^t\alpha_{s+1}}.$$
Thus, for $t\geq1$, we have
\begin{equation}
    \begin{split}
\r_t= {}  & (1-\theta_{t-1})A^{\top}\q_{t-1}+\theta_{t-1}A^{\top}\q_{\mu_{t-1}}(\v_{t-1})\\
= {} & \frac{t}{t+2}\frac{2}{t(t+1)}A^{\top}\left( \sum_{s=0}^{t-1}\alpha_{s+1}\q_{\mu_s}(\v_s)\right) + \frac{2}{(t+2)(t+1)}(t+1)A^{\top}\q_{\mu_{t-1}}(\v_{t-1})\\ 
= & \frac{2}{(t+1)(t+2)} A^{\top} \left( \sum_{s=0}^{t-1}\alpha_{s+1}\q_{\mu_{s}}(\v_s)+\alpha_{t+1}\q_{\mu_{t-1}}(\v_{t-1})\right)\\
= & \argmin_{\r\in\R^d} -\left(  \sum_{s=0}^{t-1}\alpha_{s+1}\q_{\mu_{s}}(\v_s)+\alpha_{t+1}\q_{\mu_{t-1}}(\v_{t-1}) \right)^{\top}A\r+\frac{(t+1)(t+2)}{4}\|\r\|^2_2.
    \end{split}
\end{equation}
Finally, note that 
$$\mu_t =4\prod_{i=1}^t\frac{i}{i+2}=
4\frac{1}{1+2}\cdot\dots\frac{t}{t+2}=
4\frac{2}{(t+1)(t+2)}=\frac{4}{\sum_{s=0}^t\alpha_{s+1}},$$
so 
\begin{equation}
    \begin{split}
  \q_{\mu_{t}}(\v_t)= {} &\argmin_{\q\in\Delta^n} \q^{\top}A\left(\frac{\sum_{s=0}^t\alpha_{s+1}\r_s}{\sum_{s=0}^t\alpha_{s+1}}\right)+\frac{4}{\sum_{s=0}^t\alpha_{s+1}}\D_{E}\left(\q,\frac{\mathbf{1}}{n}\right)\\ 
  = {} & \argmin_{\q\in\Delta^n} \q^{\top}A\left(\sum_{s=0}^t\alpha_{s+1}\r_s\right) +4\D_{E}\left(\q,\frac{\mathbf{1}}{n}\right).
    \end{split}
\end{equation}

\noindent To summarize, for $t=1$, we have $\r_{0}=\v_0=\w_1$, and $\q_{\mu_{0}}(\v_0)=\p_1$. For $t\geq 2$, we know that 
\begin{equation}
    \begin{split}
\r_{t-1}=  \argmax_{\r\in\R^d} -\left(  \sum_{s=0}^{t-2}\alpha_{s+1}\q_{\mu_{s}}(\v_s)+\alpha_{t}\q_{\mu_{t-2}}(\v_{t-2}) \right)^{\top}A\r+\frac{t(t+1)}{4}\|\r\|^2_2,
    \end{split}
\end{equation}
and 
$$\q_{\mu_{t-1}}(\v_{t-1}) =\argmin_{\q\in\Delta^n} \q^{\top}A\left(\sum_{s=0}^{t-1}\alpha_{s+1}\r_s\right) +4\D_{E}\left(\q,\frac{\mathbf{1}}{n}\right).$$
The proof is finished by replacing $\q_{\mu_{t-1}}(\v_{t-1})$ as $\p_t$, $\r_{t-1}$ as $\w_t$, and noticing that 
$$\v_{T-1}=\frac{1}{\sum_{s=0}^{T-1}\alpha_{s+1}}\sum_{s=0}^{T-1}\alpha_{s+1}\r_s=\frac{1}{\sum_{s=1}^{T}\alpha_s}\sum_{s=1}^{T}\alpha_s\w_s=\overline{\w}_T.$$
and
\[
    \q_{T-1}=\frac{1}{\sum_{s=0}^{T-1}\alpha_{s+1}} \sum_{s=0}^{T-1} \alpha_{s+1}\q_{\mu_s}(\v_s) = \frac{1}{\sum_{s=1}^{T}\alpha_{s}} \sum_{s=1}^{T} \alpha_{s}\p_s
\]

\subsection{Proof of Theorem \ref{thm:SPC}}
\label{app:theorem 2}
Let $\widetilde{\w}_t=\argmin_{\w\in\R^d}\sum_{s=1}^{t-1}\alpha_sh_s(\w)$. Note that $h_t(\cdot)$ is 1-strongly convex w.r.t $\|\cdot\|_2$. Thus, 
for the $\w$-player, based on Lemma \ref{lem:OFTL}, we have 
\begin{equation}
    \begin{split}
    \label{eqn:regret:w:smpc}
\sum_{t=1}^T \alpha_t h_t(\w_t) - \alpha_t h_t(\w) \leq {} &  \sum_{t=1}^{T} \alpha_t\left( h_t(\w_t) -h_t(\widetilde{\w}_{t+1}) \right) - \alpha_t\left( h_{t-1}(\w_t) - h_{t-1}(\widetilde{\w}_{t+1}) \right)\\
= {} & \sum_{t=1}^T \alpha_t\left( -\p_{t}^{\top}A\w_t + \frac{1}{2}\|\w_t\|_2^2 + \p_{t}^{\top}A\widetilde{\w}_{t+1} - \frac{1}{2}\|\widetilde{\w}_{t+1}\|_2^2 \right)\\
{} & -  \alpha_t\left( -\p_{t-1}^{\top}A\w_t + \frac{1}{2}\|\w_{t}\|_2^2 + \p_{t-1}^{\top}A\widetilde{\w}_{t+1} - \frac{1}{2}\|\widetilde{\w}_{t+1}\|_2^2 \right)\\
= {} & \sum_{t=1}^T\alpha_t \left(\p_t-\p_{t-1}\right)^{\top}A(\widetilde{\w}_{t+1}-\w_t).\\
    \end{split}
\end{equation}
Next, according to the definition of $\w_t$ and $\widetilde{\w}_{t+1}$, we have 
$$\w_t = \frac{1}{\sum_{s=1}^t\alpha_s}A^{\top}\left(\sum_{s=1}^{t-1}\alpha_s\p_s+\alpha_t\p_{t-1}\right),$$
and 
$$\widetilde{\w}_{t+1}=\frac{1}{\sum_{s=1}^t\alpha_s}A^{\top}\left(\sum_{s=1}^{t-1}\alpha_s\p_s+\alpha_t\p_{t}\right).$$
Combining the two equations above with \eqref{eqn:regret:w:smpc}, we have that for any $\w\in\R^d$,
\begin{equation}
    \begin{split}
    \label{eqn:regret:w:smpc:fin}
\sum_{t=1}^T \alpha_t h_t(\w_t) - \alpha_t h_t(\w) \leq {} & \sum_{t=1}^T\frac{\alpha_t^2}{\sum_{s=1}^t\alpha_s}\|A^{\top}(\p_t-\p_{t-1})\|_2^2\\
= {} & \sum_{t=1}^T\frac{\alpha_t^2}{\sum_{s=1}^t\alpha_s}\left\| \sum_{i=1}^ny^{(i)}[\p_{t}-\p_{t-1}]_i\x^{(i)}\right\|^2_2 \\
 \stackrel{\rm (2)}{\leq} {} & \sum_{t=1}^T \frac{\alpha_t^2}{\sum_{s=1}^t\alpha_s} \left( \sum_{i=1}^n|\p_t-\p_{t-1}|_{i}\|\x^{(i)}\|_2\right)^2 \\
 \stackrel{\rm (3)}{\leq} {} & \sum_{t=1}^T \frac{\alpha_t^2}{\sum_{s=1}^t\alpha_s}  \|\p_t-\p_{t-1}\|^2_1\\
= {} & \sum_{t=1}^T \frac{2t^2}{t(t+1)}\|\p_t-\p_{t-1}\|^2_1 \stackrel{\rm (4)}{\leq} 2\sum_{t=1}^T\|\p_t-\p_{t-1}\|_1^2,
\end{split}
\end{equation}
where inequality {\rm (2)} is based on the triangle inequality, inequality {\rm (3)} is derived from Assumption \ref{ass:ell2}, and  inequality {\rm (4)} is because $\frac{t}{t+1}\leq 1$.\\

On the other hand, for the $\p$-player, note that the the regularizer is 1-strongly convex w.r.t. the $\|\cdot\|_1$. Thus, according to Lemma \ref{lem:FTRL+} (with $\lambda=0,\beta=1$), we have
\begin{equation}
    \begin{split}
     \label{eqn:regret:p:smpc:fin}
\sum_{t=1}^T\alpha_t\ell_{t}(\p_t)-\sum_{t=1}^T \alpha_t \ell_{t}(\p) \leq {} & 4{\log n }-2\sum_{t=1}^T\|\p_t-\p_{t-1}\|_1^2. \\       
    \end{split}
\end{equation}

for any $\p\in\Delta^n$. Finally, we focus on the iteration complexity. First, define $\widehat{\w}^*=\gamma\w^*$, where $\w^*$ is the maximum margin classifier defined in Assumption \ref{ass:ell2}. Then we have 
$$m(\widehat{\w}^*) = \min_{\p\in\Delta^n}\{ \p^\top A\widehat\w^* \} - \frac{1}{2}\| \widehat\w^* \|_2^2 = \gamma^2-\frac{\gamma^2}{2}=
\frac{\gamma^2}{2}.$$ Thus, combining  \eqref{eqn:regret:w:smpc:fin},  \eqref{eqn:regret:p:smpc:fin}, and Theorem \ref{thm:margin:gap}, we have 
\begin{equation}
    \begin{split}
    \label{eqn:margin:spc:1}
m(\widehat{\w}^*) -m(\overline{\w}_T)\leq \overline{R}^{\p}+   \overline{R}^{\w}\leq \frac{8\log n}{T(T+1)}\leq \frac{8\log n}{T^2},   
    \end{split}
\end{equation}
and thus 
\begin{equation}
\begin{split}
\label{eqn:margin:spc:2}
\min\limits_{\p\in\Delta^n} \p^{\top}A\overline{\w}_T\geq {} & m(\overline{\w}_T)\\
\geq {} &m(\widehat{\w}^*)- \frac{8\log n}{T^2} \\
= {} & \frac{\gamma^2}{2}  - \frac{8\log n}{T^2},
\end{split}    
\end{equation}
where the first inequality follows because $-\frac{1}{2}\|\overline{\w}_T\|_2^2\leq 0$. 
Hence, it can be seen that $\overline{\w}_T$ will have a non-negative margin when $T\geq \frac{4\sqrt{\log n}}{\gamma}$.

\subsection{Proof of Proposition \ref{Prop:accel-p}}
We start from the original expression in \cite{ji2021fast}. For $\g_t$, we have 
\begin{equation}
    \begin{split}
        \g_t =  {} & {\beta_t}\left((-A^{\top}\q_t)+\g_{t-1}\right)\\
             =  {} & \frac{t}{t+1}\left((-A)^{\top}\q_t + \frac{t-1}{t}\g_{t-2}+\frac{t-1}{t}(-A)^{\top}\q_{t-1} \right)\\
             =  {} & \cdots\\
            =  {} & \frac{1}{t+1}(-A)^{\top}\left(\sum_{s=1}^ts\q_s \right).          
    \end{split}
\end{equation}
Thus, we also have
$$\g_{t-1} =\frac{1}{t}(-A)^{\top}\left(\sum_{s=1}^{t-1}s\q_s\right). $$
Next, we focus on $\v_t$. Note that $\v_{t}=\v_{t-1}-\theta_{t-1}(\g_{t-1}-A^{\top}\q_{t-1})$, where 
\begin{equation}
    \begin{split}
    \label{eqn:pr:2021:1}
-\left(\g_{t-1} - A^{\top}\q_{t-1} \right) =  \frac{1}{t}A^{\top}\left(\sum_{s=1}^{t-1}s\q_s +t\q_{t-1}\right) 
= {} & \frac{2(t+1)}{2t(t+1)}A^{\top}\left(\sum_{s=1}^{t-1}s\q_s +t\q_{t-1}\right) \\
= {} & \frac{t+1}{2}\r_t,
    \end{split}
\end{equation}
where we define 
\begin{equation}
    \begin{split}
\r_t = {}  & \frac{2}{t(t+1)}A^{\top}\left(\sum_{s=1}^{t-1}s\q_s +t\q_{t-1}\right)
=  \frac{1}{\sum_{s=1}^t\alpha_s}A^{\top}\left(\sum_{s=1}^{t-1}s\q_s +t\q_{t-1}\right)\\
= {} & \argmin\limits_{\r\in\R^d} - \left(\sum_{s=1}^{t-1}s\q_s +t\q_{t-1}\right)^\top A\r + \frac{\sum_{s=1}^t\alpha_s}{2}\| \r \|_2^2 \\
 = {} & \argmin\limits_{\r\in\R^d}\sum_{s=1}^{t-1} \alpha_s\left( - \q_s^\top A\r +\frac{1}{2}\|\r\|_2^2\right) + \alpha_t \left( -\q_{t-1}^\top A\r +\frac{1}{2}\|\r\|_2^2\right)
    \end{split}
\end{equation}
As for the $\q_t$ variables, observe that
\begin{equation}
\begin{split}
q_{t,i}= {} & \frac{\exp(-y^{(i)}\v_{t}^{\top}\x^{(i)})}{\sum_{j=1}^n\exp(-y^{(j)}\v_t^{\top}\x^{(j)})}\\
= {} & \frac{\exp(-y^{(i)}\v_{t-1}^{\top}\x^{(j)})\exp(\frac{t}{2(t+1)}(\g_{t-1}-A^{\top}\q_{t-1})^{\top}y^{(i)}\x^{(i)})}{\sum_{j=1}^n\exp(-y^{(j)}\v_{t-1}^{\top}\x^{(j)})\exp(\frac{t}{2(t+1)}(\g_{t-1}-A^{\top}\q_{t-1})^{\top}y^{(j)}\x^{(j)})}\\ 
= {} & \frac{\frac{\exp(-y^{(i)}\v_{t-1}^{\top}\x^{(j)})}{\sum_{k=1}^n\exp(-y^{(k)}\v_{t-1}^{\top}\x^{(k)})}\exp(\frac{t}{2(t+1)}(\g_{t-1}-A^{\top}\q_{t-1})^{\top}y^{(i)}\x^{(i)})}{\sum_{j=1}^n\frac{\exp(-y^{(j)}\v_{t-1}^{\top}\x^{(j)})}{{\sum_{k=1}^n\exp(-y^{(k)}\v_{t-1}^{\top}\x^{(k)})}}\exp(\frac{t}{2(t+1)}(\g_{t-1}-A^{\top}\q_{t-1})^{\top}y^{(j)}\x^{(j)})}
\\
= {} & \frac{q_{t-1,i}\exp(\frac{t}{2(t+1)}(\g_{t-1}-A^{\top}\q_{t-1})^{\top}y^{(i)}\x^{(i)})}{\sum_{j=1}^n q_{t-1,j}\exp(\frac{t}{2(t+1)}(\g_{t-1}-A^{\top}\q_{t-1})^{\top}y^{(j)}\x^{(j)})}\\
= {} & \frac{q_{t-1,i}\exp(-\frac{\alpha_t}{4}\r_t^{\top}y^{(i)}\x^{(i)})}{\sum_{j=1}^n q_{t-1,j}\exp(-\frac{\alpha_t}{4}\r_t^{\top}y^{(j)}\x^{(j)})}=  \frac{q_{0,i}\exp(-\sum_{s=1}^t\frac{\alpha_s}{4}\r_s^{\top}y^{(i)}\x^{(i)})}{\sum_{j=1}^n q_{0,j}\exp(-\sum_{s=1}^t\frac{\alpha_s}{4}\r_s^{\top}y^{(j)}\x^{(j)})}.
\end{split}    
\end{equation}
Based on the relationship between the multiplicative weights algorithm and FTRL with the entropy regularizer, we have 
\begin{equation}
    \begin{split}
\q_t = {} & \argmin\limits_{\q\in\Delta^n} \sum_{s=1}^t\frac{\alpha_s}{4}\q^{\top}A\r_s + \D_{E}(\q,\q_0).
    \end{split}
\end{equation}
Next, note that $\r_1 = A^\top\q_0 = \frac{A^\top\mathbf{1}}{n} = \w_1$, from the proof of Proposition~\ref{Prop:smooth-pec}. Combining this with the fact that $\q_0 = \frac{\mathbf{1}}{n} = \p_0$, we can recursively conclude that $\r_t=\w_t$ and $\q_t=\p_t$. Finally, since $\v_0=\mathbf{0}$, we have that
$$\v_T=\v_{T-1} + \theta_{T-1}\frac{T+1}{2}\r_T=\v_{T-1} + \frac{T}{2(T+1)}\frac{T+1}{2}\r_T = \v_{T-1} + \frac{1}{4}\alpha_T\r_T = \frac{1}{4}\sum_{t=1}^T\alpha_t\r_t$$

\subsection{Proof of Theorem \ref{thm:AP-2021}}
For the first two parts of Theorem \ref{thm:AP-2021}, note that the expressions in Algorithms \ref{alg:SPC} and and \ref{alg:accel-p} are exactly the same. In the following, we focus on the normalized margin analysis. First, recall that the normalized margin is defined as: 
$$\overline{\gamma}(\w) = \frac{\min\limits_{\p\in\Delta^n} \p^{\top}A\w}{\|\w\|_2},$$
and for the the maximal margin classifier $\w^*$ we have $\overline{\gamma}(\w^*)={\gamma}$. Next, we compute $\overline{\gamma}(\v_T)$ under our framework. Note that $$\overline{\w}_T=\frac{4\v_T}{\sum_{t=1}^T\alpha_t}=\frac{8\v_T}{T(T+1)}.$$
Moreover, from the proof of Theorem \ref{thm:SPC}, we know that  $m(\w)-m(\overline{\w}_T)\leq \frac{8\log n}{T(T+1)}$ for any $\w\in\R^d$.
Thus, we have 

\begin{align*}
    \min\limits_{\p\in\Delta^n}\p^{\top}A\frac{8\v_T}{T(T+1)}-\frac{1}{2}\left\|\frac{8\v_T}{T(T+1)}\right\|_2^2 &= \min\limits_{\p\in\Delta^n} \p^{\top}A\overline{\w}_T-\frac{1}{2}\|\overline{\w}_T\|_2^2 \\
    &= m(\overline{\w}_T) \\
    &\geq m \left( \frac{8}{T(T+1)} \|\v_T\|_2\w^* \right) - \frac{8\log n}{T(T+1)} \\
    &= \frac{8}{T(T+1)} \|\v_T\|_2\gamma - \frac{64}{2(T(T+1))^2}\|\v_T\|^2_2 - \frac{8\log n}{T(T+1)}
\end{align*}
Rearranging and multiplying both sides by $\frac{T(T+1)}{8\|\v_T\|_2}$, we obtain
\begin{equation*}
    \begin{split}
    \label{eqn:acc:lowerbound}
   {} & \frac{\min\limits_{\p}\p^{\top}A\v_T}{\|\v_T\|_2} \geq \gamma - \frac{\log n}{\|\v_T\|_2} 
    \end{split}
\end{equation*}
The proof is finished by the following lower bound on $\|\v_T\|_2$. 
\begin{lemma}
\label{lem:v:lowerbound}
We have 
$$\|\v_T\|_2\geq \frac{T(T+1){\gamma}}{8}.$$
\end{lemma}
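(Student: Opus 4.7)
The plan is to reduce the lemma to a statement about the norm of $\overline{\w}_T$, and then to show that $\overline{\w}_T$ lies in the image of the simplex under $A^\top$, which is a set whose $\ell_2$-norm is bounded below by $\gamma$ via minimax duality.

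First, by Proposition~\ref{Prop:accel-p} and $\alpha_t = t$, one has $\v_T = \frac{1}{4}\sum_{t=1}^T \alpha_t \w_t = \frac{T(T+1)}{8}\,\overline{\w}_T$, so the claim is equivalent to showing $\|\overline{\w}_T\|_2 \geq \gamma$.

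Next, I would write down the closed form of the OFTL iterate. Since $h_j(\w) = -\p_j^\top A \w + \tfrac{1}{2}\|\w\|_2^2$, setting the gradient of $\sum_{j=1}^{t-1} \alpha_j h_j(\w) + \alpha_t h_{t-1}(\w)$ to zero yields
\[
\w_t \;=\; A^\top \tilde{\p}_t, \qquad \tilde{\p}_t := \frac{\sum_{j=1}^{t-1}\alpha_j \p_j + \alpha_t \p_{t-1}}{\sum_{s=1}^t \alpha_s}\in\Delta^n,
\]
since $\tilde{\p}_t$ is a convex combination of elements of $\Delta^n$. Consequently,
\[
\overline{\w}_T \;=\; \frac{1}{\sum_{t=1}^T \alpha_t}\sum_{t=1}^T \alpha_t A^\top \tilde{\p}_t \;=\; A^\top \widehat{\p}, \qquad \widehat{\p} \in \Delta^n.
\]

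Finally, invoke the minimax characterization of the margin. By Assumption~\ref{ass:ell2} and von Neumann's minimax theorem applied to the bilinear form $\p^\top A \w$ on $\Delta^n \times \{\w : \|\w\|_2 \leq 1\}$,
\[
\gamma \;=\; \max_{\|\w\|_2 \leq 1}\min_{\p \in \Delta^n} \p^\top A \w \;=\; \min_{\p \in \Delta^n}\max_{\|\w\|_2 \leq 1} \p^\top A \w \;=\; \min_{\p \in \Delta^n} \|A^\top \p\|_2.
\]
Therefore $\|\overline{\w}_T\|_2 = \|A^\top \widehat{\p}\|_2 \geq \gamma$, which combined with the first step gives $\|\v_T\|_2 \geq \frac{T(T+1)\gamma}{8}$.

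The only non-routine step is the observation that the OFTL update for the $\w$-player produces iterates of the form $A^\top \tilde{\p}_t$ with $\tilde{\p}_t\in\Delta^n$; once this structural fact is in hand the lower bound is immediate from minimax duality. Note that the argument does not use any regret bound or convergence rate; it is purely a geometric feature of the update rule together with the definition of the margin.
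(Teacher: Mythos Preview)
Your proof is correct and follows essentially the same route as the paper: both arguments hinge on the structural observation that each $\w_t$ (and hence $\overline{\w}_T$) has the form $A^\top\widehat{\p}$ for some $\widehat{\p}\in\Delta^n$, and then lower-bound $\|A^\top\widehat{\p}\|_2$ by $\gamma$. The only cosmetic difference is that the paper does the last step directly via Cauchy--Schwarz with $\w^*$, i.e.\ $\|\v_T\|_2\ge \v_T^\top\w^*=\tfrac{T(T+1)}{8}\widehat{\p}^\top A\w^*\ge \tfrac{T(T+1)}{8}\gamma$, whereas you phrase the same inequality through von~Neumann's minimax theorem; the latter is not actually needed, since the one-sided bound $\|A^\top\widehat{\p}\|_2\ge \widehat{\p}^\top A\w^*\ge\gamma$ already suffices.
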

\begin{proof}
Note that $\|\w^*\|=1$, so $\frac{\v_T^{\top}\w^*}{\|\v_T\|_2}\leq1$. Then, based on our derivation of $\w_t$ in the proof of Theorem~\ref{thm:SPC}, we can conclude that
\begin{equation}
    \begin{split}
    \|\v_T\|_2\geq  \v_T^{\top}\w^* = {} & \frac{1}{4}\sum_{t=1}^T \alpha_t \w_t^{\top}\w^*  \\
    = {} &\frac{1}{4} \sum_{t=1}^T \alpha_t \left[ \frac{1}{\sum_{s=1}^t \alpha_s} \left(\sum_{s=1}^{t-1} \alpha_s\p_s+\alpha_t\p_t \right) \right]^{\top}A\w^*\\
    \geq {} & \frac{1}{4} \sum_{t=1}^T \alpha_t \min_{\p\in\Delta^n} \p^\top A\w^* \\
    = {} & \frac{T(T+1){\gamma}}{8}.
    \end{split}
\end{equation}
\end{proof}

\subsection{Details of Section~\ref{yu-sec}}
\label{sec:yu}

\begin{algorithm}[t]
\caption{MPFP of \cite{yu2014saddle}}
\begin{algorithmic}
\label{alg:yu}
\noindent\fbox{%
    \parbox{0.95\textwidth}{%
\STATE \textbf{Input}: $\{\gamma_t\}_{t=1}^T, \v_1 = \r^*$
\label{alg:mirror-prox}
\FOR{$t=1,\dots,T$}
\STATE $\u_t = \text{Prox}_{\v_t}^R( \gamma_t F(\v_t) )$
\STATE $\v_{t+1} = \text{Prox}_{\v_t}^R( \gamma_t F(\u_t) )$
\STATE $\z_t = \frac{1}{ \sum_{s=1}^{t}\gamma_s } \sum_{s=1}^{t}\gamma_s\u_s$
\ENDFOR
\STATE \textbf{Output:} $\z_T$
}}
\noindent\fbox{%
    \parbox{0.95\textwidth}{%
        \begin{equation*}
            \begin{split}
            \textstyle
                \psi_t(\w) = {} & -\widehat\p_{t-1}^\top A\w \\
                \phi_t(\p) = {} & \p^\top A\widehat\w_{t-1} \\
                \textsf{OL}^{\w} = \text{OMD}\left[ \frac{1}{2}\|\cdot\|_2^2, \mathbf{0}, \frac{1}{\sqrt{\log n}}, \psi_t, \mathcal{B}_d \right]  \Leftrightarrow  {} {\w}_t= {} & \argmin\limits_{\w\in\mathcal{B}_d} \frac{1}{\sqrt{\log n}} \left\langle \nabla \psi_{t}(\w_{t-1}),\w \right\rangle \\
                &\hspace{2.5cm} + \D_{\frac{1}{2}\|\cdot\|_2^2}(\w,\widehat{\w}_{t-1}) \\
                {}{}{}{} \widehat{\w}_{t}  = {} & \argmin\limits_{\w\in\mathcal{B}_d} \frac{1}{\sqrt{\log n}} \left\langle \nabla h_t(\w_t),\w \right\rangle \\
                &\hspace{2.5cm} + \D_{\frac{1}{2}\|\cdot\|_2^2}(\w,\widehat{\w}_{t-1}) \\
                \textsf{OL}^{\p} = \text{OMD}\left[ E, \frac{\mathbf{1}}{n}, \sqrt{\log n}, \phi_t, \Delta^n \right]  \Leftrightarrow  {} {\p}_t= {} & \argmin\limits_{\p\in\Delta^n} \sqrt{\log n} \left\langle \nabla \phi_{t}(\p_{t-1}),\p \right\rangle \\
                &\hspace{3cm} + \D_{E}(\p,\widehat{\p}_{t-1}) \\
                {}{}{}{} \widehat{\p}_{t}  = {} & \argmin\limits_{\p\in\Delta^n} \sqrt{\log n} \left\langle \nabla \ell_{t}(\p_{t-1}),\p \right\rangle \\
                &\hspace{3cm} + \D_{E}(\p,\widehat{\p}_{t-1}) \\
            \end{split}
        \end{equation*}
    }}
\end{algorithmic}
\end{algorithm}

The Perceptron proposed by \cite{yu2014saddle}, named mirror-prox algorithm for feasibility Problems (MPFP) procedure \citep{yu2014saddle}, is summarized in Algorithm~\ref{alg:yu}.  \cite{yu2014saddle} consider a general objective $g:\W\times\mathcal{P}\to\mathbb{R}$, for spaces $\mathcal{P}\subset\mathbb{R}^n$ and $\W\subset\mathbb{R}^d$, and defines $F(\w,\p) = [-\nabla_{\w}g(\w,\p); \nabla_{\p}g(\w,\p) ] \in \mathbb{R}^{d+n}$ where $[\x;\y]$ denotes the concatenation of vectors $\x$ and $\y$. Associated to $\mathcal{P}$ (and analogously for $\mathcal{W}$), \cite{yu2014saddle} make use of the following quantities: 
\begin{itemize}
    \item A constant $\alpha_\p>0$ and function $R_\p:\mathcal{P}\to\mathbb{R}$ that is 1-strongly convex w.r.t. some norm $\|\cdot\|_\mathcal{P}$ on $\mathcal{P}$;
    \item Minimizer $\r_\p^* = \argmin_{\p\in\mathcal{P}} R_\p(\p)$.
    \item Bregman divergence $D_\p(\p,\q) =  D_{R_{\p}}(\p,\q)$;
    \item Diameter $\Omega_\p = \max_{\p\in\mathcal{P}} D_\p(\p,\r_\p^*)$.
\end{itemize}
From these, we can define the function $R:\mathbb{R}^{d+n}\mapsto\mathbb{R}$ and its minimizer
\begin{align*}
    R(\w,\p) = \alpha_\w R_\w(\w) + \alpha_\p R_\p(\p) \quad\text{and}\quad \r^* = \argmin_{\x\in\mathbb{R}^{d+n}} R(\x) = [ \r_\w^*;\r_\p^* ]
\end{align*}
Lastly, the algorithm relies on the following Prox map: for function $f:\Z\to\mathbb{R}$ and point $\v\in\Z$ on a general space $\Z\subset\mathbb{R}^m$, define the mapping $\text{Prox}_\v^f:\Z\to\Z$ by
\begin{align*}
    \text{Prox}_\v^f(\z) = \argmin_{\u\in\Z} \left\{ \u^\top\z + D_f(\u,\v) \right\}
\end{align*}
In this work, we consider MPFP under the bilinear objective $g(\w,\p) = \p^\top A\w$, where the rows of $A$ are assumed to be normalized w.r.t. $\|\cdot\|_2$. We further give the following specifications:
\begin{itemize}
    \item Spaces $\mathcal{P} = \Delta_n$ and $\W = \B_d = \{ \w\in\R^d: \|\w\|_2\leq 1 \}$.
    \item Norms $\|\cdot\|_\mathcal{P} = \|\cdot\|_1$ and $\|\cdot\|_\mathcal{W} = \|\cdot\|_2$.
    \item Regularizers $R_\p = E$, where $E$ denotes the entropy regularizer (i.e., negative Shannon entropy), and $R_\w = \frac{1}{2}\|\cdot\|_2^2$.
    \item Contants $\alpha_\p = \frac{ 1 }{ \sqrt{\Omega_\p} ( \sqrt{\Omega_\p} + \sqrt{\Omega_\w} ) }$ and $\alpha_\w = \frac{ \sqrt{\Omega_\p} }{ \sqrt{\Omega_\p} + \sqrt{\Omega_\w} }$.
    \item Constant weights $\gamma_t = \frac{1}{\sqrt{\Omega_\p}+\sqrt{\Omega_\w}}$ for each $t\in[T]$.
\end{itemize}
Endowed with these parameters, the procedure becomes the MPLFP algorithm of \cite{yu2014saddle}, and we can conclude the following connection.

\begin{theorem}
\label{thm:mplfp}
Let $\alpha_t=1$. Then the MPLFP algorithm is equivalent to the game dynamics given by Algorithm~\ref{alg:yu}, in the sense that  $\u_t = [\w_t;\p_t]$, $\v_{t+1} = [\widehat\w_t;\widehat\p_t]$, and $\z_t = \left[ \frac{1}{ t } \sum_{s=1}^{t} \w_s; \frac{1}{ t } \sum_{s=1}^{t} \p_s \right]$.
\end{theorem}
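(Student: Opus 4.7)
}
The plan is to show, by induction on $t$, that the joint Prox update in MPFP decouples into two independent updates — one over $\w$ and one over $\p$ — that match exactly the OMD iterates specified in Algorithm \ref{alg:yu}. The key structural observation is twofold: first, because $R(\w,\p)=\alpha_\w R_\w(\w)+\alpha_\p R_\p(\p)$ is additively separable, the induced Bregman divergence decomposes as
\begin{equation*}
D_R\bigl([\w;\p],[\w';\p']\bigr)=\alpha_\w D_{R_\w}(\w,\w')+\alpha_\p D_{R_\p}(\p,\p');
\end{equation*}
second, for the bilinear objective $g(\w,\p)=\p^\top A\w$, the operator $F(\w,\p)=[-A^\top\p;A\w]$ is block-separable. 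Combining these, for any anchor $[\w';\p']$ and any direction $[\a;\b]$, the joint prox $\mathrm{Prox}_{[\w';\p']}^R([\a;\b])$ splits into the two independent minimizations $\argmin_{\w}\{\w^\top\a+\alpha_\w D_{R_\w}(\w,\w')\}$ and $\argmin_{\p}\{\p^\top\b+\alpha_\p D_{R_\p}(\p,\p')\}$.

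With this reduction in hand, I would set up the induction with hypothesis $\v_t=[\widehat\w_{t-1};\widehat\p_{t-1}]$. The base case $t=1$ holds because $\v_1=\r^*=[\argmin R_\w;\argmin R_\p]=[\mathbf{0};\mathbf{1}/n]=[\widehat\w_0;\widehat\p_0]$. For the inductive step, applying $F$ to $\v_t$ gives $F(\v_t)=[-A^\top\widehat\p_{t-1};A\widehat\w_{t-1}]$, so the separability argument above turns $\u_t=\mathrm{Prox}_{\v_t}^R(\gamma_tF(\v_t))$ into the two problems $\argmin_{\w\in\B_d}\{-\gamma_t\widehat\p_{t-1}^\top A\w+\alpha_\w D_{R_\w}(\w,\widehat\w_{t-1})\}$ and $\argmin_{\p\in\Delta^n}\{\gamma_t\p^\top A\widehat\w_{t-1}+\alpha_\p D_{R_\p}(\p,\widehat\p_{t-1})\}$. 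Dividing by $\alpha_\w$ and $\alpha_\p$ respectively, and using $\Omega_\p=\log n$ together with the prescribed values of $\alpha_\w$, $\alpha_\p$, and $\gamma_t$, I would verify that the effective step sizes collapse to $\gamma_t/\alpha_\w=1/\sqrt{\log n}$ and $\gamma_t/\alpha_\p=\sqrt{\log n}$. Since $\psi_t$ and $\phi_t$ are linear in their arguments, $\nabla\psi_t(\cdot)\equiv-A^\top\widehat\p_{t-1}$ and $\nabla\phi_t(\cdot)\equiv A\widehat\w_{t-1}$, so these two subproblems coincide verbatim with the definitions of $\w_t$ and $\p_t$ in Algorithm \ref{alg:yu}, giving $\u_t=[\w_t;\p_t]$. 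An identical separability argument applied to $\v_{t+1}=\mathrm{Prox}_{\v_t}^R(\gamma_t F(\u_t))$ — now with $F(\u_t)=[-A^\top\p_t;A\w_t]$, matching $\nabla h_t$ and $\nabla\ell_t$ — produces $\v_{t+1}=[\widehat\w_t;\widehat\p_t]$, closing the induction.

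Finally, because the weights $\gamma_t$ are constant in $t$, the MPFP average simplifies to $\z_t=\frac{1}{t}\sum_{s=1}^t\u_s$, which by the coordinate identification above is exactly $\bigl[\tfrac{1}{t}\sum_{s=1}^t\w_s;\tfrac{1}{t}\sum_{s=1}^t\p_s\bigr]$, giving the claimed form.

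I expect the main obstacle to be purely bookkeeping: carefully tracking the constants $\alpha_\w,\alpha_\p,\gamma_t$ so that they cancel cleanly to produce the tidy step sizes $1/\sqrt{\log n}$ and $\sqrt{\log n}$ appearing in the OMD blocks, and verifying that the convention for $\widehat{\w}_0$ (implicit in $\mathbf{0}$) and $\widehat{\p}_0=\mathbf{1}/n$ indeed matches the minimizer $\r^*$ of $R$. Once the separability of the prox under an additively separable regularizer and a block-separable $F$ is established, no further nontrivial computation is required.
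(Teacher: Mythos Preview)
Your proposal is correct and follows essentially the same route as the paper: both arguments hinge on the additive separability of $R$ (and hence of the joint Prox) together with the block structure of $F$, then verify that the ratios $\gamma_t/\alpha_\w=1/\sqrt{\log n}$ and $\gamma_t/\alpha_\p=\sqrt{\log n}$ reproduce the OMD step sizes, and finally use the constancy of $\gamma_t$ to simplify $\z_t$. The only cosmetic difference is that the paper invokes the Prox decomposition as a cited fact from \cite{yu2014saddle}, whereas you argue it directly from the additive form of $R$.
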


\begin{proof}
We begin by noting that
\begin{align*}
    \Omega_\w &= \max_{\w\in\B_d} D_\w(\w,\r_w^*) = \max_{\w\in\B_d} \frac{1}{2}\|\w\|_2^2 = \frac{1}{2} \\
    \Omega\p &= \max_{\p\in\Delta^n} D_\p(\p,\r_p^*) = \max_{\p\in\Delta^n} \text{KL}\left( \p || \frac{\mathbf{1}}{n} \right) = \log n
\end{align*}
where KL denotes the Kullback–Leibler divergence, so that we can derive parameters
\begin{align*}
    \gamma_t = \frac{1}{ \sqrt{\log n} + \frac{1}{\sqrt{2}} } \quad\text{and}\quad \alpha_\p = \frac{ 1 }{ \sqrt{\log n} \left( \sqrt{\log n} + \frac{1}{\sqrt{2}} \right) } \quad\text{and}\quad \alpha_\w = \frac{ \sqrt{\log n} }{ \sqrt{\log n} + \frac{1}{\sqrt{2}} }
\end{align*}
Next, note that $F(\w,\p) = \left[ -\p^\top A;A\w \right]$. For convenience, we will write the MPFP iterates as
\begin{align*}
    \u_t = [ \x_t;\y_t ] \quad\text{and}\quad \v_{t+1} = [ \widehat\x_t;\widehat\y_t ]
\end{align*}
Section 10 of \cite{yu2014saddle} shows that
\begin{align*}
    \text{Prox}_{[\widehat\x;\widehat\y]}^R(\x,\y) = \left[ \text{Prox}_{\widehat\x}^{R_\w} \left( \frac{\x}{\alpha_\w} \right); \text{Prox}_{\widehat\y}^{R_\p} \left( \frac{\y}{\alpha_\p} \right) \right]
\end{align*}
Using the iterate definitions, we can then obtain
\begin{align*}
    [\x_t;\y_t] &= \u_t \\
    &= \text{Prox}_{\v_t}^R( \gamma_t F(\v_t) ) \\
    &= \text{Prox}_{ [ \widehat\x_{t-1};\widehat\y_{t-1} ] }^R( -\gamma_t\widehat\y_{t-1}^\top A, \gamma_tA\widehat\x_{t-1} ) \\
    &= \left[ \text{Prox}_{\widehat\x_{t-1}}^{R_\w} \left( \frac{ -\gamma_t\widehat\y_{t-1}^\top A }{\alpha_\w} \right); \text{Prox}_{\widehat\y_{t-1}}^{R_\p} \left( \frac{ \gamma_tA\widehat\x_{t-1} }{\alpha_\p} \right) \right] \\
    &= \left[ \text{Prox}_{\widehat\x_{t-1}}^{R_\w} \left( -\frac{1}{\sqrt{\log n}} \widehat\y_{t-1}^\top A \right); \text{Prox}_{\widehat\y_{t-1}}^{R_\p} \left( \sqrt{\log n} A\widehat\x_{t-1} \right) \right]
\end{align*}
and
\begin{align*}
    [\widehat\x_t;\widehat\y_t] &= \v_{t+1} \\
    &= \text{Prox}_{\v_t}^R( \gamma_t F(\u_t) ) \\
    &= \text{Prox}_{ [ \widehat\x_{t-1};\widehat\y_{t-1} ] }^R( -\gamma_t\y_t^\top A, \gamma_tA\x_t ) \\
    &= \left[ \text{Prox}_{\widehat\x_{t-1}}^{R_\w} \left( \frac{ -\gamma_t\y_t^\top A }{\alpha_\w} \right); \text{Prox}_{\widehat\y_{t-1}}^{R_\p} \left( \frac{ \gamma_tA\x_t }{\alpha_\p} \right) \right] \\
    &= \left[ \text{Prox}_{\widehat\x_{t-1}}^{R_\w} \left( -\frac{1}{\sqrt{\log n}} \y_t^\top A \right); \text{Prox}_{\widehat\y_{t-1}}^{R_\p} \left( \sqrt{\log n} A\x_t \right) \right]
\end{align*}
Using the definition of the Prox function then yields
\begin{align*}
    \x_t &= \argmin_{\w\in\B_d} \left\{ -\frac{ 1 }{ \sqrt{\log n} } \widehat\y_{t-1}^\top A\w + D_\w( \w,\widehat\x_{t-1} ) \right\} \\
    \widehat\x_t &= \argmin_{\w\in\B_d} \left\{ -\frac{ 1 }{ \sqrt{\log n} } \y_{t}^\top A\w + D_\w( \w,\widehat\x_{t-1} ) \right\} \\
    \y_t &= \argmin_{\p\in\Delta^n} \left\{ \sqrt{\log n} \p^\top A\widehat\x_{t-1} + D_\p( \p,\widehat\y_{t-1} ) \right\} \\
    \widehat\y_t &= \argmin_{\p\in\Delta^n} \left\{ \sqrt{\log n} \p^\top A\x_{t} + D_\p( \p,\widehat\y_{t-1} ) \right\} 
\end{align*}
From the fact that
\begin{align*}
    [\widehat\x_0;\widehat\y_0] = v_1 = [\r_\w^*;\r_\p^*] = \left[ \mathbf{0};\frac{\mathbf{1}}{n} \right] = [\widehat\w_0;\widehat\p_0]
\end{align*}
and plugging in the appropriate parameters into the OMD updates, we can recursively conclude that $[\widehat\w_t;\widehat\p_t] = [\widehat\x_t;\widehat\y_t] = \v_{t+1}$ and $[\w_t;\p_t] = [\x_t;\y_t] = \u_t$ for each $t\in[T]$. As a result, we also have that
\begin{align*}
    \z_t &= \frac{1}{ \sum_{s=1}^{t}\gamma_s } \sum_{s=1}^{t}\gamma_s\u_s \\
    &= \left[ \frac{1}{ \sum_{s=1}^{t}\gamma_s } \sum_{s=1}^{t}\gamma_s\x_s; \frac{1}{ \sum_{s=1}^{t}\gamma_s } \sum_{s=1}^{t}\gamma_s\y_s \right] \\
    &= \left[ \frac{1}{ t } \sum_{s=1}^{t}\w_s; \frac{1}{ t } \sum_{s=1}^{t}\p_s \right]
\end{align*}
\end{proof}

\begin{Proposition}
\label{prop:yu-reg}
Let $\alpha_t=1$. Then the total regret of both players under the dynamics of Algorithm~\ref{alg:yu} is bounded by
\begin{align*}
    R^\p+R^\w = O(\sqrt{\log n})
\end{align*}
\end{Proposition}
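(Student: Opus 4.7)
The plan is to apply the OMD regret bound (Lemma \ref{lem:OMD}) to each player separately, then to combine the two bounds and cancel the cross (stability) terms against the negative quadratic terms, exploiting the fact that the step sizes satisfy $\eta^\w \cdot \eta^\p = 1$. Since the weights are $\alpha_t=1$ and the losses $h_t(\w)=-\p_t^\top A\w$, $\ell_t(\p)=\p^\top A\w_t$, $\psi_t(\w)=-\widehat\p_{t-1}^\top A\w$, $\phi_t(\p)=\p^\top A\widehat\w_{t-1}$ are all linear, their gradients are vectors that can be computed in closed form.

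First, I would instantiate Lemma \ref{lem:OMD} for each player. For the $\w$-player (regularizer $\tfrac12\|\cdot\|_2^2$ which is $1$-strongly convex w.r.t.\ $\|\cdot\|_2$, step size $\eta^\w=1/\sqrt{\log n}$), since $\nabla h_t(\w_t)=-A^\top\p_t$ and $\nabla\psi_t(\widehat\w_{t-1})=-A^\top\widehat\p_{t-1}$,
\[
R^\w \le \sqrt{\log n}\,\max_{\w\in\B_d}\tfrac12\|\w\|_2^2 + \sum_{t=1}^T \|A^\top(\widehat\p_{t-1}-\p_t)\|_2\|\w_t-\widehat\w_t\|_2 - \frac{\sqrt{\log n}}{2}\sum_{t=1}^T\bigl(\|\w_t-\widehat\w_t\|_2^2+\|\w_t-\widehat\w_{t-1}\|_2^2\bigr).
\]
Analogously, with $\nabla \ell_t(\p_t)=A\w_t$ and $\nabla\phi_t(\widehat\p_{t-1})=A\widehat\w_{t-1}$, and using that $E$ is $1$-strongly convex w.r.t.\ $\|\cdot\|_1$,
\[
R^\p \le \frac{1}{\sqrt{\log n}}\max_{\p\in\Delta^n} D_E(\p,\tfrac{\mathbf 1}{n}) + \sum_{t=1}^T \|A(\w_t-\widehat\w_{t-1})\|_\infty\|\p_t-\widehat\p_t\|_1 - \frac{1}{2\sqrt{\log n}}\sum_{t=1}^T\bigl(\|\p_t-\widehat\p_t\|_1^2+\|\p_t-\widehat\p_{t-1}\|_1^2\bigr).
\]

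Next I would translate the gradient-difference norms into the player-native norms using Assumption~\ref{ass:ell2}. Since the rows $y^{(i)}\x^{(i)}$ of $A$ have $\|\cdot\|_2$ at most $1$, we get $\|A^\top(\widehat\p_{t-1}-\p_t)\|_2 \le \|\widehat\p_{t-1}-\p_t\|_1$ (by the triangle inequality over the convex combination of rows) and $\|A(\w_t-\widehat\w_{t-1})\|_\infty \le \|\w_t-\widehat\w_{t-1}\|_2$ (by Cauchy--Schwarz, row by row). Then I would apply the weighted AM-GM inequality $ab \le \tfrac{\sqrt{\log n}}{2}a^2 + \tfrac{1}{2\sqrt{\log n}}b^2$ to both cross terms, namely
\[
\|\widehat\p_{t-1}-\p_t\|_1\|\w_t-\widehat\w_t\|_2 \le \tfrac{1}{2\sqrt{\log n}}\|\widehat\p_{t-1}-\p_t\|_1^2 + \tfrac{\sqrt{\log n}}{2}\|\w_t-\widehat\w_t\|_2^2,
\]
and symmetrically for the $\p$-player. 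The key observation is that the coefficients produced here \emph{exactly match} the two negative quadratic coefficients $\tfrac{1}{2\eta^\p}$ and $\tfrac{1}{2\eta^\w}$, so that when I sum $R^\w+R^\p$ all the summed stability terms cancel.

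Finally, I would bound the two remaining divergence-to-initial-point terms using that $\max_{\w\in\B_d}\tfrac12\|\w\|_2^2 = \tfrac12$ and $\max_{\p\in\Delta^n} D_E(\p,\mathbf 1/n) = \log n$ (since $D_E(\p,\mathbf 1/n)=\log n + \sum_i p_i\log p_i \le \log n$). This yields
\[
R^\w + R^\p \;\le\; \tfrac{\sqrt{\log n}}{2} + \sqrt{\log n} \;=\; O(\sqrt{\log n}).
\]
The only non-routine step is verifying the exact cancellation in Step~3: it hinges on the choice $\eta^\w\eta^\p=1$ together with the fact that, under Assumption~\ref{ass:ell2}, the bilinear coupling $\p^\top A\w$ has operator norm at most $1$ between $(\Delta^n,\|\cdot\|_1)$ and $(\B_d,\|\cdot\|_2)$, making the trade-off between the two players' stability and regularization terms tight. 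Everything else is a direct plug-in into the OMD bound and elementary diameter estimates.
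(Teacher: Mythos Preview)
Your proposal is correct and follows essentially the same route as the paper: apply Lemma~\ref{lem:OMD} to each player, convert the gradient differences via the operator-norm estimates $\|A^\top\q\|_2\le\|\q\|_1$ and $\|A\w\|_\infty\le\|\w\|_2$ (the paper packages these as Lemma~\ref{lem:yu-reg-aux}), and then cancel. The only cosmetic difference is that the paper groups the four stability terms and two cross terms into two perfect squares and drops them, whereas you apply weighted AM--GM to each cross term and cancel termwise; these are algebraically identical manipulations and yield the same bound $R^\w+R^\p\le \tfrac{3}{2}\sqrt{\log n}$.
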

Before we prove this result, we need to state a technical tool that we will make use of.

\begin{lemma}
\label{lem:yu-reg-aux}
For any $\p\in\R^n$ and $\w\in\R^d$, we have that
\begin{align*}
    \| \p^\top A \|_2 \leq \|\p\|_1 \quad\text{and}\quad \| A\w \|_\infty \leq \|\w\|_2
\end{align*}
\end{lemma}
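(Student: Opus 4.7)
The plan is to prove both inequalities directly from the fact that the rows of $A$ have $\ell_2$-norm at most $1$. Recall from the setup in Section~\ref{section:recover} that $A = [y^{(1)}\x^{(1)}, \dots, y^{(n)}\x^{(n)}]^\top$, so each row satisfies $\|A_{(i,:)}\|_2 = \|y^{(i)}\x^{(i)}\|_2 = \|\x^{(i)}\|_2 \leq 1$ by Assumption~\ref{ass:ell2} (and indeed Section~\ref{yu-sec} explicitly assumes the rows are normalized in $\|\cdot\|_2$).

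For the first inequality, I would write $\p^\top A = \sum_{i=1}^n p_i A_{(i,:)}$ as a linear combination of the rows of $A$, then apply the triangle inequality for $\|\cdot\|_2$ followed by the row-norm bound:
\begin{equation*}
\|\p^\top A\|_2 = \left\|\sum_{i=1}^n p_i A_{(i,:)}\right\|_2 \leq \sum_{i=1}^n |p_i|\, \|A_{(i,:)}\|_2 \leq \sum_{i=1}^n |p_i| = \|\p\|_1.
\end{equation*}

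For the second inequality, I would observe that $(A\w)_i = A_{(i,:)} \w$ and apply Cauchy--Schwarz componentwise:
\begin{equation*}
|(A\w)_i| = |A_{(i,:)} \w| \leq \|A_{(i,:)}\|_2 \, \|\w\|_2 \leq \|\w\|_2,
\end{equation*}
and then take the maximum over $i \in [n]$ to conclude $\|A\w\|_\infty \leq \|\w\|_2$.

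There is no real obstacle here; this is essentially the standard duality between operator norms with respect to $(\|\cdot\|_1, \|\cdot\|_2)$ and $(\|\cdot\|_2, \|\cdot\|_\infty)$, applied to a matrix whose rows are bounded in $\ell_2$. The lemma is purely a technical convenience for bounding the gradient norms $\|\nabla \psi_t\|$, $\|\nabla \phi_t\|$, $\|\nabla h_t\|$, and $\|\nabla \ell_t\|$ that will appear when we instantiate the OMD regret bound (Lemma~\ref{lem:OMD}) in the proof of Proposition~\ref{prop:yu-reg}.
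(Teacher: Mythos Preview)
Your proof is correct and essentially identical to the paper's own argument: both use the triangle inequality on $\sum_i p_i A_{(i,:)}$ together with $\|A_{(i,:)}\|_2\le 1$ for the first bound, and Cauchy--Schwarz on each coordinate $A_{(i,:)}\w$ for the second. The only cosmetic difference is that the paper first normalizes $\p$ to $\tilde\p=\p/\|\p\|_1$ before applying the triangle inequality, which is an unnecessary detour; your direct version is slightly cleaner.
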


\begin{proof}
For the first inequality, let $\tilde\p = \frac{\p}{\|\p\|_1}$. Then,
\begin{align*}
    \| \p^\top A \|_2 = \|\p\|_1 \| \tilde\p^\top A \|_2 = \|\p\|_1 \left\| \sum_{i=1}^{n} \tilde p_i A_{(i,:)} \right\|_2 \leq \|\p\|_1 \sum_{i=1}^{n} |\tilde p_i|\| A_{(i,:)} \|_2 = \|\p\|_1
\end{align*}
For the second inequality, we have that
\begin{align*}
    \| A\w \|_\infty = \max_{i\in[n]} \left| A_{(i,:)}^\top\w \right| \leq \|\w\|_2 \max_{i\in[n]} \| A_{(i,:)} \|_2 = \|\w\|_2
\end{align*}
\end{proof}

\begin{proof}[Proof of Proposition~\ref{prop:yu-reg}]
Let us first note that, from the proof of Theorem~\ref{thm:mplfp}, we know that
\begin{align*}
    \Omega_\w = \max_{\w\in\B_d} D_\w(\w,\widehat\w_0) = \frac{1}{2} \quad\text{and}\quad \Omega_\p = \max_{\p\in\Delta^n} D_\p(\p,\widehat\p_0) = \log n
\end{align*}
Then, using Lemmas~\ref{lem:OMD} and \ref{lem:yu-reg-aux}, we can bound the regret of the $\w$-player as follows:
\begin{align*}
    R^\w &\leq \frac{\sqrt{\log n}}{2} + \sum_{t=1}^T \left[ \| ( \p_t - \widehat\p_{t-1} )^\top A \|_2\|\w_t-\widehat{\w}_t\|_2 - \frac{\sqrt{\log n}}{2} \left(\|\w_t-\widehat{\w}_t\|_2^2 + \|\w_t-\widehat{\w}_{t-1}\|_2^2 \right) \right] \\
    &\leq \frac{\sqrt{\log n}}{2} + \sum_{t=1}^T \left[ \| \p_t - \widehat\p_{t-1} \|_1\|\w_t-\widehat{\w}_t\|_2 - \frac{\sqrt{\log n}}{2} \left(\|\w_t-\widehat{\w}_t\|_2^2 + \|\w_t-\widehat{\w}_{t-1}\|_2^2 \right) \right]
\end{align*}
Similarly, for the $\p$-player we obtain
\begin{align*}
    R^\p &\leq \sqrt{\log n} + \sum_{t=1}^T \left[ \| A ( \w_t-\widehat\w_{t-1}) \|_\infty\|\p_t-\widehat{\p}_t\|_1 - \frac{1}{2\sqrt{\log n}} \left(\|\p_t-\widehat{\p}_t\|_1^2 + \|\p_t-\widehat{\p}_{t-1}\|_1^2 \right) \right] \\
    &\leq \sqrt{\log n} + \sum_{t=1}^T \left[ \| \w_t-\widehat\w_{t-1} \|_2 \|\p_t-\widehat{\p}_t\|_1 - \frac{1}{2\sqrt{\log n}} \left(\|\p_t-\widehat{\p}_t\|_1^2 + \|\p_t-\widehat{\p}_{t-1}\|_1^2 \right) \right]
\end{align*}
Adding both bounds then yields
\begin{align*}
    R^\w+R^\p &\leq \frac{\sqrt{\log n}}{2} + \sqrt{\log n} \\
    &\hspace{0.5cm} - \frac{1}{2} \sum_{t=1}^T \bigg[ \frac{1}{\sqrt{\log n}} \|\p_t-\widehat{\p}_{t-1}\|_1^2 + \sqrt{\log n} \|\w_t-\widehat{\w}_t\|_2^2 - 2\| \p_t - \widehat\p_{t-1} \|_1\|\w_t-\widehat{\w}_t\|_2 \\
    &\hspace{1.5cm} + \sqrt{\log n} \|\w_t-\widehat{\w}_{t-1}\|_2^2 + \frac{1}{\sqrt{\log n}} \|\p_t-\widehat{\p}_t\|_1^2 - 2\| \w_t-\widehat\w_{t-1} \|_2 \|\p_t-\widehat{\p}_t\|_1 \bigg] \\
    &= \frac{\sqrt{\log n}}{2} + \sqrt{\log n} \\
    &\hspace{0.5cm} - \frac{1}{2}\sum_{t=1}^T \bigg[ \left( (\log n)^{-\frac{1}{4}} \|\p_t-\widehat{\p}_{t-1}\|_1 - (\log n)^{\frac{1}{4}} \|\w_t-\widehat{\w}_t\|_2 \right)^2 \\
    &\hspace{2cm} \left( (\log n)^{\frac{1}{4}} \|\w_t-\widehat{\w}_{t-1}\|_2 - (\log n)^{-\frac{1}{4}} \|\p_t-\widehat{\p}_t\|_1 \right)^2 \bigg] \\
    &\leq \frac{\sqrt{\log n}}{2} + \sqrt{\log n}
\end{align*}
\end{proof}

A nice property of the MPLFP algorithm is that it provides a linear separator when the data is linearly separable and an approximate certificate of infeasibility when it is not. Such a certificate is a vector $\widehat\p\in\Delta^n$ satisfying $\widehat\p^\top A = \mathbf{0}$; the duality of the two problems can be observed via Gordan's Theorem. We call $\widehat\p\in\Delta^n$ an $\epsilon$-certificate of infeasibility if $\| \widehat\p A \|_2\leq \epsilon$. Next, we recover the convergence analysis of MPLFP, from \cite{yu2014saddle}, using the tools of no-regret learning.

\begin{theorem}
\label{thm:mplfp-conv}
Let $\alpha_t=1$. Then under Assumption~\ref{ass:ell2}, $\frac{1}{T} \sum_{t=1}^T \w_t$ has a non-negative margin when $T = \Omega\left( \frac{\sqrt{\log n}}{\gamma} \right)$. On the other hand, if $\max_{\w\in\B_d} \gamma(\w) < 0$, then $\frac{1}{T} \sum_{t=1}^T \p_t$ is an $\epsilon$-certificate of infeasibility when $T = \Omega\left( \frac{\sqrt{\log n}}{\epsilon} \right)$.
\end{theorem}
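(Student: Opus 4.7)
The plan is to apply Theorem~\ref{thm:margin:gap} together with the regret bound $R^\w + R^\p = O(\sqrt{\log n})$ from Proposition~\ref{prop:yu-reg}, invoking its minimax ``dual'' version to handle the infeasible case.

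For part~1 (feasibility), with $g(\w,\p) = \p^\top A \w$ we have $m(\w) = \gamma(\w)$. Under Assumption~\ref{ass:ell2}, choosing $\w = \w^*$ in Theorem~\ref{thm:margin:gap} (recall $\alpha_t = 1$ so $\sum_t \alpha_t = T$) yields
\[
    \gamma - \gamma(\overline{\w}_T) = m(\w^*) - m(\overline{\w}_T) \leq \frac{R^\w + R^\p}{T} = O\!\left(\frac{\sqrt{\log n}}{T}\right).
\]
Hence $\gamma(\overline{\w}_T) \geq 0$ whenever $T = \Omega(\sqrt{\log n}/\gamma)$, proving the first claim.

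For part~2 (infeasibility), I first extract a companion estimate bounding the $\p$-average. Because $g$ is \emph{bilinear}, the chain of inequalities in the proof of Theorem~\ref{thm:margin:gap} in Appendix~\ref{appendix:Theorem 1} can be tightened: the step $\max_{\w} \frac{1}{T}\sum_t g(\w,\p_t) \geq g(\w,\overline{\p}_T)$ becomes an equality (since $g$ is linear in $\p$, so $\frac{1}{T}\sum_t g(\w,\p_t) = g(\w,\overline{\p}_T)$), and one obtains
\[
    \frac{1}{T}\sum_{t=1}^T g(\w_t,\p_t) = M(\overline{\p}_T) - \frac{R^\w}{T},
\]
where $M(\p) := \max_{\w \in \mathcal{B}_d} g(\w,\p)$. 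Combined with the upper bound $\frac{1}{T}\sum_t g(\w_t,\p_t) \leq m(\overline{\w}_T) + R^\p/T$ from the second half of the same proof, this yields the symmetric duality-gap bound
\[
    M(\overline{\p}_T) - m(\overline{\w}_T) \leq \frac{R^\w + R^\p}{T}.
\]
Since $\mathcal{B}_d$ is the $\ell_2$ unit ball, $M(\overline{\p}_T) = \|\overline{\p}_T^\top A\|_2$. The hypothesis $\max_\w \gamma(\w) < 0$ forces $m(\overline{\w}_T) \leq 0$, so
\[
    \|\overline{\p}_T^\top A\|_2 \leq \frac{R^\w + R^\p}{T} = O\!\left(\frac{\sqrt{\log n}}{T}\right),
\]
which is at most $\epsilon$ once $T = \Omega(\sqrt{\log n}/\epsilon)$, identifying $\tfrac{1}{T}\sum_t \p_t$ as an $\epsilon$-certificate of infeasibility.

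The main obstacle is the derivation of the dual bound, since Theorem~\ref{thm:margin:gap} is one-sided as stated and controls only $m(\overline{\w}_T)$ from below. The key observation is that bilinearity of $g$ turns the one-sided regret-to-duality reduction into a two-sided equality, after which both claims follow immediately by plugging in the appropriate comparator ($\w^*$ in the feasible case, using $m(\overline{\w}_T) \leq 0$ in the infeasible one) and invoking Proposition~\ref{prop:yu-reg}.
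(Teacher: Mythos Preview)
Your proof is correct and follows essentially the same approach as the paper: both parts invoke Theorem~\ref{thm:margin:gap} together with Proposition~\ref{prop:yu-reg}, and for the infeasibility case both derive the companion inequality $\max_{\w\in\B_d} g(\w,\overline{\p}_T) \le m(\overline{\w}_T) + \bar R^\w + \bar R^\p$ (the paper calls this ``a slight modification'' of Theorem~\ref{thm:margin:gap} while you spell out the bilinearity argument), then use $m(\overline{\w}_T)<0$ and $M(\overline{\p}_T)=\|\overline{\p}_T^\top A\|_2$. The only small omission is that you do not explicitly note $\w^*\in\B_d$ (needed since here $\W=\B_d$), but this is immediate from Assumption~\ref{ass:ell2}.
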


\begin{proof}
Note that in this setting, we have that $m(\w)=\gamma(\w)$ for each $\w\in\B_d$, and $\frac{1}{T}\sum_{t=1}^T w_t \in \B_d$. Then, under Assumption~\ref{ass:ell2}, we have that $\w^*\in\B_d$, so Theorem~\ref{thm:margin:gap} and Proposition~\ref{prop:yu-reg} yield
\begin{align*}
    \gamma\left( \frac{1}{T}\sum_{t=1}^T \w_t \right) = m\left( \frac{1}{T}\sum_{t=1}^T \w_t \right) \geq m(\w^*) - \frac{R^\w+R^\p}{T} = \gamma(\w^*) - \frac{R^\w+R^\p}{T} \geq \gamma - \frac{3\sqrt{\log n}}{2T}
\end{align*}
That is, the margin is non-negative provided that $T \geq \frac{3\sqrt{\log n}}{2\gamma}$.

\hspace{13pt} To show the second conclusion, let us begin by noting that with a slight modification to the proof of Theorem~\ref{thm:margin:gap}, we can obtain
\begin{align*}
    \max_{\w\in\B_d} g\left( \w, \frac{1}{T} \sum_{t=1}^T \p_t \right) \leq \min_{\p\in\Delta^n} g\left( \frac{1}{T}\sum_{t=1}^T \w_t, \p \right) + \bar R^\w + \bar R^\p = m\left( \frac{1}{T}\sum_{t=1}^T \w_t \right) + \bar R^\w + \bar R^\p
\end{align*}
Our assumption now is that $\max_{\w\in\B_d} m(\w) = \max_{\w\in\B_d} \gamma(\w) < 0$, so that
\begin{align*}
    \left\| \left( \frac{1}{T} \sum_{t=1}^T \p_t \right)^\top A \right\|_2 &= \max_{\w\in\B_d} \left( \frac{1}{T} \sum_{t=1}^T \p_t \right)^\top A\w && \text{Cauchy-Schwarz} \\
    &= \max_{\w\in\B_d} g\left( \w,\frac{1}{T} \sum_{t=1}^T \p_t \right) \\
    &< \bar R^\w + \bar R^\p \\
    &\leq \frac{3\sqrt{\log n}}{2T}
\end{align*}
Hence, $\frac{1}{T} \sum_{t=1}^T \p_t$ is an $\epsilon$-certificate of infeasibility provided that $T \geq \frac{3\sqrt{\log n}}{2\epsilon}$.
\end{proof}

\section{Proof of Section 5}

\subsection{Proof of Theorem \ref{thm:nag}}

First, based on the property of the objective function from \cite{ji2021fast}, we have 
 $$-A^{\top}\q_t = \frac{\nabla R(\u_t)}{R(\u_t)},$$
where 
\begin{equation}
\label{eqn:nag:q}
   q_{t,i}=\frac{\exp\left(-y^{(i)}\u_t^{\top}\x^{(i)}\right)}{\sum_{j=1}^n\exp\left(-y^{(j)}\u_t^{\top}\x^{(j)}\right)}. 
\end{equation}
Then, using the fact that $\v_0=\mathbf{0}$, we obtain
$$\v_t=\v_{t-1}+tA^{\top}\q_t=\sum_{t=1}^T\alpha_t A^{\top}\q_t=\left(\sum_{i=1}^t\alpha_i\right) \cdot\r_t,$$ where we define $\r_0 = 0$ and $\r_t=\frac{1}{\sum_{i=1}^t\alpha_i}\sum_{i=1}^t\alpha_i A^{\top}\q_i$ for $t\geq 1$. Then, we can rewrite
\begin{equation*}
\begin{split}
\s_t =\s_{t-1} + \frac{1}{2(t+1)}\v_t   
= \sum_{i=1}^t \frac{1}{2(i+1)}\v_i
=  \sum_{i=1}^t \frac{1}{2i(i+1)}i\v_i
= {} & \frac{1}{4}\sum_{i=1}^t \frac{1}{\sum_{j=1}^i\alpha_j}i\v_i= \frac{1}{4} \sum_{i=1}^t\alpha_i\r_i.
\end{split}
\end{equation*}
where we used the initialization $\s_0=\mathbf{0}$. Moreover, 
\begin{equation}
\label{eqn:nag:u}
    \u_t=\s_{t-1}+\frac{1}{2(t-1)}\v_{t-1}=\frac{1}{4}\sum_{i=1}^{t-1}\alpha_i \r_i+\frac{1}{2(t-1)}\v_{t-1}=\frac{1}{4}\sum_{i=1}^{t-1}\alpha_i \r_i+\frac{1}{4} \alpha_t \r_{t-1}.
\end{equation}
Combining \eqref{eqn:nag:q} and \eqref{eqn:nag:u}, we have 
\begin{align*}
    \q_t &= \argmin\limits_{\q\in\Delta^n} \q^\top A\u_t + D_E\left( \q,\frac{\mathbf{1}}{n} \right) \\
    &= \argmin\limits_{\q\in\Delta^n} \frac{1}{4} \left( \sum_{i=1}^{t-1} \alpha_i\q^{\top}A\r_i + \alpha_t \q^{\top}A\r_{t-1} \right) + D_E\left(\q,\frac{\mathbf{1}}{n}\right).
\end{align*}
Additionally,
\begin{equation}
    \begin{split}
\r_t = \frac{1}{\sum_{i=1}^t\alpha_i}\sum_{i=1}^t\alpha_iA^{\top}\q_i= {} &
\argmin\limits_{\r\in\R^d} \sum_{i=1}^t \alpha_i\q_i^{\top}A\r+\frac{\sum_{i=1}^t\alpha_i}{2}\|\r\|_2^2 \\
={} & \argmin\limits_{\r\in\R^d} \sum_{i=1}^t \alpha_i\left(\q_i^{\top}A\r+\frac{1}{2}\|\r\|_2^2 \right).        
    \end{split}
\end{equation}
Note that since $\r_0=\w_0$ and we can drop the $\frac{1}{2}\|\cdot\|_2^2$ terms from the minimization in the $\p$-player's update (since it doesn't depend on $\p$), we get that $\q_t = \p_t$ and $\r_t = \w_t$. Finally, we can conclude that
\begin{align*}
    \s_T = \frac{1}{4} \sum_{t=1}^T\alpha_t\r_t = \frac{ \sum_{t=1}^T\alpha_t }{4} \frac{1}{\sum_{t=1}^T\alpha_t} \sum_{t=1}^T\alpha_t\w_t = \frac{ \sum_{t=1}^T\alpha_t }{4} \overline{\w}_T = \widetilde\w_T
\end{align*}

Next, we turn to the regret. For the $\p$-player, it uses OFTRL, so, based on Lemma \ref{lem:OFTRL}, we obtain 
\begin{equation}
    \begin{split}
    \label{eqn:regret:nag:1}
{} & \sum_{t=1}^T \alpha_t\ell_t(\p_t) -  \sum_{t=1}^T \alpha_t\ell_t(\p)\\
    \leq {} &{4\left(E(\p)-E\left(\frac{\mathbf{1}}{n}\right)\right)} + \sum_{t=1}^T\frac{t^2\| A\w_t- A\w_{t-1}\|_{\infty}^2}{8}\\
    \leq {} & {4\log n} + \sum_{t=1}^T \frac{t^2\left[\max_{i\in[n] }\left|y^{(i)}(\w_t-\w_{t-1})^{\top}\x^{(i)}\right|\right]\left[\max_{i\in[n] }\left|y^{(i)}(\w_t-\w_{t-1})^{\top}\x^{(i)}\right|\right]}{8} \\
    \leq {} & 4{\log n} +\sum_{t=1}^{T} \frac{t^2\|\w_t-\w_{t-1}\|_2^2}{8}.
    \end{split}
\end{equation}
On the other hand, the $\w$-player applies FTRL$^+$, and note that $h_t(\w)$ is 1-strongly convex. Thus, the regret is bounded by 
\begin{equation}
    \begin{split}
    \label{eqn:regret:nag:2}
 \sum_{t=1}^T \alpha_th_t(\w_t) -  \sum_{t=1}^T \alpha_th_t(\w)\leq {} &   -\sum_{t=1}^T\frac{t(t-1)}{4}\|\w_{t}-\w_{t-1}\|_2^2.
    \end{split}
\end{equation}

Finally, we focus on the margin. Combining \eqref{eqn:regret:nag:1}, and \eqref{eqn:regret:nag:2}, we have 
$$ \overline{R}^{\w}+\overline{R}^{\p}=\frac{2}{T(T+1)}\left( 4\log n + \sum_{t=1}^T \left(\frac{t^2}{8} -\frac{t(t-1)}{4} \right)\|\w_t-\w_{t-1}\|_2^2\right)\leq \frac{8\log n+2}{T(T+1)},$$
where we used the fact that $\|\w_1\|_2^2=\|A^{\top}\q_1\|_2^2\leq 1$, and $\frac{t^2}{8}\leq \frac{t(t-1)}{4}$ for $t\geq 2$.
Based on Theorem \ref{thm:margin:gap}, we have
\begin{equation}
\begin{split}
\min\limits_{\p\in\Delta^n}\p^{\top}A\overline{\w}_T -\frac{1}{2}\|\overline{\w}_T\|_2^2 = m(\overline{\w}_T)  \geq {} & -\frac{8\log n+2}{T(T+1)} + \max_{\w\in\R^d}m(\w).
\end{split}    
\end{equation}
Next, note that $\overline{\w}=\frac{8}{T(T+1)}\widetilde{\w}_T$. So
\begin{equation}
    \begin{split}
{} &\frac{8}{T(T+1)}\min\limits_{\p\in\Delta^n}\p^{\top}A\widetilde{\w}_T-\frac{64}{2(T(T+1))^2}\|\widetilde{\w}_T\|_2^2\\
\geq {} &-\frac{8\log n+2}{T(T+1)} + \max_{\w\in\R^d}m(\w)\\
\geq {} & -\frac{8\log n+2}{T(T+1)} + m\left(\frac{8\|\widetilde{\w}_T\|_2}{T(T+1)}\w^*\right)\\
= {} &  -\frac{8\log n+2}{T(T+1)} + \frac{8\|\widetilde{\w}_T\|_2}{T(T+1)}\min\limits_{\p\in\Delta^n}\p^{\top}A\w^* -\frac{64\|\widetilde{\w}_T\|_2^2}{2(T(T+1))^2}\|\w^*\|_2^2\\
={} & -\frac{8\log n+2}{T(T+1)} + \frac{8\|\widetilde{\w}_T\|_2\gamma}{T(T+1)} -\frac{64\|\widetilde{\w}_T\|_2^2}{2(T(T+1))^2},\\
    \end{split}
\end{equation}
which implies that 
\begin{equation}
    \begin{split}
        \frac{\min\limits_{\p\in\Delta^n}\p^{\top}A\widetilde{\w}_T}{\|\widetilde{\w}_T\|_2}\geq -\frac{8\log n+2}{8\|\widetilde{\w}_T\|_2}+\gamma.
    \end{split}
\end{equation}
Finally, note that we always have $\frac{\widetilde{\w}_T^{\top}\w^*}{\|\widetilde{\w}_T\|_2}\leq 1$, so 
\begin{equation}
    \begin{split}
 \|\widetilde{\w}_T\|_2\geq {} & \frac{\sum_{t=1}^T\alpha_t}{4}\overline{\w}_T^{\top}\w^* \\
= {} &\frac{\sum_{t=1}^T\alpha_t}{4} \left( \frac{1}{\sum_{t=1}^T\alpha_t} \sum_{t=1}^T\alpha_t\p_t \right)^{\top}A\w^* \geq \frac{T(T+1)\gamma}{8}.
    \end{split}
\end{equation}
Thus, we have 
$$ \frac{\min\limits_{\p\in\Delta^n}\p^{\top}A \s_T}{\|\s_T\|_2} = \frac{\min\limits_{\p\in\Delta^n}\p^{\top}A\widetilde{\w}_T}{\|\widetilde{\w}_T\|_2}\geq \gamma-\frac{8\log n +2}{T(T+1)\gamma}. $$
\subsection{Proof of Theorem \ref{thm:acc:p-norm:perc}}
Recall that $\alpha_t=1$, for all $t\in[T]$, and that the regularizer $\frac{1}{2(q-1)}\|\cdot\|_q^2$ is 1-strongly convex w.r.t. $\|\cdot\|_q$. Then, for the $\w$-player, based on Lemma \ref{lem:OFTRL}, we have $\forall \w\in\R^d, \|\w\|_q\leq 1$, 
\begin{equation}
    \begin{split}
\sum_{t=1}^T h_t(\w_t) - h_t(\w) \leq {} &  \frac{1}{2(q-1)\eta^{\w}}  + \sum_{t=1}^T \frac{\|A^{\top}(\p_{t}-\p_{t-1}) \|^2_p}{2/\eta^{\w}}   \\
= {} & \frac{1}{2(q-1)\eta^{\w}}   + \sum_{t=1}^T \frac{\| \sum_{i=1}^ny^{(i)}[\p_t-\p_{t-1}]_i\x^{(i)} \|^2_p}{2/\eta^{\w}}  \\
\leq  {} &\frac{1}{2(q-1)\eta^{\w}}   + \sum_{t=1}^T \frac{\left( \sum_{i=1}^n|\p_t-\p_{t-1}|_i\|\x^{(i)}\|_p \right)^2}{2/\eta^{\w}}\\
\leq {} & \frac{1}{2(q-1)\eta^{\w}} +\sum_{t=1}^T\frac{\|\p_t-\p_{t-1}\|_1^2}{2/\eta^{\w}}
    \end{split}
\end{equation}
On the other hand, for the $\p$-player, applying Lemma \ref{lem:FTRL+}, we get 
\begin{equation}
    \begin{split}
\sum_{t=1}^T \ell_t(\p_t)-\ell_{t}(\p)\leq  \frac{\log n }{\eta^{\p}} -\sum_{t=1}^T\frac{1}{2\eta^{\p}} \|\p_t-\p_{t-1}\|_1^2.      
    \end{split}
\end{equation}
Thus, with $\eta^{\p}=1/\eta^{\w}$, $\eta^{\w}=\sqrt{\frac{1}{2(q-1)\log n}}$, we get 
\begin{equation}
m(\overline{\w}_T)\geq \gamma - \frac{\sqrt{\frac{{2\log n}}{(q-1)}}}{T} =    \gamma - \frac{\sqrt{2(p-1)\log n }}{T},
\end{equation}
where the equality is based on the relationship between $p$ and $q$. Hence, we obtain a positive margin when $T>\frac{\sqrt{2(p-1)\log n }}{\gamma}$. On the other hand, let $\alpha\in(0,1)$, and if we would like to obtain 
$$\gamma-\frac{\sqrt{2(p-1)\log n}}{T}\geq (1-\alpha)\gamma,$$
we have $T\geq \frac{\sqrt{2(p-1)\log n}}{\alpha \gamma}.$


\end{document}